\newcommand{\relu}[1]{\paren{#1}_+}
\newcommand{\mcD}{\mathcal D}
\newcommand{\mcE}{\mathcal E}
\newcommand{\mcG}{\mathcal G}
\newcommand{\mcH}{\mathcal H}
\newcommand{\mcX}{\mathcal X}
\newcommand{\mcV}{\mathcal V}
\newcommand*\bfell{\ensuremath{\boldsymbol\ell}}
\newcommand{\G}{\mathcal{G}}
\newcommand{\error}{\operatorname{error}}
\newcommand{\proj}{\mathrm{proj}}
\newcommand{\BuildTD}{\textsc{TopDownDT}}
\DeclarePairedDelimiter\ceil{\lceil}{\rceil}
\DeclarePairedDelimiter\floor{\lfloor}{\rfloor}
\newcommand{\Tribes}{\textsc{Tribes}}
\newcommand{\paren}[1]{\left({#1}\right)}
\DeclareMathOperator*{\argmax}{arg\,max}
\theoremstyle{plain}
\newtheorem{theorem}{Theorem}[section]
\newtheorem{proposition}[theorem]{Proposition}
\newtheorem{lemma}[theorem]{Lemma}
\newtheorem{fact}[theorem]{Fact}
\newtheorem{corollary}[theorem]{Corollary}
\theoremstyle{definition}
\newtheorem{definition}[theorem]{Definition}
\theoremstyle{remark}
\newtheorem{remark}[theorem]{Remark}
\icmltitlerunning{Popular decision tree algorithms are provably noise tolerant }
\begin{document}

\twocolumn[
\icmltitle{Popular decision tree algorithms are provably noise tolerant }

% It is OKAY to include author information, even for blind
% submissions: the style file will automatically remove it for you
% unless you've provided the [accepted] option to the icml2022
% package.

% List of affiliations: The first argument should be a (short)
% identifier you will use later to specify author affiliations
% Academic affiliations should list Department, University, City, Region, Country
% Industry affiliations should list Company, City, Region, Country

% You can specify symbols, otherwise they are numbered in order.
% Ideally, you should not use this facility. Affiliations will be numbered
% in order of appearance and this is the preferred way.
\icmlsetsymbol{equal}{*}

\begin{icmlauthorlist}
\icmlauthor{Guy Blanc}{equal,stan}
\icmlauthor{Jane Lange}{equal,mit}
\icmlauthor{Ali Malik}{equal,stan}
\icmlauthor{Li-Yang Tan}{equal,stan}
\end{icmlauthorlist}

\icmlaffiliation{stan}{Department of Computer Science, Stanford University}
\icmlaffiliation{mit}{Department of Computer Science, Massachusetts Institute of Technology}

\icmlcorrespondingauthor{Guy Blanc}{gblanc@stanford.edu}
\icmlcorrespondingauthor{Jane Lange}{jlange@mit.edu}
\icmlcorrespondingauthor{Ali Malik}{malikali@cs.stanford.edu}
\icmlcorrespondingauthor{Li-Yang Tan}{liyang@cs.stanford.edu}

% You may provide any keywords that you
% find helpful for describing your paper; these are used to populate
% the "keywords" metadata in the PDF but will not be shown in the document
\icmlkeywords{Machine Learning, Decision trees, Learning theory, Adversarial, Noise, Robustness, ICML}

\vskip 0.3in
]

% this must go after the closing bracket ] following \twocolumn[ ...

% This command actually creates the footnote in the first column
% listing the affiliations and the copyright notice.
% The command takes one argument, which is text to display at the start of the footnote.
% The \icmlEqualContribution command is standard text for equal contribution.
% Remove it (just {}) if you do not need this facility.

%\printAffiliationsAndNotice{}  % leave blank if no need to mention equal contribution
\printAffiliationsAndNotice{\icmlEqualContribution} % otherwise use the standard text.

\begin{abstract} 

Using the framework of boosting, we prove that all impurity-based decision tree learning algorithms, including the classic ID3, C4.5, and CART, are highly noise tolerant. Our guarantees hold under the strongest noise model of nasty noise, and we provide near-matching upper and lower bounds on the allowable noise rate. We further show that these algorithms, which are simple and  have long been central to everyday machine learning, enjoy provable guarantees in the noisy setting that are unmatched by existing algorithms in the theoretical literature on decision tree learning.   Taken together, our results add to an ongoing line of research that seeks to place the empirical success of these practical decision tree algorithms on firm theoretical footing. 

% Using the framework of boosting, we prove that all impurity-based decision tree learning algorithms, including the classic ID3, C4.5, and CART, are highly noise tolerant. Our guarantees hold under the strongest noise model of nasty noise, and we provide near-matching upper and lower bounds on the allowable noise rate. 

% Formally, for any impurity based decision tree algorithm that is trained on a dataset with an $\eta$-fraction of points adversarially corrupted, we show that as long as the internal nodes of the tree are $\gamma$-advantage hypotheses, growing the tree to size $\exp(O(1/\gamma^2\eps^2))$ suffices to guarantee error $\le \eps$.  This holds for all noise rates $\eta\le O(\eps\gamma)$, and we present a near-matching lower bound ruling out such guarantees when $\eta \ge \tilde{\Omega}(\eps\gamma)$.  

% We further show that these algorithms, which are simple and  have long been central to everyday machine learning, enjoy provable guarantees in the noisy setting that are unmatched by existing algorithms in the theoretical literature on decision tree learning.   Taken together, our results add to an ongoing line of research that seeks to place the empirical success of these practical decision tree algorithms on firm theoretical footing. 
\end{abstract} 

%\thispagestyle{empty}
%\newpage 
%\setcounter{page}{1}

\section{Introduction} 
Decision trees have been central to machine learning since its early days.  They give a simple way to represent a dataset in a hierarchical and logical manner, and they are perhaps the most canonical example of an intepretable  model.   They are also quick to evaluate, with evaluation time scaling with the depth of the tree, a quantity that is typically exponentially smaller than the overall number of nodes.   Classic decision tree learning algorithms such as ID3~\cite{Qui86}, C4.5~\cite{Qui93}, and CART~\cite{BFSO84}, as well as tree-based ensemble methods such as random forests~\cite{Bre01} and XGBoost~\cite{CG16}, are  therefore standard techniques in the modern machine learning toolkit.

{\sl Impurity-based} algorithms are a broad class that captures ID3, C4.5, CART, and indeed essentially all decision tree algorithms used in practice.  These algorithms build a binary decision tree  for a labeled dataset $S$ in a greedy top-down fashion.  Each algorithm $\mathcal{A}_\G$ is defined by an impurity function $\G : [0,1]\to [0,1]$ and a class $\mathcal{H}$ of allowable splitting functions.  The root of the tree built by $\mathcal{A}_{\G}$ corresponds to the split of $S$ into $S_0$ and $S_1$ by a function $h \in \mathcal{H}$ that maximizes the {\sl purity gain with respect to $\G$}.  The left and right subtrees are built by recursing on $S_0$ and $S_1$ respectively.  We elaborate on this framework in the body of the paper, mentioning for now that  the standard algorithms ID3, CART, and C4.5 can all be cast within this framework: ID3 and C4.5 use the binary entropy function $\G(p) = \textnormal{H}_2(p)$ and the associated purity gain is commonly called information gain, whereas CART uses the Gini impurity function $\G(p) = 4p(1-p)$. 

\paragraph{Prior work on provable performance guarantees.} Motivated by the empirical success of impurity-based decision tree algorithms, a fruitful and ongoing line of  work has focused on establishing provable guarantees on their performance in a variety of models and settings~\cite{KM96,Kea96,DKM96,FP04,Lee09,BLT-ITCS,BLT-ICML,BDM19,BDM20,BLQT21}.

However, these existing results either only hold in the noiseless setting or require strong and stylized assumptions that limit their practical relevance.  For example, the recent work of \citet{BLT-ICML} provides guarantees under the assumptions that examples are distributed according to a product distribution, that noise only affects the labels and not the features (i.e.~agnostic noise~\citep{Hau92,KSS94}), and that the corrupted labels are monotone in the features.  %All three assumptions are arguably unrealistic: [...]

%\red{Ali: A thought I had is that we should maybe position ourselves as more descriptive as opposed to claiming we have come up with a noise-tolerant DT algorithm. We don't make this latter claim but we could be more clear about how we are just analysing what is used in practice and emphasising that these algorithms are successful because they indeed are noise tolerant.}

\subsection{Our contributions}  

Using the framework of boosting, we prove that all impurity-based decision tree algorithms are highly noise tolerant in a fully general setting: our results apply to arbitrary distributions over features and labels, and we consider the strongest noise model, allowing for adversarial corruptions of both features and labels (i.e.~the nasty noise model of \citet{BEK02}, also known as  strong contamination).  Equivalently, we give the first formal guarantees on the robustness of impurity-based decision tree algorithms to distributional shifts; we elaborate on this perspective in the body of the paper.  We further provide near-matching upper and lower bounds on the allowable noise rate, showing that our analysis is essentially optimal.

Lastly, specializing this result to the setting of product distributions over features, we show that these algorithms enjoy provable guarantees in the noisy setting that are unmatched by existing algorithms in the sizeable theoretical literature on decision tree learning.  Most of these algorithms were developed after the invention of ID3, C4.5, and CART in the 1980s, and are more complicated and much less used in practice.

%\gray{obtain a corollary showing that these classic and practical algorithms enjoy provable guarantees that are not known to be achievable by any of the numerous existing algorithms in the theoretical literature on decision tree learning (most of which were developed after the invention of ID3, C4.5, and CART in the 1980s).} 

%\anote{should we state the result of BLT here?} we obtain a corollary that improves the main result of~\cite{BLT-ICML} in two ways: our result holds in more challenging nasty noise model, and under the more standard and realistic assumption that the {\sl un}corrupted target function is monotone. 

In more detail, our first result is the following:

\begin{theorem}[Impurity-based algorithms are noise-tolerant boosting algorithms; see~\Cref{thm:main_formal} for formal version]
\label{thm:main}
\label{thm:main-intro} 
For all impurity functions $\G$ and distributions $\mathcal{D}$ over features and labels, w.h.p.~over the draw of a sample $\bS$ from  $\mathcal{D}$, if $\mathcal{A}_{\mathcal{G}}$ is trained on an $\eta$-nasty-noise corruption $\wh{\bS}$ of $\bS$ where $\eta \le O(\eps\gamma)$, as long as the internal nodes of the tree are  $\gamma$-advantage hypotheses, growing the tree to size $\exp(O(1/\gamma^2\eps^2))$ achieves error $\le \eps$.  %For a precise statement, see~\Cref{thm:main}.  We show that all the parameters of our result are either optimal or near-optimal: the bound on the noise rate, the size of the tree, and its error.
\end{theorem}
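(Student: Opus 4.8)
The plan is to run the classical ``impurity $=$ boosting potential'' argument (à la Kearns--Mansour) \emph{as if} the corrupted sample $\wh{\bS}$ were the true distribution, and then to charge everything the adversary can do against its budget $\eta$. For a (partial) decision tree $T$ and a distribution $\mathcal{P}$ over labeled examples write $\mathrm{imp}_{\mathcal{P}}(T) = \sum_{\text{leaves }\ell}\Pr_{\mathcal{P}}[\ell]\cdot\mathcal{G}(\mathbb{E}_{\mathcal{P}}[y\mid\ell])$ for the $\mathcal{G}$-impurity. Two elementary facts drive everything. (i) Since $\mathcal{G}$ is concave and vanishes at $0,1$, one has $\mathcal{G}(p)\gtrsim\min(p,1-p)$, so the error of $T$ with leaves labeled by the $\mathcal{P}$-majority is $O(\mathrm{imp}_{\mathcal{P}}(T))$. (ii) Concavity makes $\mathrm{imp}_{\mathcal{P}}$ non-increasing under splits, and a $\gamma$-advantage split at a leaf $\ell$ has purity gain at least $c\gamma^2\cdot\Pr_{\mathcal{P}}[\ell]\cdot q_{\mathcal{G}}(\mathbb{E}_{\mathcal{P}}[y\mid\ell])$, where $q_{\mathcal{G}}$ is a bias-dependent factor determined by $\mathcal{G}$ ($q\equiv 1$ for Gini); the weakest such bound over admissible $\mathcal{G}$ is exactly what turns the size bound into $\exp(O(1/\gamma^2\eps^2))$ rather than $\exp(O(1/\gamma^2\eps))$. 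I will not track these $\mathcal{G}$-dependent constants.

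\textbf{Step 1 (generalization).} Fix $s=\exp(O(1/\gamma^2\eps^2))$. The class of decision trees with $\le s$ internal nodes and splits in $\mathcal{H}$ has VC dimension $O(s\cdot\mathrm{vc}(\mathcal{H}))$, so for $|\bS|=m$ polynomially large in $s,1/\eps,\log(1/\delta)$, with probability $1-\delta$ over $\bS\sim\mathcal{D}^m$ the empirical distribution of $\bS$ agrees with $\mathcal{D}$ up to additive $\eps/10$ on $\mathrm{imp}$, on majority-vote error, and on every single split's purity gain, simultaneously for all such trees. I condition on this event; afterwards there is no randomness left and the remaining work is entirely against the worst-case corruption $\wh{\bS}$.

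\textbf{Step 2 (noise localization --- the crux).} Run $\mathcal{A}_{\mathcal{G}}$ on $\wh{\bS}$ to obtain the size-$s$ tree $T$. For a node $v$ of $T$ let $\mathrm{nr}(v)$ be the fraction of the $\wh{\bS}$-points reaching $v$ that are adversarially inserted, altered, or removed, and call a leaf $\ell$ \emph{clean} if $\mathrm{nr}(v)\le\gamma/C$ for every $v$ on the root-to-$\ell$ path. The maximal ``bad'' nodes (those with $\mathrm{nr}(v)>\gamma/C$) on distinct paths have disjoint $\wh{\bS}$-subsamples, and the adversary's total budget is $O(\eta m)$, so $\sum_{\ell\text{ not clean}}\Pr_{\wh{\bS}}[\ell]\le O(\eta/\gamma)$; combining with Step 1 and $\eta\le O(\eps\gamma)$, the non-clean leaves carry $\mathcal{D}$-weight $\le\eps/10$. \textbf{This is the one place the hypothesis $\eta\le O(\eps\gamma)$ is used, and the step I expect to be hardest}: the adversary can make the algorithm behave arbitrarily on a region of total weight $\Theta(\eta/\gamma)$, and we can afford this only when $\eta/\gamma\lesssim\eps$; a weaker noise model or a lower-bound-matching accounting is needed to say anything sharper. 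On the other hand, for a \emph{clean} leaf $\ell$ the local noise rate is $\le\gamma/C$ all along its path, so $\mathcal{D}|_\ell$ and $\wh{\bS}|_\ell$ are within total variation $O(\gamma/C)+\eps/10$; hence (a) the split the algorithm placed at $\ell$, which has $\gamma$-advantage w.r.t.\ $\mathcal{D}|_\ell$ by hypothesis, still has $\ge\gamma/2$-advantage w.r.t.\ $\wh{\bS}|_\ell$, and (b) its $\wh{\bS}$-majority label and bias at $\ell$ are within $O(\gamma/C)$ of the $\mathcal{D}$ ones.

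\textbf{Step 3 (boost on the clean part, then wrap up).} Let $\hat\Phi_t=\mathrm{imp}_{\wh{\bS}}(T_t)$ after $t$ splits; it is non-increasing, and the step-$t$ purity gain on $\wh{\bS}$ is at least the best over leaves, hence at least the average over the clean leaves, hence at least $\tfrac{c\gamma^2}{t}\sum_{\ell\text{ clean}}\Pr_{\wh{\bS}}[\ell]\,q_{\mathcal{G}}(\mathbb{E}_{\wh{\bS}}[y\mid\ell])$ by Step 2(a). Since the non-clean leaves contribute $\le\eps/10$ to $\hat\Phi_t$, whenever $\hat\Phi_t\ge\eps/2$ the clean part of $\hat\Phi_t$ is $\ge\hat\Phi_t/2$, and the standard Kearns--Mansour recursion (Jensen/Cauchy--Schwarz to pass from $\sum q_{\mathcal{G}}(\cdot)$ to a function of $\hat\Phi_t$, then summing $1/t$) forces $\hat\Phi_s\le\eps/2$ once $\ln s\gtrsim 1/(\gamma^2\eps^2)$. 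Finally, fact (i) applied to $\wh{\bS}$ gives that majority-labeled $T$ has $\wh{\bS}$-error $\le\hat\Phi_s\le\eps/2$; its $\mathcal{D}$-error is bounded by its error on the clean leaves --- within $O(\gamma/C)+\eps/10$ of its $\wh{\bS}$-error there by Step 2(b), hence $\le \eps/2 + \eps/10 + O(\gamma/C)$ --- plus the $\mathcal{D}$-weight of the non-clean leaves ($\le\eps/10$), which is $\le\eps$ for suitable constants, as claimed.
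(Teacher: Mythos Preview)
Your plan—threshold the per-leaf noise, discard the ``non-clean'' leaves of weight $O(\eta/\gamma)$, and run the noiseless Kearns--Mansour recursion on the clean part—is not what the paper does, and as written it does not reach the claimed noise tolerance $\eta\le O(\eps\gamma)$. The gap is Step~2(a). Under the paper's weak-learning assumption a $\gamma$-advantage hypothesis satisfies $|\Cov_{\mathcal D_\ell}[h,\by]|\ge\gamma\Var_{\mathcal D_\ell}[\by]$, so after transferring to a clean leaf with local noise $\eta_\ell\le\gamma/C$ you only get $|\Cov_{\wh{\mathcal D}_\ell}|\ge\gamma\Var_{\wh{\mathcal D}_\ell}[\by]-3\gamma/C$; this is a $(\gamma/2)$-advantage \emph{only} when $\Var_{\wh{\mathcal D}_\ell}[\by]\gtrsim 1/C$, and it fails on nearly-pure clean leaves. (Relatedly, your ``$q_{\mathcal G}\equiv 1$ for Gini'' is not correct for this WLA: the local purity drop is $\ge 16\Cov^2\ge\gamma^2\,\mathcal G(\mu_\ell)^2$, so the bias factor is $\mathcal G(\mu_\ell)^2$, not $1$.) If you try to absorb the additive $3\gamma/C$ by taking $C\gtrsim 1/\eps$, the non-clean mass becomes $O(\eta/(\gamma\eps))$ and keeping it below $\eps$ forces $\eta\le O(\gamma\eps^2)$; optimizing the threshold does no better—any Markov-style cut loses a factor of~$\eps$.

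The paper recovers that factor by \emph{not} thresholding. It controls the local corruptions in expectation, proving $\E_{\bell\sim(T,\wh{\mathcal D})}[\eta_{\bell}]\le 2\eta$, and then lower-bounds the expected purity drop over a random leaf via Jensen applied to $|\Cov|$ rather than to the advantage:
\[
\E_\bell[\Delta_{\bell}(h_\bell)]\;\ge\;16\bigl(\E_\bell[|\Cov_\bell|]\bigr)^2\;\ge\;16\bigl(\gamma\,\E_\bell[\Var_\bell]-3\,\E_\bell[\eta_\bell]\bigr)_+^2\;\ge\;16\bigl(\tfrac{\gamma}{2}\,\error_{\wh{\mathcal D}}[T]-6\eta\bigr)_+^2,
\]
which is $\ge 4\gamma^2\eps^2$ whenever $\error_{\wh{\mathcal D}}[T]\ge\eps+12\eta/\gamma$. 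The noise budget is charged once, \emph{on average}, rather than uniformly at every leaf; that is precisely why $\eta\le O(\eps\gamma)$ suffices. (The passage from an $\eta$-nasty-noise sample $\wh{\bS}$ to a distribution $\wh{\mathcal D}$ with $\dtv(\mathcal D,\wh{\mathcal D})\le\eta$ is handled by a black-box reduction, not the VC argument of your Step~1, but that is a secondary difference.)
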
 

In one of the first papers to study impurity-based decision tree algorithms from a theoretical perspective, \citet{KM96} showed that they can be viewed as boosting algorithms.  \Cref{thm:main-intro} generalizes these results of \citet{KM96} to the setting of adversarial noise and shows that these algorithms are in fact highly noise-tolerant boosting algorithms.  

\paragraph{Optimality of~\Cref{thm:main-intro}.} Next, we show that the quantitative parameters of~\Cref{thm:main-intro} are essentially optimal.  First, even in the noiseless setting it can be seen that there are target functions for which the tree needs to be grown to size $\exp(\Omega(1/\gamma^2))$ to achieve high accuracy; this was already observed in~\citet{KM96,freund}. Second, we prove that the guarantees of~\Cref{thm:main-intro} cannot hold for noise rates $\eta \ge \tilde{\Omega}(\eps\gamma)$, even under strong feature and distributional assumptions: 

\begin{theorem}[Near-matching lower bound on noise rate; see~\Cref{thm:lb} for formal version]
\label{thm:lb-intro} 
Let the feature space be $\mathcal{X} = \bits^d$ and $\eta \ge \tilde{\Omega}(\eps\gamma)$.  There is a distribution $\mathcal{D}$ whose marginal over $\mathcal{X}$ is uniform, such that for all impurity functions $\mathcal{G}$, w.h.p.~over the draw of a sample $\bS$ from $\mathcal{D}$, there is an $\eta$-nasty-noise corruption $\wh{\bS}$ of $\bS$ such that if $\mathcal{A}_{\G}$ is trained on $\wh{\bS}$, even if all the internal nodes of the tree are  $\gamma$-advantage hypotheses, the tree has to be grown to size $2^{\Omega(d)}$ in order to achieve error $\le \eps$. 
\end{theorem}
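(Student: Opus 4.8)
The plan is to build one distribution $\mathcal{D}$ on $\bits^d$ — uniform features, labels weak‑learnable off a measure‑$\Theta(\eps)$ subcube $R$ on which the labels form a high‑dimensional parity — and, for each impurity function $\G$, to hand‑craft an $\eta$‑corruption $\wh{\bS}$ that funnels $\mathcal{A}_\G$ into $R$ and stalls it there. Concretely, fix $R=\{x : x_1=\cdots=x_k=1\}$ with $k=\Theta(\log(1/\eps))$, so that $\mu(R)=\Theta(\eps)$ with a large enough hidden constant; on $R$ the label is $\mathrm{Par}(x_{k+1},\dots,x_d)$, the parity of the $d-k=\Omega(d)$ free coordinates, and off $R$ the label is $f(x)$ for an easy function $f$ whose shallow optimal tree routes a greedy impurity algorithm into the node $R$ through splits it would actually take. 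On clean data, $R$ is invisible to every impurity function — a parity is balanced under every restriction, so every split inside $R$ has zero purity gain — so the $\gamma$‑advantage precondition of \Cref{thm:main} fails inside $R$; this is precisely the foothold the noise exploits.

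Given $\G$ and a sample $\bS$, I would corrupt only points of $\bS$ lying in $R$ — a $\Theta(\eps)$ fraction of $\bS$ — by relabelling a sub‑collection $B$ of relative density $\Theta(\gamma)$ inside $R$, hence at most $\tilde{O}(\eps\gamma)\cdot|\bS|\le\eta\,|\bS|$ points in all. The set $B$ would be designed so that, at each node on the path $\mathcal{A}_\G$ takes into and through $R$, the coordinate $\mathcal{A}_\G$ is about to query correlates with the corrupted labels at advantage $\approx\gamma$ and hence has purity gain $\gtrsim g_\G(\gamma)$ — strictly more than any parity‑revealing split (still $\approx 0$, since the parity stays balanced off $B$) and than every competitor. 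Because this need only hold at the handful of nodes visited before the $B$‑supplied advantage runs out, density $\Theta(\gamma)$ inside $R$ suffices; the polylog slack in $\eta\ge\tilde\Omega(\eps\gamma)$ absorbs both the margin needed to make the $B$‑split strictly win (for whichever $\G$ we face) and the concentration losses from conditioning w.h.p.\ over $\bS$.

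The conclusions then fall out. Off $R$, $\mathcal{A}_\G$ learns $f$ essentially perfectly (there is no corruption there). Inside $R$ it follows the $B$‑induced splits while they carry $\gamma$‑advantage, and then either stops — in which case every internal node of its tree is a $\gamma$‑advantage hypothesis, yet its leaves inside $R$ are positive‑dimensional subcubes of a parity, each perfectly balanced, so the tree's true error is $\ge\tfrac14\mu(R)>\eps$ — or keeps splitting with zero‑advantage splits. Either way, pushing the error below $\eps$ forces the sub‑tree hanging in $R$ to approximate $\mathrm{Par}(x_{k+1},\dots,x_d)$ to error below $\tfrac14$, and the standard decision‑tree lower bound for parity (a size‑$s$ tree errs on $\ge\tfrac12(1-s\,2^{-(d-k)})$ of a $(d-k)$‑variable parity) forces $s\ge 2^{\Omega(d-k)}=2^{\Omega(d)}$; hence the whole tree must have size $2^{\Omega(d)}$.

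The hard part will be designing $B$ in the second step: showing that for an arbitrary impurity function $\G$ — using only that $\G$ is concave and symmetric with $\G(0)=\G(1)=0$ — the $B$‑induced split strictly dominates every alternative at each node on $\mathcal{A}_\G$'s route into and through $R$, and in particular that the ``carving'' splits entering $R$ are genuinely the greedy choices even though they create very unbalanced children, where Gini‑type impurities make the relevant gains only quadratically small and keeping those nodes $\gamma$‑advantage requires $\mu(R)$ to be tuned delicately against $\gamma$. The rest — checking that at most $\eta\,|\bS|$ points are relabelled with $\eta=\tilde\Theta(\eps\gamma)$, and that the various w.h.p.\nobreakspace-over-$\bS$ estimates hold simultaneously — is routine once $B$ is pinned down, and the parity size bound together with the error accounting are standard.
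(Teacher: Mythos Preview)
Your construction has a fundamental gap: the distribution $\mathcal{D}$ you build does \emph{not} satisfy the $\gamma$-weak learning assumption, which the theorem requires (the formal statement, \Cref{thm:lb}, asks for ``a hypothesis class $\mathcal{H}$ satisfying the $\gamma$-weak learning assumption w.r.t.\ $\mathcal{D}$''). Inside your subcube $R$, the clean labels are a parity of $d-k$ coordinates, and every coordinate projection has zero covariance with a parity under the uniform distribution; hence the induced distribution $\mathcal{D}_R$ admits no $\gamma$-advantage hypothesis in $\mathcal{H}$, and \Cref{def:our-WLA} fails. You acknowledge this yourself (``the $\gamma$-advantage precondition \ldots fails inside $R$'') but treat it as a feature rather than the bug it is: the entire point of the lower bound is to exhibit a failure \emph{attributable to the noise}, not to the absence of good hypotheses. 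In your setup $\BuildTD$ already gets stuck inside $R$ on clean data, so the noise is doing no real work; and the ``$\gamma$-advantage'' your corruption $B$ manufactures at nodes inside $R$ is advantage on the corrupted $\wh{\mathcal{D}}$, not on the clean $\mathcal{D}$ --- which is what both the informal phrasing and the formal hypothesis refer to (cf.\ \Cref{def:our-WLA} and \Cref{thm:main_formal}).

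The paper's route is essentially the opposite. It takes $f(x) = g(x_{[1:k]})$ where $g$ is a \emph{monotone} $k$-junta built from $\Tribes$, with $k = \Theta((\log(1/\gamma))/\gamma)$; monotonicity together with the KKL inequality guarantees that every restriction of $f$ has a coordinate of covariance $\Omega((\log k)/k)\cdot\Var$, so the weak learning assumption genuinely holds with advantage $\gamma$ on the clean $\mathcal{D}$. The $\Tribes$ parameters are chosen so that all $k$ relevant coordinates have the \emph{same} small correlation $v = O(\eps\gamma\log(1/\eps))$ with the label. The corruption is then the mixture $\wh{\mathcal{D}} = (1-\eta)\mathcal{D} + \eta\mathcal{E}$, with $\mathcal{E}$ designed to \emph{cancel} those correlations exactly and $\eta \approx v$: on $\wh{\mathcal{D}}$ every coordinate now has purity gain zero, so $\BuildTD$ makes arbitrary choices and can burn its entire budget on the $d-k$ irrelevant coordinates, forcing size $2^{d-k}$. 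In short, rather than using noise to \emph{fabricate} good-looking splits that lure the algorithm into a hard region, the paper uses noise to \emph{erase} the signal from the genuinely good splits --- and crucially, the clean distribution remains weak-learnable throughout, which is exactly what the theorem demands.
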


A weaker lower bound of $\eta \ge \Omega(\sqrt{\gamma})$ for constant $\eps$ follows from the work of \citet{DSFTWW15}. \Cref{thm:lb-intro} improves this to the near-optimal $\eta \ge \tilde{\Omega}(\eps\gamma)$. A key ingredient in our proof is the celebrated Kahn-Kalai-Linial theorem~\cite{KKL88} from discrete Fourier analysis.  We contrast the dimension-independent bound, $\exp(O(1/\gamma^2\eps^2))$, on the size of tree in~\Cref{thm:main-intro} with the $2^{\Omega(d)}$ lower bound in~\Cref{thm:lb-intro}. % Note also that every function over $\zo^d$ can be fit perfectly by a tree of size $\le 2^d$\gnote{This is only true of the hypothesis class consists of all projection functions}, so our lower bound is essentially maximally large. 
\paragraph{Improving on the theoretical state of the art.} Specializing~\Cref{thm:main-intro} to the setting of product distributions over binary features, we further show the following: 
\begin{theorem}[Learning monotone decision trees in the presence of nasty noise; see~\Cref{thm:monotone_dt} for formal version]
\label{thm:monotone-intro} 
For any product distribution over $\bits^d$, impurity-based decision tree algorithms learn size-$s$ monotone decision trees in the presence of nasty noise in $\poly(d)\cdot s^{O(\log s)}$ time.
\end{theorem}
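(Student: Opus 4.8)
The plan is to deduce \Cref{thm:monotone-intro} from our general boosting guarantee, \Cref{thm:main-intro}, by arguing that when the (uncorrupted) target $f$ is a size-$s$ monotone decision tree and the feature distribution is a product distribution, \emph{every} internal node of the tree grown by $\mathcal{A}_\G$ is automatically a $\gamma$-advantage hypothesis with $\gamma = \tilde\Omega(\eps/\log s)$. Feeding this $\gamma$ into \Cref{thm:main-intro} turns the size bound $\exp(O(1/\gamma^2\eps^2))$ into $s^{O(\log s)}$ (for a constant error parameter $\eps$; the dependence on $1/\eps$ is polynomial in the exponent) and the tolerable noise rate $\eta \le O(\eps\gamma)$ into $\eta \le \tilde O(\eps^2/\log s)$; multiplying the $s^{O(\log s)}$ nodes by the $\poly(d)$ cost of evaluating all single-variable splits at each node gives the claimed $\poly(d)\cdot s^{O(\log s)}$ running time.

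The structural heart is the following fact about monotone decision trees, which goes back to O'Donnell and Servedio and has been refined in the subsequent line of work on decision-tree induction (e.g.\ \citet{BLT-ICML,BLQT21}): a monotone function computed by a size-$s$ decision tree over a product distribution has some coordinate $i$ whose correlation with the function --- equivalently, in the monotone case, the degree-$1$ Fourier coefficient of $f$ on $i$ in the basis orthonormal with respect to the product measure, which also coincides with $\mathrm{Inf}_i(f)$ --- is at least $\tilde\Omega(\poly(\mathrm{Var}(f))/\log s)$; the proof combines the $O(\log s)$ bound on the total influence of size-$s$ decision trees with monotonicity to funnel this weight onto a single coordinate. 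Two elementary observations make this usable at every node: (i)~any restriction of a size-$s$ monotone decision tree to a subcube is again a monotone decision tree of size at most $s$; and (ii)~whenever the tree grown so far has error exceeding $\eps$ with respect to $\mathcal{D}$, some reachable node $v$ has conditional variance $\mathrm{Var}_{\mathcal{D}_v}(f) = \Omega(\eps)$, so at $v$ there is a single-variable split --- which lies in the splitting class $\mathcal{H}$ of every standard impurity-based algorithm --- of advantage $\tilde\Omega(\eps/\log s)$ with respect to $\G$ (using the same correlation-to-purity-gain translation underlying \Cref{thm:main-intro} and the analysis of \citet{KM96}).

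It remains to pass from this population statement to the $\eta$-nasty-noise finite sample. Because the single-variable splitting class has size only $d$ and the tree is grown to just $t = s^{O(\log s)}$ nodes, a VC/Chernoff-and-union-bound argument over the at most $td$ empirical purity gains shows that, on a sample of size $\poly(d,s,1/\eps)$, all empirical purity gains at nodes carrying non-negligible probability mass remain accurate to within $o(\gamma)$ even after an $\eta$-nasty corruption with $\eta \le \tilde O(\eps\gamma)$; nodes of tiny mass contribute negligibly to the overall error and can be ignored. Since $\mathcal{A}_\G$ greedily picks the empirically purity-maximizing split, it therefore always commits to a genuine $\Omega(\gamma)$-advantage hypothesis, which is exactly the premise on the internal nodes required by \Cref{thm:main-intro}; that theorem then certifies error $\le \eps$ for the grown tree, completing the argument.

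The main obstacle is the structural lemma of the second paragraph: turning ``small total influence'' plus ``monotone'' into ``one coordinate carries $\poly(\mathrm{Var})/\log s$ advantage'' with the right logarithmic dependence, and checking that it is robust to (a)~restriction to subcubes and (b)~replacing the uniform measure by an arbitrary product measure --- for the latter one must recast influences, the total-influence bound, and low-degree concentration in the Fourier basis adapted to the product measure. The remaining ingredients --- closure of monotone decision trees under restriction, concentration of empirical purity gains under nasty noise, and the parameter bookkeeping --- are routine once \Cref{thm:main-intro} is in hand.
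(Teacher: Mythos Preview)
Your overall plan matches the paper's exactly: establish that single-coordinate projections satisfy the $\gamma$-weak-learning assumption with $\gamma$ of order $1/\log s$ whenever the target is a monotone size-$s$ decision tree under a product measure (using that restrictions preserve both monotonicity and the size-$s$ bound, and that conditioning on coordinates preserves the product structure), and then invoke \Cref{thm:main-intro}. The gap is in the mechanism you sketch for the structural lemma. Bounding the total influence of a size-$s$ decision tree by $O(\log s)$ is an \emph{upper} bound on $\sum_i \Inf_i(f)$; combined with monotonicity (which gives $\Inf_i(f)=\Cov[\bx_i,f(\bx)]\ge 0$) it still cannot force any \emph{single} coordinate to have large influence---from an upper bound on a sum of nonnegative terms you cannot extract a lower bound on the maximum term. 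The paper instead invokes the OSSS inequality of O'Donnell, Saks, Schramm, and Servedio (\Cref{thm:OSSS}), which gives directly $\max_i \Inf_i(f)\ge \Var[f]/\log s$ for \emph{any} (not necessarily monotone) size-$s$ decision tree over any product distribution; monotonicity enters only via \Cref{lem:inf-corr} to convert influence into covariance, yielding $|\Cov_{\mcD'}[\bx_i,f(\bx)]|\ge \Var_{\mcD'}[f(\bx)]/\log s$ at every restriction---precisely the weak learning condition with $\gamma=1/\log s$ (\Cref{lem:dt_wla}).

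A smaller point: because the weak learning condition in \Cref{def:our-WLA} is already multiplicative, $|\Cov|\ge \gamma\Var$, OSSS delivers $\gamma = 1/\log s$ outright, with no dependence on $\eps$; there is no need to first locate a node of variance $\Omega(\eps)$ and carry an extra $\eps$ factor into $\gamma$. Plugging $\gamma=1/\log s$ into \Cref{thm:main_formal} immediately yields tree size $s^{O((\log s)/\eps^2)}$ and tolerable noise rate $\eta\le O(\eps/\log s)$, matching \Cref{thm:monotone_dt}.
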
 

This problem of learning decision trees in the setting of product distributions has been intensively studied in learning theory~\cite{Han93,Bsh93,KM93,BFJKMR94,JS06,OS07,GKK08,Lee09,KST09,HKY18,CM19,BDM20,BLT-ITCS,BLQT21focs}.  Many real-world learning scenarios are naturally monotone in nature, and relatedly, monotonicity is a commonly studied  assumption in learning theory.  

Prior to our work, the only algorithms provably resilient to nasty noise run in time $d^{O(\log s)}$~\cite{LMN93,KKMS08,BLT-ICALP}, which is only $\poly(d)$ for constant $s$.  These algorithms do not resemble the impurity-based algorithms used in practice. \Cref{thm:monotone-intro} therefore gives the first $\poly(d)$-time algorithm for any $s = \omega_d(1)$; our running time remains $\poly(d)$ for $s$ as large as $2^{O(\sqrt{\log d})}$.    For the weaker model of agnostic noise, recent work of~\cite{BLT-ICML} gives a $\poly(d)\cdot s^{O(\log s)}$ time algorithm.

We discuss other related work in~\Cref{apx:other-related-work}. 

\section{Preliminaries}
% 
% annoying spacing thing
% \textbf{Notation.} 
\paragraph{Notation.} 
We use {\bf boldface} (e.g.~$\bx \sim \mcD$) to denote random variables.  We consider the binary classification setting where we have a distribution $\mcD_X$ over an arbitrary domain $\mcX$ and a (randomised) classification function $\mcD_{Y=1 | X} : \mcX \to [0,1]$. Together, these define a distribution $\mcD$ over $\mcX \times \{0, 1\}$.  The goal of a learning algorithm is to use i.i.d.~samples $(\bx,\by)\sim \mathcal{D}$ to construct a hypothesis $T : \mcX \to \{0, 1\}$ that achieves low error on $\mcD$, where error is defined as:

\begin{equation}
    \error_\mcD[T] \coloneqq \Prx_{(\bx, \by) \sim \mcD}[T(\bx) \neq \by].
\end{equation}
We also define $\mu(\mcD) \coloneqq \Pr_\mcD [\by = 1]$, the bias of $\mathcal{D}$ towards the $1$-label, and $\eps(\mcD) \coloneqq \min\{\mu(\mcD), 1 - \mu(\mcD)\}$ to be the error of predicting the majority label on $\mcD$.
% 

% \textbf{Decision tree hypotheses.}
\paragraph{Decision tree hypotheses.} 
 We consider binary decision trees $T : \mathcal{X}\to\zo$ whose internal nodes $v$ are labeled by functions $h_v : \mathcal{X}\to \zo$ from a class $\mathcal{H}$ of allowable splitting functions.  The most standard class of splitting functions is the set of thresholds of a single feature, $h(x) = \Ind[x_i \ge \theta]$, though our guarantees apply to any arbitrary class $\mathcal{H}$.   Each instance $x \in \mcX$ follows a unique root-to-leaf path in $T$: at any internal node $v$, it follows either the left or right branch depending on the result of $h_v(x)$, until a leaf $\ell$ is reached.   The set of leaves $\ell \in \mathrm{leaves}(T)$ therefore form a partition of $\mcX$. We write $w_\mcD(\ell)$ to denote the probability that $\bx\sim \mathcal{D}_X$ reaches $\ell$ and we write $\mathcal{D}_\ell$ to denote $\mathcal{D}$ conditioned on $\bx$ reaching $\ell$.  We write $\bell\sim (T,\mathcal{D})$ to denote 
the draw of a random leaf where each leaf $\ell \in \mathrm{leaves}(T)$ is given weight $w_\mathcal{D}(\ell)$.

The prediction at a leaf $\ell$ is then taken to be the majority label of points that reach that leaf, i.e.~$\Ind[\mu(\mathcal{D}_\ell)\ge \frac1{2}]$.   The error of $T$ on $\mcD$ is therefore given by the sum of the error at each leaf incurred by predicting the majority label, weighted by the probability of reaching that leaf:
\[ 
    \error_\mcD[T] = \sum_{\ell \in \mathrm{leaves}(T)} w_\mcD(\ell) \eps(\mcD_\ell) = \Ex_{\mathbf{\bfell} \sim (T,\mcD)} [\eps(\mcD_{\mathbf{\bfell}})].
\]

% \gray{ \lnote{Let's move this to the relevant section.}\gnote{This is needed in both the lower bounds section and the OSSS section. For now I moved it to the lower bound section \Cref{def:monotone}} We say that a Boolean function $f:\bits^n \to \bits$ is \emph{monotone} if for any $x,y \in \bits^n$ where $x_i \leq y_i$ for all $i \in [n]$, $f(x) \leq f(y)$.}

We will need the notion of {\sl feature influences} in the context of binary features: 

\begin{definition}[Influence]
\label{def:inf}
Let $f : \bits^d \to \bits$ be a function and $\mcD_X = \mcD_X^{(1)} \times \ldots \times \mcD_X^{(d)}$ be a production distribution over $\bits^d$. For $i\in [d]$, the {\sl influence of feature $i$ on $f$}, denoted as $\Inf_i(f)$ is given by the quantity $2\Prx_{\bx \sim \mcD_X, \bb \sim \mcD_X^{(i)}}[f(\bx)\ne f(\bx_{i = \bb})]$, where $\bx_{i = \bb}$  rerandomises the $i$-th bit of $\bx$ with a random sample from $\mcD_X^{(i)}$. 
\end{definition} 

\subsection{Impurity-based decision tree learning algorithms}
Essentially almost all decision tree learning algorithms used in practice, including the classic and popular ID3, C4.5, and CART, learn decision trees greedily in a top-down manner, using an {\sl impurity function} as a measure of progress. 

\begin{definition}[Impurity function]\label{def:impurity_function}
An impurity function $\G : [0,1] \to [0,1]$ is a concave function that is symmetric around $\frac1{2}$ and satisfies $\G(0) = \G(1) = 0$ and $\G(\frac1{2}) = 1$.
\end{definition}

\Cref{def:impurity_function} guarantees that $\G(p) \geq \min\{p, 1-p\}$ for all $p\in [0,1]$, allowing us to view $\G(\mu(\mcD_\ell))$ as an upper bound on $\eps(\mcD_\ell)$. If we analogously define $\G_\mcD(T) \coloneqq \E_{\bfell \sim \mcD}  [\G(\mu(\mcD_{\bfell}))]$, we get that $\G_\mcD(T)$ is an upper bound on $\error_\mcD[T]$. Common examples of impurity functions include $\G(p) = \textnormal{H}_2(p)$ (binary cross-entropy, used by ID3 and C4.5), $\G(p) = 4p(1-p)$ (Gini impurity function, or simply variance, used by CART), or $\G(p) = 2\sqrt{p(1-p)}$ (introduced and analyzed in~\cite{KM96}). For this paper, we will focus on $\G(p) = 4p(1-p)$ for simplicity, but our results hold generally for all impurity functions that have a second derivative bounded away from $0$ (see \Cref{remark:other-impurity} in appendix).

Impurity-based decision tree learning algorithms are parameterized by an impurity function $\G$ and a class $\mathcal{H}$ of allowable splitting functions. For a tree $T$, leaf $\ell$, and label function $h \in \mcH$, we denote $T_{\ell, h}$ to be the extension of $T$ that replaces the leaf $\ell$ with an internal node that splits on $h$.   At every step, the algorithm loops through all possible leaves and labelling functions $h \in \mcH$,\footnote{In the context of  practical decision tree algorithms, the class of splitting functions $\mcH$ is usually finite and small. Standard implementations of these popular algorithms, such as in scikit-learn, do a brute force search over hypotheses. } looking for a potential split that will result in the largest reduction $\G(T) - \G(T_{\ell, h})$ of the impurity function, and hence also (hopefully) the error of the new tree.

\begin{definition}[Purity gain]\label{def:purity_gain}
Let $h$ be a splitting function, $\ell$ be a leaf of $T$, and $\mcD_\ell$ be $\mathcal{D}$ conditioned on $\bx$ reaching $\ell$. Let $\ell_0$ be the leaf of $T_{\ell, h}$ corresponding to $h(\bx) = 0$, $\ell_1$ be the leaf corresponding to $h(\bx) = 1$, and $\mcD_{\ell_0}$ and  $\mcD_{\ell_1}$ be their respective conditional distributions. 

We define the local drop in $\G$ at this leaf, after splitting with $h$, as:
\begin{align*}
    \Delta_{\mcD_\ell}(h) &\coloneqq \G(\mu(\mcD_\ell)) -\Prx_{\mcD_\ell}[h(\bx) = 0] \cdot \G(\mu(\mcD_{\ell_0}))  \\
    & \quad  - \Prx_{\mcD_\ell}[h(\bx) = 1] \cdot \G(\mu(\mcD_{\ell_1})).
\end{align*} 
\end{definition}
\begin{figure}[ht]
%   \captionsetup{width=.9\linewidth}
\begin{tcolorbox}[colback = white,arc=1mm, boxrule=0.25mm]
\vspace{3pt}
$\BuildTD_{\mcH, \G, \mcD}(t)$:
\begin{itemize}[align=left]
	\item[\textbf{Given:}] Size bound $t$. 
	\item[\textbf{Output:}] Decision tree of size $t$ approximating $\mcD$, with internal nodes chosen from $\mcH$.  
\end{itemize}
\begin{enumerate}
	\item Initialize $T$ to be the empty tree.
	\item While $\mathrm{size}(T) < t$: 
	\begin{enumerate}
	    \item Let $(\ell^\star, h^\star)$ be set to $$\argmax_{(\ell, h) \in \mathrm{leaves}(T) \times \mcH} [w_\mcD(\ell) \cdot  \Delta_{\mcD_\ell}(h)]$$ breaking ties arbitrarily.
	    
	    \item Set $T \leftarrow T_{\ell^*, h^*}$.

	\end{enumerate}
	\item Label each leaf $\ell \in \mathrm{leaves}(T)$ with value 
	$$\Ind[\mu(\mcD_\ell) > \lfrac{1}{2}].$$
	\item Return $T$.
\end{enumerate}

\end{tcolorbox}
\caption{The decision tree boosting algorithm for hypothesis class $\mcH$ and impurity function $\mcG$ over distribution $\mcD$. For simplicity, we assume that $\Delta_{\mcD_\ell}(h)$, $w_\mcD(\ell)$, and $\mu(\mcD_\ell)$ can be computed exactly. In practice, these quantities can be replaced with empirical estimates from a random sample of $\mcD$ (see \Cref{apx:exact_vs_sample} for details). 
}
% \gray{Standard arguments show that a random sample of size $\poly(t, 1/\eps, \log (1/\delta))$ suffices for this algorithm to be correct with probability $1 - \delta$.}
\label{fig:pseudocode}
\end{figure}

% \violet{\paragraph{Runtime analysis.} Before proving that $\BuildTD$ build low error trees, we will briefly prove that it is efficient. Let $\zeta$ be the time it takes to compute $\mu(\mcD_\ell)$ and $\Pr_{\mcD_{\ell}}[h(x) = 1]$\jnote{should we also include $\Pr[h(x) = 1]$ in this?}\gnote{Formally yes, but I'm implicitly including the time it takes to evaluate $h$ into the time it takes to compute $\mu(\mcD_{\ell})$.} for some leaf $\ell$ of the tree. Typically, this will be proportional to the size of the data set. Then, as the number of leaves in each iteration will be at most $t$, the time required for an iteration of $\BuildTD$ is at most $O(\zeta t \cdot |\mcH|)$. The algorithm runs for $t$ iterations, so the total runtime is $O(\zeta t^2 \cdot |\mcH|)$.}

% \subsection{Impurity-based decision tree algorithms as boosting algorithms}
\subsection{Impurity-based decision tree algorithms as boosting algorithms}

\citet{KM96}  were the first to analyze impurity-based decision tree learning algorithms from the perspective of {\sl boosting}. Their simple but key insight was that the splitting functions at the internal nodes of the tree can be viewed as {\sl weak hypotheses}, and the decision tree construction as a process of creating a strong learner by combining these weak hypotheses. We recall that standard weak learning assumption from the literature on boosting:  

% process as a strong learner that combines these weak hypotheses into a strong hypothesis with low error.   We recall that standard weak learning assumption from the literature on boosting: }
%\gray{ 
%Top-down decision tree algorithms can be cast in the popular boosting framework of \cite{FS97}. In particular, if the class of labelling functions $\mcH$ used to split at nodes are weak learners,  top-down learning algorithms will boost these weak learners to arbitrarily small error \cite{KM96}.} 

\begin{definition}[Standard distribution-independent weak learning assumption]
\label{def:standard-WLA} 
Let $f : \mathcal{X} \to \zo$ be a target function and $\mathcal{H}$ be a class of hypotheses from $\mathcal{X}$ to $\zo$. For $\gamma > 0$, we say that {\sl $\mathcal{H}$ satisfies the distribution-independent $\gamma$-weak learning assumption w.r.t.~$f$} if for all distributions $\mathcal{D}_X$ over $\mathcal{X}$, there exists $h\in\mathcal{H}$ such that $\Prx_{\bx\sim\mathcal{D}_X}[h(\bx)\ne f(\bx)]\le \frac1{2}-\gamma$.
\end{definition}

\Cref{def:standard-WLA} can be hard to satisfy because of the requirement that there exists a $\gamma$-advantage hypothesis {\sl for every distribution} $\mathcal{D}_X$ over $\mathcal{X}$.  Our analysis will only rely on a  milder {\sl distribution-specific} weak learning assumption that is significantly easier to satisfy.  Looking ahead, the mildness of our weak learning assumption will be crucial for our proof of~\Cref{thm:monotone-intro}. We first need the notion of an induced distribution:

\begin{definition}[Induced distributions]
Let $\mathcal{D}$ be a distribution over $\mathcal{X}\times \zo$ and $\mathcal{H}$ be a class of hypotheses from $\mathcal{X}$ to $\zo$.  We say that $\mathcal{D'}$ is a distribution {\sl induced by conditioning $\mathcal{D}$ on $\mathcal{H}$} if $\mathcal{D}'$ can be expressed as $\mathcal{D}$ conditioned on $\bx \sim \mathcal{D}_X$ satisfying $h_1(\bx)\wedge \cdots \wedge h_k(\bx)$ where $h_i \in \mathcal{H}$.  
\end{definition}

\begin{remark}
Note that all the conditional distributions, $\mcD_\ell$, at the leaves of a decision tree are induced distributions of $\mcD$ conditioned on $\mcH$. Moreover, if $\mathcal{D}$ is such that $\mathcal{D}_X$ is a product distribution and if $\mathcal{H}$ is the class of hypotheses that threshold on a single feature (i.e.~$h(x) = \Ind[x_i\ge \theta]$), then $\mathcal{D}'_X$ remains a product distribution for every distribution $\mathcal{D}'$ that is induced by conditioning $\mathcal{D}$ on $\mathcal{H}$.
\end{remark}

\begin{definition}[Our distribution-specific weak learning assumption]
\label{def:our-WLA} 
Let $\mcD$ be a distribution over $\mcX \times \{0, 1\}$ and $\mcH$ be a class of hypotheses from $\mcX$ to $\{0,1\}$. For $\gamma > 0$,  we say {\sl $\mcH$ satisfies the $\gamma$-weak learning assumption w.r.t $\mcD$} if, for any distribution $\mathcal{D}'$ that is induced by conditioning $\mathcal{D}$ on $\mathcal{H}$, there exists an $h \in \mcH$ that satisfies: 
\begin{equation}\label{eq:weak_learning_req}
    \left|\Cov_{\mcD'}[h(\bx), \by]\right| \geq \gamma \Var_{\mcD'}[\by].
\end{equation}
We call such an $h$ a $\gamma$-advantage hypothesis with respect to $\mathcal{D'}$. 
\end{definition}

\begin{remark}
Our weak learning assumption in  \Cref{def:our-WLA} is a weaker assumption than the standard \Cref{def:standard-WLA}. This is because \Cref{eq:weak_learning_req} is equivalent to having $\Pr_{(\bx,\by) \sim \mcD'_{\text{bal}}} [h(\bx) \neq \by] \leq 1/2 - \gamma$, where $\mcD'_\text{bal}$ is the {\em{balanced}} version of $\mcD'$, so that the points in $\mcX$ are reweighted to make the probability of a positive or negative label equally likely. Since \Cref{def:standard-WLA} has to hold for \emph{all} distributions over $\mcX$, it also has to hold in particular for the  marginal distribution of $\mcD'_\text{bal}$ over $\mcX$. Therefore, \Cref{def:standard-WLA} implies \Cref{def:our-WLA}.
\end{remark}

% The\anote{remove this whole paragraph?} requirement of \Cref{eq:weak_learning_req} can be interpreted intuitively in several ways. A weak learner can be seen as a hypothesis that is guaranteed to have covariance with $\by$ that is at least a constant fraction as much as $\by$ covaries with itself. It is also equivalent to requiring that $h(\bx)$ performs better than chance at predicting $\by$ on the {\em{balanced}} version of $\mcD'$ , where the points are reweighted so that the probability of a positive or negative label is equally likely. All boosting results rely on the existence of weak learners that perform well on the balanced distribution because these weak learners always make some progress---in contrast, there are trivial hypotheses that can perform better than chance if the distribution is not balanced e.g. always predicting the majority class.

% We state the formal theorem of \cite{KM96}.

% \begin{theorem}[Main Theorem \cite{KM96}]
% Let $\mcH$ be a class of labelling functions and let $\mcD$ be a distribution over $\mcX \times \{0, 1\}$ that satisfies the $\gamma$-weak learning assumption w.r.t $\mcH$. Then, the decision tree output by \textsc{BuildTopDownDT$_{\mcH, \G, \mcD}(t)$} will have $\error_{\mcD}[T] \leq \eps$ after $t \geq \exp\paren{O(1/\gamma^2 \eps^2)}$ steps.\anote{could write this the other way---$\mcH$ satisfies the wla wrt $\mcD$.}
% \end{theorem}

\subsection{Learning with adversarial noise}
\label{prelims:noise} 

 Adversarial noise can take on many strengths and forms, depending on both what kind of corruptions are allowed and when these corruptions can be made. We focus on the strongest model of noise called \emph{nasty noise} \cite{BEK02}. In this setting, we wish to learn a binary classifier on a distribution $\mcD$ over $\mcX \times \{0, 1\}$.  However, instead of receiving a set of samples $S \sim \mcD^n$, an adversary is allowed to replace any $\eta$-fraction of points in $S$ with arbitrary points to get a corrupted sample $\wh{S}$. The algorithm then receives the corrupted sample $\wh{S}$.

This noise model captures many other weaker forms of noise. For example, if the adversary can only change the {\sl labels} of the $\eta$ corrupted fraction, we recover agnostic noise. Similarly, if the adversary has to obliviously commit to a corruption strategy before seeing the sample $S$, this is equivalent to choosing a distribution $\wh{\mcD}$ that is $\eta$-close to $\mcD$ in Total Variation (TV) distance, and drawing $\wh{S}$ from that. This is often called the distributional shift setting. %%\lnote{This is a good place to discuss the results of~\cite{BLMT21}. Elaborate on relationship between~\Cref{thm:main-intro} and~\Cref{thm:main_formal}, and likewise~\Cref{thm:monotone-intro} and~\Cref{thm:monotone_dt}.}\gnote{my todo}

% \gray{ Although \Cref{thm:main} is stated in terms of samples, we prove the result at a distributional level, assuming the algorithm has exact estimates of various probabilistic quantities. The equivalence of these two statements follows from a recent work by \cite{BLMT21}, who show adaptive sample corruption is interchangeable with oblivious distribution shift for the class of Statistical Query (SQ) algorithms.} 

% In the presence of nasty noise, an information theoretic lower bound guarantees we cannot get $\error_{\mcD}[T] < 2\eta$. Therefore we hope for algorithms that can get error arbitrarily close to this lower bound.\anote{fix}

\section{Proof of~\Cref{thm:main}: Impurity-based decision tree algorithms are noise-tolerant boosting algorithms}

We can now state the formal version of \Cref{thm:main}, which states that impurity-based decision tree learning algorithms are boosting algorithms that are resilient to nasty noise. 

We draw on a recent result by \citet{BLMT21}, which shows that learning in the presence of $\eta$-nasty-noise corruption is equivalent to learning in the presence of $\eta$ distribution shift with respect to Total Variation distance ($\dtv$), as long as the learner only interacts with its samples by computing expectations.
For details on the formal relationship between \Cref{thm:main-intro,thm:main_formal}, as well as the runtime analysis of $\BuildTD$ see \Cref{apx:exact_vs_sample}.

\begin{theorem}[Formal version of~\Cref{thm:main-intro}]\label{thm:main_formal}
Let $\mcD$ be a distribution over $\mcX \times \{0, 1\}$ and $\mcH$ be a class of splitting functions from $\mathcal{X}$ to $\zo$  that satisfies the $\gamma$-weak learning assumption w.r.t.~$\mcD$. For any noise rate $\eta \le O(\eps\gamma)$ and any distribution $\wh{\mcD}$ satisfying $\dtv(\mcD, \wh{\mcD}) \leq \eta$, the decision tree hypothesis $T$ constructed by $\BuildTD_{\mcH, \G, \wh{\mcD}}(t)$ achieves $\error_{\mcD}[T] \leq \eps$ after $t \geq  \exp\paren{O(1/\gamma^2 \eps^2)}$. 
\end{theorem}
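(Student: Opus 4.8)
The plan is to prove this in two conceptually separate parts. First, reduce the nasty-noise/distribution-shift setting to a noiseless analysis, so the real work is a progress lemma for $\BuildTD$ when it is fed exact statistics of $\wh{\mcD}$. Second, carry out the potential-function argument in the spirit of \citet{KM96}, tracking $\G_{\wh{\mcD}}(T)$ as $T$ grows, and then convert the guarantee on $\wh{\mcD}$ back to a guarantee on $\mcD$ via $\dtv(\mcD,\wh{\mcD})\le\eta$.

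\textbf{Step 1: From nasty noise to distribution shift.} The theorem is already stated in the $\dtv$-shift language, so I invoke the \citet{BLMT21} equivalence to justify that it suffices to analyze $\BuildTD_{\mcH,\G,\wh{\mcD}}$ for an arbitrary $\wh{\mcD}$ with $\dtv(\mcD,\wh{\mcD})\le\eta$ (the algorithm only touches its sample through expectations $w(\ell)$, $\mu(\mcD_\ell)$, $\Delta_{\mcD_\ell}(h)$, so this applies). The key robustness observation I will need repeatedly: if $\dtv(\mcD,\wh{\mcD})\le\eta$, then for any leaf $\ell$ reached with weight $w_{\wh{\mcD}}(\ell)$ bounded below, the conditional distributions $\mcD_\ell$ and $\wh{\mcD}_\ell$ are close in TV, specifically $\dtv(\mcD_\ell,\wh{\mcD}_\ell)\le \eta/w_{\wh{\mcD}}(\ell)$, hence $|\mu(\mcD_\ell)-\mu(\wh{\mcD}_\ell)|\le \eta/w_{\wh{\mcD}}(\ell)$ and likewise $|\eps(\mcD_\ell)-\eps(\wh{\mcD}_\ell)|$ is small. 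Summing the per-leaf discrepancy $w_{\wh{\mcD}}(\ell)\cdot|\eps(\mcD_\ell)-\eps(\wh{\mcD}_\ell)|$ over all leaves costs only $O(\eta)$ total (the light leaves contribute negligibly, the heavy ones get small per-leaf error), so $\error_\mcD[T]\le \error_{\wh{\mcD}}[T]+O(\eta)\le \G_{\wh{\mcD}}(T)+O(\eta)$. Since $\eta\le O(\eps\gamma)\le O(\eps)$, it therefore suffices to show $\G_{\wh{\mcD}}(T)\le \eps/2$ for $t=\exp(O(1/\gamma^2\eps^2))$.

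\textbf{Step 2: Progress lemma on $\wh{\mcD}$.} Work entirely with $\wh{\mcD}$ now. Fixing $\G(p)=4p(1-p)$, the standard variance identity gives, at a leaf $\ell$ split by $h$, a purity gain equal (up to the factor $4$) to $\mathrm{Var}_{\wh{\mcD}_\ell}[\mathbb{E}[\by\mid h(\bx)]]$, which one checks equals $\Cov_{\wh{\mcD}_\ell}[h(\bx),\by]^2/\mathrm{Var}_{\wh{\mcD}_\ell}[h(\bx)]\ge \Cov_{\wh{\mcD}_\ell}[h(\bx),\by]^2$. The catch is that the weak-learning assumption (\Cref{def:our-WLA}) holds for $\mcD$, not $\wh{\mcD}$; but the leaf distributions $\wh{\mcD}_\ell$ are induced distributions of $\wh{\mcD}$, which is $\eta$-close to $\mcD$, and the advantage guarantee $|\Cov_{\mcD'}[h,\by]|\ge\gamma\mathrm{Var}_{\mcD'}[\by]$ degrades by only $O(\eta/w)$ under the shift — so on any leaf with $\eps(\wh{\mcD}_\ell)\gtrsim\eps$ and $w_{\wh{\mcD}}(\ell)\gtrsim \eta/\eps$, there is still an $\Omega(\gamma)$-advantage hypothesis, giving local purity gain $\Omega(\gamma^2\mathrm{Var}_{\wh{\mcD}_\ell}[\by])=\Omega(\gamma^2\eps(\wh{\mcD}_\ell))$ (using $\mathrm{Var}[\by]=\mu(1-\mu)\ge\eps(\wh{\mcD}_\ell)/2$). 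Now the amortized argument: since $\BuildTD$ greedily picks the leaf--hypothesis pair maximizing $w_{\wh{\mcD}}(\ell)\Delta_{\wh{\mcD}_\ell}(h)$, and the leaves reached with total weight at least $\eps/2$ among those with $\eps(\wh{\mcD}_\ell)\ge\eps/4$ each admit an $\Omega(\gamma)$-advantage split (the low-weight leaves together carry $\le \eps/4$ weight and contribute that much to error regardless), the best available split has $w_{\wh{\mcD}}(\ell^\star)\Delta(h^\star)\ge \Omega(\gamma^2\eps^2)$ as long as $\G_{\wh{\mcD}}(T)>\eps/2$. Since $\G_{\wh{\mcD}}$ is bounded in $[0,1]$, monotonically decreasing, this can happen at most $O(1/\gamma^2\eps^2)$ \emph{times per leaf}, but because splitting a heavy leaf creates two new leaves this branches; the resolution (as in \citet{KM96}) is the convexity/potential bound showing that after $t$ splits the tree achieves $\G_{\wh{\mcD}}(T)\le \eps/2$ once $t\ge 2^{O(1/\gamma^2\eps^2)}$ — one packs the per-split gains against the entropy-like budget and the depth needed is $O(1/\gamma^2\eps^2)$, giving tree size exponential in that.

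\textbf{The main obstacle} I expect is making the weak-learning transfer from $\mcD$ to $\wh{\mcD}$ quantitatively tight enough to land at $\eta=O(\eps\gamma)$ rather than something weaker like $O(\eps\gamma^2)$. The delicate point is that the $\gamma$-advantage on $\mcD'$ is a statement about the \emph{balanced} reweighting $\mcD'_{\mathrm{bal}}$ (per the remark following \Cref{def:our-WLA}), and reweighting to balance can blow up a TV perturbation by a factor $1/\mathrm{Var}[\by]\approx 1/\eps$; so one must be careful to compare covariances directly on $\wh{\mcD}_\ell$ (where the shift is only $\eta/w$) and only invoke balancing for the clean distribution, ensuring the error terms are $O(\eta/w)$ additively against a signal of size $\gamma\mathrm{Var}[\by]$ rather than $O(\eta/(w\eps))$. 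Getting the bookkeeping of "light leaves vs. heavy leaves" thresholds consistent across Steps 1 and 2 — so that the same $w\gtrsim\eta/\eps$ cutoff controls both the error-transfer sum and the advantage degradation — is where I would spend the most care.
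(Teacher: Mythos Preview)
Your high-level plan---potential function on $\wh{\mcD}$, transfer the weak-learning guarantee across the TV shift, convert back to $\mcD$---is the right one, but the per-leaf heavy/light split in Step~2 has a real gap. The bound $\eta_\ell\le \eta/w_{\wh{\mcD}}(\ell)$ is only small enough to preserve an $\Omega(\gamma)$-advantage when $w_{\wh{\mcD}}(\ell)\gtrsim \eta/(\gamma\eps)$, which under $\eta=\Theta(\eps\gamma)$ is a \emph{constant} threshold (your stated threshold $w\gtrsim\eta/\eps$ is too loose: it only gives $\eta_\ell\lesssim\eps$, and the additive loss $O(\eta_\ell)$ then swamps the signal $\gamma\Var_\ell=O(\gamma\eps)$). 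Once the tree has more than a constant number of leaves, no such heavy leaf need exist---and your assertion that ``low-weight leaves together carry $\le\eps/4$ weight'' is unjustified, since nothing prevents all the residual error from sitting on many light leaves. So the progress lemma as you have stated it does not survive past the first few splits.

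The paper's fix is to average rather than threshold. It shows (the key new lemma) that $\E_{\bell\sim(T,\wh{\mcD})}[\eta_{\bell}]\le 2\eta$ regardless of how mass is spread, and then applies Jensen \emph{before} invoking the per-leaf covariance bound:
\[
\E_{\bell}[\Delta_{\bell}]\ \ge\ 16\,\E_{\bell}[|\Cov_{\bell}|]^2\ \ge\ 16\,(\gamma\,\E_{\bell}[\Var_{\bell}]-3\,\E_{\bell}[\eta_{\bell}])_+^2\ \ge\ 16\,(\tfrac{\gamma}{2}\error_{\wh{\mcD}}[T]-6\eta)_+^2,
\]
linearizing the $\eta_\ell$ contribution so that only its expectation matters. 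This yields $\sum_\ell w_\ell\Delta_\ell(h_\ell)\ge 4\gamma^2\eps^2$ whenever the error exceeds $12\eta/\gamma+\eps$, whence the greedy split at step $s$ has gain $\ge 4\gamma^2\eps^2/s$, and the harmonic sum $\sum_{s\le t}1/s\ge\log t$ gives the $\exp(O(1/\gamma^2\eps^2))$ size bound. Your ``depth/branching'' discussion is not this argument; the correct conversion is the $1/s$ averaging followed by the harmonic sum, not a per-path depth count.
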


We now prove a few lemmas that will be useful for our proof of~\Cref{thm:main_formal}. We begin by quantifying how much the impurity function decreases when we split at this particular leaf with a splitting function $h : \mathcal{X} \to \zo$.  The following lemma lower bounds this decrease, $\Delta_{\wh{\mcD}_\ell}(h)$,  in terms of the covariance between $h(\bx)$ and $\by$.  We state the lemma generally for an arbitrary distribution $\mcE$ since we will be applying it to the distributions, ${\wh{\mathcal{D}}_\ell}$, at each leaf.

% a corruption $\wh{\mathcal{D}_\ell}$ of $\mathcal{D}_\ell$:} 

\begin{lemma}[Local drop in $\G$ in terms of covariance]\label{lem:delta_impurity}
Let $\mcE$ be any distribution over $\mathcal{X}\times \zo$ and  $h : \mcX \to \{0,1\}$ be a splitting function. Then:
\begin{equation}
    \label{eq:delta-drop-16}
    \Delta_{\mcE}(h) \geq 16 \cdot \Cov_{\mcE}[h(\bx), \by]^2.
\end{equation}
\end{lemma}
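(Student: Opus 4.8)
The plan is to specialize to the impurity function $\G(p) = 4p(1-p)$ that the paper has fixed, reinterpret $\Delta_\mcE(h)$ as a ``between-groups'' variance via the law of total variance, and then compare it directly against the covariance by parametrizing everything by the split probability and the two conditional means.

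First I would note that since $\by$ takes values in $\zo$ we have $\G(\mu(\mcE)) = 4\mu(\mcE)(1-\mu(\mcE)) = 4\Var_\mcE[\by]$, and likewise $\G(\mu(\mcE_0)) = 4\Var_{\mcE_0}[\by]$ and $\G(\mu(\mcE_1)) = 4\Var_{\mcE_1}[\by]$. Writing $\bb \coloneqq h(\bx)$ and applying the law of total variance $\Var_\mcE[\by] = \Ex_{\bb}\!\big[\Var[\by\mid\bb]\big] + \Var_\mcE\!\big[\Ex[\by\mid\bb]\big]$, the definition of $\Delta_\mcE(h)$ (\Cref{def:purity_gain}) collapses to
\[
    \Delta_\mcE(h) = 4\paren{\Var_\mcE[\by] - \Ex_{\bb}\!\big[\Var[\by\mid\bb]\big]} = 4\,\Var_\mcE\!\big[\Ex[\by\mid\bb]\big].
\]
Next I would evaluate both sides in terms of $p \coloneqq \Prx_\mcE[h(\bx)=1]$, $\mu_0 \coloneqq \Ex[\by\mid\bb=0]$, and $\mu_1 \coloneqq \Ex[\by\mid\bb=1]$. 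The random variable $\Ex[\by\mid\bb]$ is supported on $\{\mu_0,\mu_1\}$ with probabilities $1-p$ and $p$, so $\Var_\mcE\!\big[\Ex[\by\mid\bb]\big] = p(1-p)(\mu_1-\mu_0)^2$ and hence $\Delta_\mcE(h) = 4p(1-p)(\mu_1-\mu_0)^2$. A one-line computation likewise gives $\Cov_\mcE[h(\bx),\by] = \Ex_\mcE[h(\bx)\by] - p\,\mu(\mcE) = p(1-p)(\mu_1-\mu_0)$, so that $\Cov_\mcE[h(\bx),\by]^2 = p^2(1-p)^2(\mu_1-\mu_0)^2 = p(1-p)\cdot \tfrac14\,\Delta_\mcE(h)$. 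Since $p(1-p)\le\tfrac14$, this rearranges to $\Delta_\mcE(h) \ge 16\,\Cov_\mcE[h(\bx),\by]^2$, which is the claim. The degenerate case $p\in\{0,1\}$, where the conditional distributions are undefined, I would dispatch at the outset, since there both $\Delta_\mcE(h)$ and $\Cov_\mcE[h(\bx),\by]$ vanish and the inequality is trivial.

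I do not expect a genuine obstacle: once $\Delta_\mcE(h)$ is rewritten as $4\,\Var_\mcE[\Ex[\by\mid h(\bx)]]$, the bound is an exact identity combined with $p(1-p)\le\tfrac14$. The only points needing care are this degenerate split and the observation that for a general impurity function with $\G''$ bounded away from $0$ one obtains the same inequality up to the value of the constant, via a second-order estimate of the Jensen gap defining $\Delta_\mcE(h)$ — this is the content of \Cref{remark:other-impurity}; for the stated form $\G(p)=4p(1-p)$ the constant $16$ is tight.
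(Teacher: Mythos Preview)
Your proof is correct and follows essentially the same route as the paper's: both parametrize by the split probability $\tau=\Pr[h(\bx)=1]$ and the gap $\delta=\mu_1-\mu_0$, obtain $\Delta_\mcE(h)=4\tau(1-\tau)\delta^2$ and $\Cov_\mcE[h(\bx),\by]=\tau(1-\tau)\delta$, and conclude via $\tau(1-\tau)\le\tfrac14$. The only cosmetic difference is that you derive the identity $\Delta_\mcE(h)=4\tau(1-\tau)\delta^2$ directly from the law of total variance, whereas the paper cites Equation~20 of \citet{KM96} for the same fact.
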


\begin{proof}
This result follows almost directly from \citet{KM96}. See \Cref{apx:delta_impurity} for details.
% Let\anote{move to appendix?} $\tau = \E_\mcE[h(\bx)]$ and $\delta = \E_\mcE[\by ~|~h(\bx) = 1] - \E_\mcE[\by ~|~h(\bx) = 0]$. Equation 20 of~\cite{KM96} states that $\Delta_{\mcE}(h) \ge 4 \tau(1 - \tau)\delta^2$. Then to prove this lemma, it suffices to show that $16 \Cov_{\mcE}[h(\bx), \by]^2 \le 4 \tau(1 - \tau)\delta^2$. We expand the definition of covariance:
% %Lemma 5 of~\cite{KM99} states that $\Delta_{\mcE}(h) \ge 16 (\tau(1 - \tau)\delta)^2$. It suffices to show that $\Cov_{\mcE}[h(\bx), \by] \le \tau(1 - \tau)\delta$ (and in fact they are exactly equal). We expand the definition of covariance: 
% \begin{align*}
%     &\Cov[h(\bx), \by] = \E[h(\bx)\by] - \E[h(\bx)]\E[\by] \\
%     &\quad= \E[\by ~|~ h(\bx) = 1]\E[h(\bx)] - \E[h(\bx)]\E[\by] \\
%     &\quad= \E[h(\bx)] (\E[\by ~|~ h(\bx) = 1] - \E[\by])  \\
%     &\quad= \E[h(\bx)] (\E[\by ~|~ h(\bx) = 1] - \E[\by~|~h(\bx) = 1]\E[h(\bx)]  \\
%      & \qquad   - \E[\by~|~h(\bx) = 0](1 - \E[h(\bx)])) \\
%     &\quad= \E[h(\bx)](1 - \E[h(\bx)]) \\
%     & \qquad \ (\E[\by ~|~ h(\bx) = 1] - \E[\by~|~h(\bx) = 0]) \\
%     &\quad= \tau(1 - \tau)\delta. 
% \end{align*}
% The lemma follows from the fact that $\tau(1 - \tau) \le 1/4$. 
% %\gray{Rough notes: it should be the case that $\Cov_\mcD[h(\bx), \by] = \Var_\mcD[\by](1 - 2\Pr_{\mcD_{\text{Bal}}}[h(\bx) \neq \by])$. Or more simply, $\Cov_\mcD[h(\bx), \by] = \tau(1 - \tau)\delta$ in KM's notation as in Eqn 10 so the result follows from their Lemma 5. I think so anyway, but we should definitely double check. } 
\end{proof}

Our weak learning assumption (\Cref{def:our-WLA}) provides lower bounds on the covariance between $h(\bx)$ and $\by$ on the {\sl uncorrupted} distribution $\mcD_{\ell}$. 
We want to relate this  this to the covariance on an adversarially corrupted distribution $\wh{\mcD}_{\ell}$ that is $\eta_{\ell}$-close to $\mcD_{\ell}$. 

% \gray{The following lemma relates this to the covariance between these two variables on an adversarially corrupted distribution $\wh{\mcD_{\violet{\ell}}}$ that is $\eta_{\violet{\ell}}$-close to $\mcD_{\violet{\ell}}$. }

We will need the following useful facts relating the variance and covariance of bounded functions on two distributions that are close in TV distance. We consider an arbitrary domain $\mcV$ and, without loss of generality, functions from $\mcV$ to $[0,1]$. We defer the proofs to \Cref{apx:moments_tv_dist}.

\begin{lemma}[Moments and TV-distance]\label{lem:moments_tv_dist}
Let $\mcE, \wh{\mcE}$ be two distributions over a domain  $\mcV$ with $\dtv(\mcE, \wh{\mcE}) \leq \eta$ and let $f,g : \mcV \to [0,1]$ be functions. Then 
\begin{align}
    %  |\Ex_{\bx \sim \mcE}[f(\bx)] &- \Ex_{\bx \sim \wh{\mcE}}[f(\bx)] | \leq \eta \\
     | \Var_{\mcE}[f(\bx)] &- \Var_{\wh{\mcE}}[f(\bx)]  | \leq \eta \label{eq:var_tv} \\ 
     | \Cov_{\mcE}[f(\bx), g(\bx)] &- \Cov_{\wh{\mcE}}[f(\bx), g(\bx)]  | \leq  2\eta \label{eq:cov_tv}.
\end{align}
\end{lemma}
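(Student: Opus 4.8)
The plan is to reduce both bounds to one elementary fact together with the subadditivity of total variation distance under products. The elementary fact is: for any $\phi : \mcV \to [a,b]$ and any two distributions $P, Q$ on $\mcV$, $|\E_P[\phi] - \E_Q[\phi]| \le (b-a)\cdot\dtv(P,Q)$. This is immediate from the variational characterization $\dtv(P,Q) = \sup_{\psi : \mcV \to [0,1]} \paren{\E_P[\psi] - \E_Q[\psi]}$ after affinely rescaling $\phi$ into $[0,1]$ and noting that negating $\phi$ swaps the roles of $P$ and $Q$. The other ingredient I will use is that $\dtv(\mcE \times \mcE,\, \wh{\mcE}\times\wh{\mcE}) \le 2\,\dtv(\mcE,\wh{\mcE}) \le 2\eta$, by the standard triangle-inequality argument for product distributions.

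The key move is to write variance and covariance in the ``double-sampling'' form that keeps the integrand bounded:
\begin{align*}
\Var_{\mcE}[f(\bx)] &= \tfrac12\,\E_{(\bx,\bx')\sim\mcE\times\mcE}\big[(f(\bx)-f(\bx'))^2\big], \\
\Cov_{\mcE}[f(\bx),g(\bx)] &= \tfrac12\,\E_{(\bx,\bx')\sim\mcE\times\mcE}\big[(f(\bx)-f(\bx'))(g(\bx)-g(\bx'))\big],
\end{align*}
and identically for $\wh{\mcE}$. For \Cref{eq:var_tv} I apply the elementary fact with $\phi(\bx,\bx') = (f(\bx)-f(\bx'))^2$, which takes values in $[0,1]$ since $f$ maps into $[0,1]$; the difference of its expectations under $\mcE\times\mcE$ and $\wh{\mcE}\times\wh{\mcE}$ is therefore at most $1\cdot 2\eta$, and multiplying by $\tfrac12$ yields the claimed bound $\eta$. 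For \Cref{eq:cov_tv} I take $\phi(\bx,\bx') = (f(\bx)-f(\bx'))(g(\bx)-g(\bx'))$, whose range is contained in $[-1,1]$, so $b - a \le 2$; the difference of expectations is at most $2\cdot 2\eta = 4\eta$, and the factor $\tfrac12$ gives $2\eta$.

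There is no real obstacle here; the only thing worth flagging is precisely the choice of representation. Expanding $\Cov_{\mcE}[f,g] = \E_{\mcE}[fg] - \E_{\mcE}[f]\,\E_{\mcE}[g]$ and bounding each piece separately produces a spurious lower-order term (e.g.\ an extra $\eta^2$, or a worse leading constant), whereas the double-sampling form makes the constants come out exactly as stated. As an alternative for the variance bound alone, one can avoid products entirely via the one-line estimate $\Var_{\mcE}[f] \le \E_{\mcE}\big[(f-\E_{\wh{\mcE}}[f])^2\big] \le \E_{\wh{\mcE}}\big[(f-\E_{\wh{\mcE}}[f])^2\big] + \eta = \Var_{\wh{\mcE}}[f] + \eta$, using that $(f - \E_{\wh{\mcE}}[f])^2 \in [0,1]$, and then symmetrizing in $\mcE$ and $\wh{\mcE}$; I would mention this but present the unified double-sampling argument as the main proof.
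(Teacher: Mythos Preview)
Your proof is correct and essentially identical to the paper's own argument: both use the double-sampling representations $\Var_{\mcE}[f] = \tfrac12\E_{\mcE^2}[(f(\bx)-f(\bx'))^2]$ and $\Cov_{\mcE}[f,g] = \tfrac12\E_{\mcE^2}[(f(\bx)-f(\bx'))(g(\bx)-g(\bx'))]$, combine the bound $\dtv(\mcE^2,\wh{\mcE}^2)\le 2\eta$ with the elementary ``bounded function vs.\ TV distance'' fact, and read off the constants. Your alternative product-free argument for the variance bound is a pleasant extra that the paper does not include.
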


% \begin{proof}
% See \Cref{apx:moments_tv_dist}.
% \end{proof}

We now use \Cref{lem:moments_tv_dist} with our weak learning assumption to prove the existence of an $h$ at that has high covariance on the adversarially corrupted distribution at a given leaf:

\begin{lemma}[Covariance on the corrupted distribution] \label{lem:delta_leaf}
Let $\mcD$ be a distribution over $\mcX \times \{0, 1\}$ and $\mcH$ be a class of splitting functions from $\mathcal{X}$ to $\zo$  that satisfies the $\gamma$-weak learning assumption w.r.t $\mcD$.  For a decision tree $T$ and leaf $\ell$ of $T$, let $\wh{\mcD}_\ell$ be any distribution with $\dtv(\mcD_\ell, \wh{\mcD}_\ell) \leq \eta_\ell$. Then there is an $h_\ell \in \mcH$ s.t.
\begin{equation*}
    | \Cov_{\wh{\mcD}_\ell}[h_\ell(\bx), \by] | \geq
    \gamma \Var_{\wh{\mcD}_\ell}[\by] - 3  \eta_\ell.
\end{equation*}
\end{lemma}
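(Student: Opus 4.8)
The plan is to chain together the weak learning assumption on the uncorrupted distribution with the TV-distance moment bounds from \Cref{lem:moments_tv_dist}. First I would invoke \Cref{def:our-WLA}: since $\mcD_\ell$ is an induced distribution of $\mcD$ conditioned on $\mcH$ (it arises from conditioning on the conjunction of splitting functions along the root-to-$\ell$ path, as noted in the remark following the definition of induced distributions), there exists $h_\ell \in \mcH$ with $|\Cov_{\mcD_\ell}[h_\ell(\bx), \by]| \geq \gamma \Var_{\mcD_\ell}[\by]$. This $h_\ell$ is the hypothesis we will output.

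Next I would transfer both the covariance and the variance across the TV-distance gap. Applying \Cref{eq:cov_tv} with $f = h_\ell$, $g = \by$ (both $\{0,1\}$-valued, hence $[0,1]$-valued), we get $|\Cov_{\wh{\mcD}_\ell}[h_\ell(\bx),\by]| \geq |\Cov_{\mcD_\ell}[h_\ell(\bx),\by]| - 2\eta_\ell$. Applying \Cref{eq:var_tv} with $f = \by$, we get $\Var_{\mcD_\ell}[\by] \geq \Var_{\wh{\mcD}_\ell}[\by] - \eta_\ell$. Combining these with the weak learning bound:
\[
|\Cov_{\wh{\mcD}_\ell}[h_\ell(\bx),\by]| \geq \gamma\Var_{\mcD_\ell}[\by] - 2\eta_\ell \geq \gamma\paren{\Var_{\wh{\mcD}_\ell}[\by] - \eta_\ell} - 2\eta_\ell \geq \gamma\Var_{\wh{\mcD}_\ell}[\by] - 3\eta_\ell,
\]
where the last step uses $\gamma \leq 1$. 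This is exactly the claimed inequality.

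There is essentially no serious obstacle here — the lemma is a routine composition of the weak learning hypothesis with two off-the-shelf perturbation bounds. The only point requiring a small amount of care is the justification that $\mcD_\ell$ genuinely falls under the scope of \Cref{def:our-WLA}, i.e.~that it is an induced distribution obtained by conditioning on a conjunction of hypotheses from $\mcH$; this is immediate from the structure of a decision tree whose internal nodes are labeled by functions in $\mcH$, since reaching $\ell$ is equivalent to satisfying the conjunction of (the appropriate branch values of) the splitting functions on the path to $\ell$, and each such branch condition is either $h(\bx)$ or $1 - h(\bx)$ for $h \in \mcH$ — for threshold classes closed under negation this is literally a conjunction of $\mcH$-functions, and more generally one may take $\mcH$ to be closed under complementation without loss of generality. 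I would also note that one should be slightly careful that $\Var_{\wh\mcD_\ell}[\by] - \eta_\ell$ could in principle be negative, but the final inequality still holds trivially in that case since the left side is nonnegative and the right side $\gamma\Var_{\wh\mcD_\ell}[\by] - 3\eta_\ell \le \gamma\eta_\ell - 3\eta_\ell < 0$; so no case analysis is actually needed.
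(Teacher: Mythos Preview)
Your proposal is correct and follows essentially the same route as the paper's proof: invoke the weak learning assumption on $\mcD_\ell$, then use \Cref{lem:moments_tv_dist} twice (once for covariance, once for variance) to transfer the bound to $\wh{\mcD}_\ell$, absorbing the $\gamma\eta_\ell$ term into $\eta_\ell$ via $\gamma\le 1$. Your additional remarks about why $\mcD_\ell$ is an induced distribution and about the degenerate sign case are harmless elaborations that the paper simply leaves implicit.
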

\begin{proof}
Since $\mathcal{H}$ satisfies the weak learning assumption w.r.t.~$\mathcal{D}$, and since $\mcD_\ell$ is an induced distribution of $\mcD$, there exists an $h_\ell \in \mcH$ s.t. $|\Cov_{\mcD_\ell}[h_\ell(\bx), \by]| \geq \gamma \Var_{\mcD_\ell}[\by]$.
% \begin{equation}
    % \Cov_{\mcD_\ell}[h(\bx), \by] \geq \gamma \Var_{\mcD_\ell}[\by]. 
    % \label{eq:blah}
% \end{equation} 

By \Cref{lem:moments_tv_dist}, since  $\dtv(\mcD_\ell, \wh{\mcD}_\ell) \leq \eta_\ell$, we have:
\begin{align*} 
   |\Cov_{\wh{\mcD}_\ell}[h_\ell(\bx), \by]|
        &\geq |\Cov_{\mcD_\ell}[h_\ell(\bx), \by]| - 2 \eta_\ell \tag{\Cref{lem:moments_tv_dist}} \\ 
                    &\geq  \gamma \Var_{\mcD_\ell}[\by] - 2   \eta_\ell \tag{Weak learning assumption} \\
                    &\geq \gamma \Var_{\wh{\mcD}_\ell}[\by] - 3   \eta_\ell. \tag{\Cref{lem:moments_tv_dist}}
\end{align*}
\end{proof}
We can now prove the theorem:
\begin{proof}[Proof of \Cref{thm:main_formal}]
If $\error_{\wh{\mcD}}[T] < \frac{12\eta}{\gamma} + \eps$,  we are done since it follows that $\error_{\mcD}[T] < \eta + \frac{12\eta}{\gamma} + \eps \le O(\eps)$, by our assumption that $\eta \le O(\eps\gamma)$.

Otherwise, if $\error_{\wh{\mcD}}[T] \ge \frac{12\eta}{\gamma} + \eps$, we  prove the existence of a leaf $\ell^* \in T$ and splitting function $h_{\ell^*} \in \mcH$ such that splitting $\ell^*$ according to  $h_{\ell^*}$ results in a substantial reduction in $\G_{\wh{\mathcal{D}}}(T)$.  In more detail, we consider the expected reduction in $\G_{\wh{\mathcal{D}}}(T)$ if a {\sl random} leaf $\bell\sim (T,\wh{\mathcal{D}})$ is split with the respective $h_{\bell}$ from \Cref{lem:delta_leaf}:

\begin{align*}
    \Ex_{\bell \sim (T,\wh{\mcD})}[\Delta_{\wh{\mcD}_{\bell}}(h_{\bell})] 
        &\geq 16 \cdot 
            \Ex_{\bell \sim (T,\wh{\mcD})}
            \left[
                \Cov_{\wh{\mcD}_\bell}[h_\bell(\bx), \by]^2
            \right] 
            \tag{\Cref{lem:delta_impurity}} \\
        &\geq 16 \cdot 
            \E_{\bell \sim (T,\wh{\mcD})}
            \left[
                |\Cov_{\wh{\mcD}_\bell}[h_\bell(\bx), \by]|
            \right]^2 
            \tag{Jensen's inequality} \\
        &\geq 16 \cdot 
            \relu{\E_{\bell \sim (T,\wh{\mcD})}
            % \E_{\wh{\mcD}}
            \left[
                |\Cov_{\wh{\mcD}_\bell}[h_\bell(\bx), \by]|
            \right] }^2 
            \tag{Since $x^2 \geq \relu{x}^2$} \\
        &\geq 16 \cdot 
            \relu{\E_{\bell \sim (T,\wh{\mcD})}
            % \E_{\wh{\mcD}}
            \left[ 
                \gamma \Var_{\wh{\mcD}_{\bell}}[\by] - 3   \eta_{\bell}
            \right]}^2 
            \tag{\Cref{lem:delta_leaf}} \\
        &= 16 \cdot 
            \relu{
                \gamma \E_{\wh{\mcD}}[\Var_{\wh{\mcD}_{\bell}}[\by]] - 3   \E_{\wh{\mcD}}[\eta_{\bell}]
            }^2,
            \tag{Linearity of expectation} 
\end{align*}

where we use the notation $\relu{x} \coloneqq \max\{0, x\}$. 

% $\eps(\mcD) \coloneqq \min\{\mu(\mcD), 1 - \mu(\mcD)\}$
Since $\Var_{\wh{\mcD}_{\bell}}[\by] = \mu(\wh{\mcD}_{\bell})(1 - \mu(\wh{\mcD}_{\bell}))$, it is clear that $\Var_{\wh{\mcD}_{\bell}}[\by] \geq 1/2 \cdot  \eps(\wh{\mcD}_{\bell})$. Therefore 
\begin{equation}\label{eq:var_geq_err}
    \Ex_{\bell \sim (T, \wh{\mcD})} [\Var_{\wh{\mcD}_{\bell}}[\by]] \geq \error_{\wh{\mcD}}[T]/2.
\end{equation}

All together, we have
\begin{align*}
    \Ex_{\bell \sim (T,\wh{\mcD})}[\Delta_{\wh{\mcD}_{\bell}}(h_{\bell})] 
        &\geq 16 \cdot 
            \relu{
                \gamma \E_{\wh{\mcD}}[\Var_{\wh{\mcD}_{\bell}}[\by]] - 3   \E_{\wh{\mcD}}[\eta_{\bell}]
            }^2 \\
        % &\geq 16 \cdot 
        %     \Ex_{\bell \sim (T,\wh{\mcD})}
        %     \left[
        %         \Cov_{\wh{\mcD}_\ell}[h_\ell(\bx), \by]^2
        %     \right] 
        %     \tag{\Cref{lem:delta_impurity}} \\
        % &\geq 16 \cdot 
        %     \Ex_{\bell \sim (T,\wh{\mcD})}
        %     \left[
        %         |\Cov_{\wh{\mcD}_\ell}[h_\ell(\bx), \by]|
        %     \right]^2 
        %     \tag{Jensen's inequality} \\
        % &\geq 16 \cdot 
        %     \relu{\E_{\wh{\mcD}}
        %     \left[
        %         |\Cov_{\wh{\mcD}_\ell}[h_\ell(\bx), \by]|
        %     \right] }^2 
        %     \tag{$x^2 \geq \relu{x}^2$} \\
        % &\geq 16 \cdot 
        %     \relu{\E_{\wh{\mcD}}
        %     \left[ 
        %         \gamma \Var_{\wh{\mcD}_{\bell}}[\by] - 3   \eta_{\bell}
        %     \right]}^2 
        %     \tag{\Cref{lem:delta_leaf}} \\
        % &\geq 16 \cdot 
        %     \relu{\E_{\wh{\mcD}}
        %     \left[ 
        %         \frac{\gamma}{2}  \cdot  \eps(\wh{\mcD}_{\bell}) - 3   \eta_{\bell}
        %     \right]}^2 
        %     \tag{$\Var \geq \frac{1}{2} \cdot \error$} \\
        % &\geq 16 \cdot \paren{\gamma \Var_{\wh{\mcD}}[\by] - 3  \E_{\wh{\mcD}}[\eta_{\bell}]}^2 \tag{linearity of expectation}  \\
        &\geq 16 \cdot 
            \relu{
                \frac{\gamma}{2} \cdot \error_{\wh{\mcD}}[T] - 
                3\E_{\wh{\mcD}}[\eta_{\bell}]
            }^2  
            \tag{\Cref{eq:var_geq_err}}  \\
        &\geq 16 \cdot 
            \relu{
                \frac{\gamma}{2} \cdot \error_{\wh{\mcD}}[T] - 
                6\eta
            }^2  
            \tag{$\E_{\wh{\mcD}}[\eta_{\bell}] \leq 2\eta$ by \Cref{lem:TV-leaves}}\\
        % &\geq 16 \cdot \paren{\frac{\gamma \eps}{2} + 6 \eta -  3\E_{\wh{\mcD}}[\eta_{\bell}]}^2 \tag{by assumption}\\
        &\geq 16 \cdot \relu{\frac{\gamma \eps}{2} + 6 \eta -  6\eta}^2 \tag{by assumption} \\
        &\geq 4 \gamma^2 \eps^2.
\end{align*}

% \begin{align*}
%     \Ex_{\bell \sim (T,\wh{\mcD})}[\Delta_{\wh{\mcD}_{\bell}}(h_{\bell})] 
%         &\geq 16 \cdot \E_{\wh{\mcD}}\left[\relu{\gamma \Var_{\wh{\mcD}_{\bell}}[\by] - 3   \eta_{\bell}}^2\right] \tag{\Cref{lem:delta_leaf}} \\
%         % &\geq 16 \cdot \paren{\Ex[\relu{ \gamma \Var_{\wh{\mcD}_{\bell}}[\by] - c \cdot  \eta_{\bell}}]}^2 
%         &\geq 16 \cdot \relu{\E_{\wh{\mcD}}[ \gamma \Var_{\wh{\mcD}_{\bell}}[\by] - 3   \eta_{\bell}]}^2 \tag{Jensen's inequality} \\
%         &\geq 16 \cdot \relu{\E_{\wh{\mcD}}[ \gamma \cdot  \eps(\wh{\mcD}_{\bell}) - 3   \eta_{\bell}]}^2 \tag{$\Var \geq \error$} \\
%         % &\geq 16 \cdot \paren{\gamma \Var_{\wh{\mcD}}[\by] - 3  \E_{\wh{\mcD}}[\eta_{\bell}]}^2 \tag{linearity of expectation}  \\
%         &\geq 16 \cdot \relu{\gamma \error_{\wh{\mcD}}[T] - 3  \E_{\wh{\mcD}}[\eta_{\bell}]}^2  \tag{linearity of expectation}  \\
%         &\geq 16 \cdot \paren{\gamma \eps + 6 \eta -  3\E_{\wh{\mcD}}[\eta_{\bell}]}^2 \tag{by assumption}\\
%         &\geq 16 \cdot \relu{\gamma \eps + 6 \eta -  6\eta}^2 \tag{$\E_{\wh{\mcD}}[\eta_{\bell}] \leq 2\eta$ by \Cref{lem:TV-leaves}}\\
%         &\geq 16 \gamma^2 \eps^2.
% \end{align*}
Rewriting the expectation, we get
\begin{equation*}
    \sum_{\ell \in T} w_{\wh{\mcD}}(\ell^*) \Delta_{\wh{\mcD}_{\ell^*}} (h_{\ell^*}) \geq 4 \gamma^2 \eps^2.
\end{equation*}
If there are currently $s$ leaves in $T$, there must exist a leaf $\ell^\star$ such that
%Therefore, if $s$ denotes the current size of $T$, there must exist a leaf $\ell^*$ such that 
\begin{equation*}
    w_{\wh{\mcD}}(\ell^*) \Delta_{\wh{\mcD}_{\ell^*}} (h_{\ell^*}) \geq \frac{4 \gamma^2 \eps^2}{s}.
\end{equation*}
Since $\textsc{TopDownDT}$ greedily splits the leaf that results in the largest drop in $\G_{\wh{\mathcal{D}}}(T)$, it follows that the drop in $\G$ at timestep $s$ is at least least $4 \gamma^2 \eps^2 / s$. Therefore,  after $t$ steps, the total drop is at least:
\begin{align*}
    4 \gamma^2 \eps^2 \paren{1 + \frac{1}{2} + \frac{1}{3} + \ldots + \frac{1}{t}} \geq  4 \gamma^2\eps^2  \log t.
\end{align*}

Since the range of $\G_{\wh{\mathcal{D}}}$ is $[0,1]$, we conclude that after $t = \exp(O(1/\gamma^2\eps^2))$ steps, we must be done i.e. $\error_{\wh{\mcD}}[T] < \lfrac{12\eta}{\gamma} + \eps$, and hence $\error_{\mathcal{D}}[T]\le O(\eps)$, since $\eta \le O(\eps \gamma )$.
% Since $\G$ starts of at most as $1$, after $t \geq \exp(O(1 / \gamma^2 \eps^2))$ steps, we will have desired $\error_{\wh{\mcD}}[T] < c\eta/\gamma + \eps$ and thus $\error_{\mcD}[T] < \eta + c\eta/\gamma + \eps$. Since $\eta < O(\gamma \eps)$, this gives $\error_{\mcD}[T] \leq O(\eps)$.
\end{proof}

\section{Proof of \Cref{thm:lb-intro}: Optimality of our parameters} 

\Cref{thm:main_formal} says that $\BuildTD$ can grow a tree with error $\leq \eps$ only when $\eta \leq O(\eps \gamma)$, where $\eta$ is the amount of corruption and $\gamma$ is the weak learning advantage. Here, we show that this bound is tight: If we allow $\eta = \tilde{O}(\eps \gamma)$, then we can design distributions $\wh{\mathcal{D}}$ on which $\BuildTD$ fails to achieve error $\leq \eps$.

We remark that for technical convenience, in both this section and \Cref{sec:monotone}, we switch to functions outputting $\bits$ rather than $\zo$. The two formulations are equivalent.

%In order to formally state this lower bound, it's important to specify two aspects of $\BuildTD$. First, we'll assume that if there is a tie among hypotheses with the largest reduction in impurity, the algorithm will break the tie by uniformly selecting among all maximal hypotheses. Second, we'll prove a lower bound against a stronger version of the algorithm. Rather than have access to only limited samples and need to estimate the reduction of impurity, we'll assume that reduction can be exactly computed.\gnote{Strictly speaking this is not stronger. In out previous submission, we prove that for \emph{upper} bounds of SQ algorithms, it is weaker. However, I don't think our lower bound technically implies a lower bound for the sample-based version. If our formal pseudocode is written with expectations rather than samples, then we can just refer to that diagram and not get into the technical differences.}

\begin{theorem}[Formal version of \Cref{thm:lb-intro}]
    \label{thm:lb}
    For any $\eps, \gamma >0$ where $\gamma^{1/\gamma} \leq \eps$, $d \in \N$, and $\eta \geq \Omega(\gamma \eps \log(1/\eps))$. There is a distribution $\mcD$ whose marginal over $\mcX \coloneqq \bits^d$ is uniform, an $\eta$-nasty noise corruption $\wh{\mcD}$ of $\mcD$, and a hypothesis class, $\mcH$, satisfying the $\gamma$-weak learning assumption w.r.t $\mcD$, such that for all impurity function $\mcG$, $\BuildTD_{\mcH, \mcG, \wh{\mcD}}(t)$ fails to build an $\eps$-error tree for $\mcD$ unless $t \geq 2^{d - O(\log(1/\gamma)/\gamma)}$.
\end{theorem}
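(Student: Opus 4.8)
The plan is to build a distribution $\mcD$ over $\bits^d\times\bits$ with uniform marginal, an $\eta$-nasty corruption $\wh\mcD$, and the dictator class $\mcH=\{x\mapsto x_i\}$, such that (i) $\mcH$ satisfies the $\gamma$-weak learning assumption w.r.t.\ $\mcD$, yet (ii) the deterministic trajectory of $\BuildTD_{\mcH,\mcG,\wh\mcD}$ produces, for every $t<2^{d-O(\log(1/\gamma)/\gamma)}$ and every impurity $\mcG$, a tree whose leaves form subcubes on which $\mcD$ is in aggregate $\eps$-far from constant. The clean label will come from a \Tribes-type gadget (an \AntiMaj-type gadget would do equally well) on the $d$ coordinates, with tribe width $w\approx\log(1/\gamma)$ chosen so that every feature has influence $\Theta(\gamma)$. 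Part (i) then follows because for a dictator $h(x)=x_i$ one has $|\Cov_{\mcD'}[x_i,\by]|\approx\tfrac12\Inf_i$ on every induced subcube distribution $\mcD'$, and influences of a \Tribes-type function only grow under restriction --- except when a tribe gets completed, in which case $\mcD'$ is pure and the requirement of \Cref{def:our-WLA} is vacuous. The number of coordinates that a small tree would actually need to ``resolve'' is $\Theta(\log(1/\gamma)/\gamma)$, which is exactly the additive loss in the exponent of the conclusion.

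For (ii), the point is that $\eta=\tilde\Theta(\eps\gamma)$ is the right corruption budget: it is small enough that, by \Cref{lem:moments_tv_dist}, every clean $\gamma$-advantage hypothesis at a leaf survives on $\wh\mcD$ with only $O(\eta_\ell)$ slack (so (i) stays honest and the weak-learning signal is real but never large enough to rescue a small tree), while being large enough --- because deep leaves have tiny weight --- to let the adversary spend its budget locally and steer $\BuildTD$ toward splits that expand the tree without reducing $\error_\mcD$. The budget accounting is controlled by \Cref{lem:TV-leaves}, $\sum_\ell w_\mcD(\ell)\eta_\ell\le 2\eta$: I would allocate this across the $\approx\log(1/\eps)$ top levels of the tree, using at each level a corruption of the $\BuildTD$-visited leaf that, on $\wh\mcD$, either makes a useless-on-$\mcD$ coordinate the unique maximizer of $w_{\wh\mcD}(\ell)\Delta_{\wh\mcD_\ell}(h)$ or makes $\wh\mcD_\ell$ look pure (so $\Delta=0$ for all $h$ and all $\mcG$, forcing $\BuildTD$ to expand elsewhere). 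Iterating shows the algorithm is starved of splits that help on $\mcD$ and is driven to refine essentially all $d$ coordinates before it can drive $\error_\mcD$ below $\eps$.

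The quantitative heart, and the step I expect to be the main obstacle, is showing that no decision tree of size $2^{d-O(\log(1/\gamma)/\gamma)}$ along this corrupted trajectory achieves $\error_\mcD[\cdot]\le\eps$. Here I would invoke the Kahn--Kalai--Linial theorem \cite{KKL88}: any subcube restriction of the \Tribes gadget of co-dimension below $d-O(\log(1/\gamma)/\gamma)$ still contains a coordinate of influence $\Omega(\gamma)$, hence has variance bounded away from $0$, hence incurs error bounded away from $0$ under the majority-label prediction; summing these contributions over the large-weight family of $\BuildTD$'s leaves that have not yet reached that co-dimension yields $\error_\mcD\ge\Omega(\eps)$. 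Making the union-over-leaves accounting tight enough to reach the near-optimal $\eta\ge\tilde\Omega(\eps\gamma)$ threshold (rather than a weaker $\Omega(\sqrt\gamma)$-type bound) is precisely where the hypothesis $\gamma^{1/\gamma}\le\eps$, i.e.\ $\log(1/\eps)\lesssim\log(1/\gamma)/\gamma$, is used: it certifies that the $\approx\log(1/\eps)$ levels over which the adversary can ``blind'' the algorithm are a vanishing fraction of the $\approx\log(1/\gamma)/\gamma$ levels it would need to resolve, so the two scales compose without loss. The remaining bookkeeping --- pinning down the weak-learning constant, handling leaves where a tribe has been completed, and checking that every step is impurity-function-agnostic --- is routine given \Cref{lem:delta_impurity} and the definitions.
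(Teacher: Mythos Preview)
Your plan diverges from the paper's in its two load-bearing choices, and the divergences create real gaps.

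The paper does not put the $\Tribes$ gadget on all $d$ coordinates. It takes $k=O(\log(1/\gamma)/\gamma)$, builds a monotone $g:\bits^k\to\bits$ with $\min_b\Pr[g=b]\ge2\eps$ and all $k$ correlations equal to some common $v=\tilde O(\eps\gamma)$ (\Cref{prop:biased-f}), and sets $f(x)=g(x_{[1:k]})$ so that the last $d-k$ coordinates are genuinely irrelevant to the label. The corruption is a single global mixture $\wh\mcD=(1-\eta)\mcD+\eta\mcE$, where $\mcE$ sets $\bx_i=-\by$ for $i\le k$; choosing $\eta$ so that $(1-\eta)v=\eta$ makes $\E_{\wh\mcD}[\by\mid\bx_i=+1]=\E_{\wh\mcD}[\by\mid\bx_i=-1]$ for every $i$, hence purity gain exactly zero for every $h\in\mcH$ and every impurity $\mcG$ at the root. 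Since conditioning on any of the $d-k$ irrelevant coordinates leaves $\wh\mcD$ unchanged, tie-breaking can be resolved to pick only irrelevant coordinates indefinitely, and the error stays at $\eps(\mcD)\ge2\eps$ until $t\ge2^{d-k}$. KKL enters only to certify the $\gamma$-weak learning assumption on the clean $\mcD$ (\Cref{prop:junta-wl}); it plays no role in the error lower bound.

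Your proposal has two concrete problems. First, the adaptive per-leaf corruption is too expensive: making a leaf $\ell$ ``look pure'' on $\wh\mcD_\ell$ requires $\eta_\ell\ge\eps(\mcD_\ell)\approx\eps$, and already at the root (where $w(\ell)=1$) this exceeds the total budget $\eta=\tilde O(\eps\gamma)$. The paper's trick works precisely because it cancels \emph{correlations} of size $\tilde O(\eps\gamma)$, not biases of size $\eps$. Second, with $\Tribes$ on all $d$ coordinates there are no ``useless-on-$\mcD$'' hypotheses to steer the algorithm toward, and your invocation of KKL is inverted: KKL says $\max_i\Inf_i\ge\Omega(\tfrac{\log k}{k})\Var$, so high variance implies some high influence (this is how the paper extracts weak learning), not the converse; the implication ``influence $\Omega(\gamma)\Rightarrow$ variance bounded away from $0$'' fails already for an $\mathrm{AND}$ of many variables. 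Once you introduce the $d-k$ irrelevant coordinates and use the global correlation-cancelling mixture, both the error lower bound and the budget accounting become one-liners, and the adaptive strategy is unnecessary.
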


\paragraph{Proof sketch.} We will design a function $f:\bits^d \to \bits$ that only depends on its first $k$ features (meaning $f(x) = g(x_{[1:k]})$ for some function $g: \bits^k \to \bits$) for $k \ll d$ and set $\mcD$ to the distribution of $(\bx, f(\bx))$ where $\bx$ is uniform from $\bits^d$. This function will be carefully designed so that there is an $\eta$-corruption $\wh{\mcD}$ of $\mcD$ in which \emph{every} hypothesis $h \in \mcH$ has a local drop in $\mcG$ of $0$. As a result, $\BuildTD$ cannot identify the ``important" $k$ hypotheses and is likely to pick one of the $d - k$ useless (because they are independent of the label) hypothesis. That continues until all $d-k$ useless hypotheses have been used, which requires the tree to have depth $d-k$ corresponding to a size of $2^{d-k}$.

To formalize the above proof sketch, we will need to prove that the $\mcD$ we design satisfies the weak-learning hypothesis. To do so, we use the celebrated Kahn–Kalai–Linial inequality from the analysis of Boolean functions.
\begin{fact}[KKL inequality, \cite{KKL88}]
    \label{fact:kkl}
    For any function $f:\bits^k \to \bits$ and $\mathcal{D}_X$ the uniform distribution over $\bits^k$, there is a coordinate $i \in [k]$ for which
    \begin{equation*}
        \Inf_i(f) \geq \Omega\left(\frac{\log k}{k} \cdot \Varx_{\bx \sim \mcD_X}[f(\bx)]\right).
    \end{equation*}
    %where $x^{\oplus i}$ refers to the input $x$ with the $i^{\text{th}}$ coordinate flipped.
\end{fact}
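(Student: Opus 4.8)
The statement is the Kahn--Kalai--Linial theorem, so the plan is to reprove it by the standard hypercontractivity (Bonami--Beckner) argument of~\cite{KKL88}. Since the hypotheses already place us on $\bits^k$ under the uniform measure, no preprocessing is needed: I would expand $f = \sum_{S \subseteq [k]} \widehat{f}(S)\,\chi_S$ in the Fourier basis, write $V \coloneqq \Varx_{\bx \sim \mcD_X}[f(\bx)] = \sum_{S \neq \emptyset}\widehat{f}(S)^2$, and note that for $\bits$-valued $f$ the factor $2$ in~\Cref{def:inf} is exactly what makes $\Inf_i(f) = \sum_{S \ni i}\widehat{f}(S)^2$, so that $\sum_{i \in [k]} \Inf_i(f) = \sum_S |S|\,\widehat{f}(S)^2$. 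Set $\delta \coloneqq \max_{i} \Inf_i(f)$; the goal is $\delta = \Omega\paren{\tfrac{\log k}{k}\,V}$, and we may assume $\delta$ is below a suitable absolute constant and $k$ above one, since otherwise the claim is trivial.

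The first step bounds influences from below via hypercontractivity on the discrete derivatives. For each $i$, the derivative $D_i f(x) \coloneqq \tfrac12\paren{f(x^{i\to 1}) - f(x^{i\to -1})}$ is $\{-1,0,1\}$-valued with $\widehat{D_i f}(S \setminus\{i\}) = \widehat{f}(S)$ for $S \ni i$, whence $\|D_i f\|_q^q = \Inf_i(f)$ for every $q \ge 1$ (all moments of a $0$--$1$-valued quantity coincide). Feeding $g = D_i f$ into $\|\mathrm{T}_\rho g\|_2 \le \|g\|_{1+\rho^2}$ gives, for every $\rho \in (0,1)$,
\[
\sum_{S \ni i}\rho^{2(|S|-1)}\widehat{f}(S)^2 \;=\; \|\mathrm{T}_\rho D_i f\|_2^2 \;\le\; \Inf_i(f)^{2/(1+\rho^2)}.
\]
Summing over $i$, using $\#\{i : i \in S\} = |S|$ on the left and $\Inf_i(f)^{2/(1+\rho^2)} \le \delta^{(1-\rho^2)/(1+\rho^2)}\,\Inf_i(f)$ on the right, yields
\[
\sum_{S}|S|\,\rho^{2(|S|-1)}\widehat{f}(S)^2 \;\le\; \delta^{(1-\rho^2)/(1+\rho^2)}\textstyle\sum_{i}\Inf_i(f) \;\le\; k\,\delta^{2/(1+\rho^2)}.
\]

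The second step is a degree-truncation lower bound on the left-hand side. Because $\sum_S |S|\widehat{f}(S)^2 = \sum_i \Inf_i(f) \le k\delta$, the Fourier weight strictly above level $D \coloneqq \lceil 2k\delta/V\rceil$ is at most $\tfrac12 V$, so the weight on $1 \le |S| \le D$ is at least $\tfrac12 V$; on that range $|S|\,\rho^{2(|S|-1)} \ge \rho^{2(D-1)}$, so the displayed bound becomes $\rho^{2(D-1)}\cdot\tfrac12 V \le k\,\delta^{2/(1+\rho^2)}$. Taking $\rho$ an absolute constant strictly below $1$ (so $\rho^{2(D-1)} = 2^{-\Theta(D)}$ and the exponent $2/(1+\rho^2)$ is a constant strictly below $2$), passing to logarithms, and solving for $D = \Theta(k\delta/V)$ forces $k\delta/V = \Omega(\log k)$, using one short bootstrap to absorb the $\log(1/\delta)$ term into the $\log k$ term; this is exactly $\delta = \Omega\paren{\tfrac{\log k}{k}V}$, and any coordinate attaining $\Inf_i(f) = \delta$ finishes the proof.

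\textbf{Main obstacle.} The two displayed inequalities are routine once hypercontractivity is quoted; the genuinely delicate point is the final parameter balancing — choosing the noise rate $\rho$ and truncation level $D$ so that the competing $2^{-\Theta(D)}$ and $\delta^{2/(1+\rho^2)}$ factors cancel to leave a clean $\log k$ gain rather than, say, $\log k/\log\log k$. In a full writeup I would either optimize $\rho$ as a function of the ratio $\delta/V$, or — since the paper uses this only as a black box — simply cite the standard statement of~\cite{KKL88} (equivalently a Level-$k$/small-set-expansion consequence of hypercontractivity) and omit the reproof.
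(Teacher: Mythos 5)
The paper does not supply a proof of this statement: it is imported as a black-box citation to Kahn--Kalai--Linial, so there is no in-paper argument to compare against. Your reproof is the standard hypercontractivity argument and, as far as I can check, it is correct. The bookkeeping observations are all right: under \Cref{def:inf} the rerandomized influence of a uniform bit equals the classical Fourier-analytic influence $\sum_{S\ni i}\widehat f(S)^2$, so the factor of $2$ cancels; $D_i f$ is $\{-1,0,1\}$-valued with $\|D_i f\|_q^q=\Inf_i(f)$ for all $q\ge 1$; plugging $D_i f$ into $(2,1+\rho^2)$-hypercontractivity and summing over $i$ gives
\[
\sum_S |S|\,\rho^{2(|S|-1)}\widehat f(S)^2 \;\le\; k\,\delta^{2/(1+\rho^2)},
\]
and truncating at $D=\lceil 2k\delta/V\rceil$ (legitimate by Markov, since total influence is $\le k\delta$) gives $\rho^{2(D-1)}V/2\le k\delta^{2/(1+\rho^2)}$.

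Your worry about the final ``parameter balancing'' is overstated: taking $\rho$ a fixed absolute constant (e.g.\ $\rho=1/2$) and writing $p=2/(1+\rho^2)\in(1,2)$, one rearranges $\rho^{2(D-1)}\le 2k\delta^p/V=2\,(k\delta/V)\,\delta^{p-1}\le 2D\,\delta^{p-1}$ and takes logs to get $D\ge c\ln(1/\delta)$ for an absolute $c>0$; then a trivial case split on whether $\delta\le k^{-1/2}$ (where $\ln(1/\delta)\ge\tfrac12\ln k$) or $\delta>k^{-1/2}$ (where $k\delta/V\ge k\delta\ge\sqrt k\ge\ln k$, using $V\le 1$) yields $k\delta/V=\Omega(\log k)$ cleanly, with no $\log\log k$ loss. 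The degenerate regimes ($k$ bounded, or $\delta$ bounded below by a constant) are indeed trivial as you say, the former also following from the Poincar\'e inequality $\sum_i\Inf_i(f)\ge\Var[f]$. Given that the paper only uses the fact as a citation, the pragmatic choice you mention at the end --- cite \cite{KKL88} and omit the reproof --- is exactly what the paper does; your proof is a correct expansion of that citation should one want it inline.
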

The KKL inequality will allow us to prove that a broad class of distributions satisfy the weak-learning assumption.
\begin{definition}[Monotone functions]
    \label{def:monotone}
    We say that a Boolean function $f:\bits^n \to \bits$ is \emph{monotone} if for any $x,y \in \bits^n$ where $x_i \leq y_i$ for all $i \in [n]$, $f(x) \leq f(y)$.
\end{definition}

Combining \Cref{fact:kkl} with \Cref{lem:inf-corr} immediately gives the following.
\begin{corollary} 
    \label{cor:monotone-cov}
    For any monotone function $g: \bits^k \to \bits$, there is a coordinate $i \in [k]$ satisfying, for $\bx$ uniform from $\bits^k$
    \begin{equation*}
        \Cov\left[\bx_i, g(\bx) \right] \geq \Omega\left(\frac{\log k}{k} \cdot \Var[g(\bx)]\right).
    \end{equation*}
\end{corollary}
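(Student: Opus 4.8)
The plan is to reduce the claimed covariance bound to the Kahn--Kalai--Linial inequality (\Cref{fact:kkl}) via a single structural identity for monotone functions: under the uniform distribution on $\bits^k$ we have $\Cov[\bx_i, g(\bx)] = \Inf_i(g)$ for every monotone $g$ and every $i \in [k]$. This identity is exactly the content of \Cref{lem:inf-corr}, so once it is in hand the corollary is immediate: apply \Cref{fact:kkl} to $g$ itself to obtain a coordinate $i$ with $\Inf_i(g) \ge \Omega\!\paren{\tfrac{\log k}{k}\cdot \Varx_{\bx}[g(\bx)]}$, and substitute to conclude $\Cov[\bx_i, g(\bx)] \ge \Omega\!\paren{\tfrac{\log k}{k}\cdot \Var[g(\bx)]}$.

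To establish the identity I would argue as follows. Since $\Ex_{\bx}[\bx_i] = 0$ under the uniform distribution, $\Cov[\bx_i, g(\bx)] = \Ex[\bx_i\, g(\bx)] = \hat g(\{i\})$, the degree-one Fourier coefficient of $g$. Writing $D_i g(x) \coloneqq \tfrac12\paren{g(x^{i\to 1}) - g(x^{i\to -1})}$ for the $i$-th discrete derivative, a short Fourier computation gives $\Ex_{\bx}[D_i g(\bx)] = \hat g(\{i\})$; and monotonicity forces $g(x^{i\to 1}) \ge g(x^{i\to -1})$, so $D_i g(x) \in \{0,1\}$ and is precisely the indicator that coordinate $i$ is pivotal at $x$. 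Hence $\hat g(\{i\}) = \Prx_{\bx}[g(\bx) \ne g(\bx^{\oplus i})]$. Finally I would reconcile this with \Cref{def:inf}: re-randomizing the $i$-th bit leaves it unchanged with probability $\tfrac12$, so $\Inf_i(g) = 2\Prx_{\bx,\bb}[g(\bx)\ne g(\bx_{i=\bb})] = \Prx_{\bx}[g(\bx)\ne g(\bx^{\oplus i})]$, which closes the chain $\Cov[\bx_i, g(\bx)] = \Inf_i(g)$. (Equivalently, pairing each $x$ with $x^{\oplus i}$ makes this transparent without Fourier language: monotonicity ensures the pair $\{x, x^{\oplus i}\}$ contributes a nonnegative amount to $\Ex[\bx_i g(\bx)]$, and a positive amount exactly when $i$ is pivotal there.)

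The only real care needed is bookkeeping: matching the re-randomization-with-factor-two influence of \Cref{def:inf} to the standard bit-flip influence on the uniform distribution, and invoking the classical fact that for monotone $g$ this quantity equals the first-level Fourier weight $\hat g(\{i\})$. Neither step is a genuine obstacle so much as a sanity check on conventions; beyond them the corollary is a one-line consequence of \Cref{fact:kkl}, which is why the text calls it ``immediate.''
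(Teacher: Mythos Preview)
Your proposal is correct and follows exactly the approach the paper intends: invoke \Cref{lem:inf-corr} to identify $\Cov[\bx_i, g(\bx)]$ with $\Inf_i(g)$ for monotone $g$, then apply \Cref{fact:kkl} to produce a coordinate with $\Inf_i(g) \ge \Omega\!\paren{\tfrac{\log k}{k}\cdot \Var[g(\bx)]}$. Your additional Fourier derivation of the influence--covariance identity is a valid reproof of \Cref{lem:inf-corr} in the uniform case and matches the paper's appendix argument in content, so nothing further is needed.
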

% \begin{proof}
    %  We claim that for any $i \in [k]$%\anote{Could we reuse the thing from the monotone DT section? Or move the monotone DT one to the appendix and reference it both here and in that section }
    % \begin{equation*}
    %     \Cov\left[\bx_i ,g(\bx) \right] = \Pr\left[g(\bx) \neq g(\bx^{\oplus i})\right],
    % \end{equation*}
    % which implies the desired result by the KKL inequality. For any $x \in \bits^k$, $i \in [k]$, and $b \in \bits$, let $x_{i = b}$ be $x$ with $x_i$ overwritten to the value $b$. Then,
    % \begin{align*}
    %     \Pr\left[g(\bx) \neq g(\bx^{\oplus i})\right] &= \frac{1}{2} \cdot \E\left[|g(\bx) - g(\bx^{\oplus i})|\right] \tag{$g(x) \in \bits$} \\
    %     &= \frac{1}{2} \cdot \E\left[g(\bx_{i =1}) - g(\bx_{i=-1})\right] \tag{$g$ is monotone} \\
    %     &= \E[g(\bx)\bx_i] \tag{Linearity of expectation} \\
    %     % &= \Cov[\bx_i, g(\bx)] + \E[\bx_i]\E[g(\bx)] \\
    %     &= \Cov[\bx_i, g(\bx)] \tag{since $\E[\bx_i] = 0$.}
    % \end{align*}
% \end{proof}

We apply the above corollary to prove a class of distributions satisfying the weak-learning assumption. Before doing so, we'll need the notion of a \emph{restriction}: 

\paragraph{Restrictions.}
Given some domain $\mcX \coloneqq \bits^d$, a \emph{restriction} of that domain is a defined by value for each coordinate, $\rho \in \{-1, +1, \star\}^d$. An input $x \in \mcX$ is said to be \emph{consistent} with a restriction $\rho$, if $x_i = \rho_i$ for all $i \in [d]$ where we define $\star$ to be equal to both $+1$ and $-1$. The coordinates $i$ where $\rho_i \in \bits$ are said to be \emph{specified}, and we define $|\rho|$ to be the number of coordinates specified. The number of inputs consistent with a restriction $\rho$ is $2^{d - |\rho|}$, and there is a natural projection $\proj_{\rho}$ from $\bits^{d - |\rho|}$ to the subset of $\mcX$ consistent with $\rho$ that uses the input to $\proj_{\rho}$ for the unspecified coordinates and fills in the specified coordinates according to $\rho$.

\begin{proposition}
    \label{prop:junta-wl}
    For any $k \leq d$ and monotone function $g: \bits^k \to \bits$, let $f:\bits^d \to \bits$ be the function that computes $f(x) = g(x_{[1:k]})$ where $x_{[1:k]}$ is the first $k$-bits of $x$ and $\mathcal{D}$ be the distribution over $(\bx, f(\bx))$ where $\bx$ is uniform in $\mcX \coloneqq \bits^d$. Then, the set of coordinate projections, $\mathcal{H} \coloneqq \{\mathrm{proj}_i: i \in [d]\}$, satisfies the $(\gamma = O((\log k)/k))$-weak learning assumption (\Cref{def:our-WLA}) w.r.t. $\mcD$.
\end{proposition}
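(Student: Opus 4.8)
The plan is to show that every distribution induced by conditioning $\mcD$ on $\mcH$ is, after relabeling coordinates, the uniform distribution over a subcube paired with a \emph{monotone} function of the free coordinates, and then to read off a $\gamma$-advantage coordinate directly from \Cref{cor:monotone-cov}. Concretely, fix any distribution $\mcD'$ induced by conditioning $\mcD$ on $\mcH$. Since $\mcH = \{\proj_i : i \in [d]\}$ consists of single-coordinate functions, $\mcD'$ is the distribution of $(\bx, f(\bx))$ with $\bx$ uniform over the set of inputs consistent with some restriction $\rho \in \{-1,+1,\star\}^d$. Let $F \coloneqq \{i \in [k] : \rho_i = \star\}$ and $m \coloneqq |F|$. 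Because $f(x) = g(x_{[1:k]})$ depends only on the first $k$ coordinates, identifying the subcube with $\bits^F$ via $\proj_\rho$ shows that $\Var_{\mcD'}[\by] = \Var_{\bx \sim \bits^F}[g_\rho(\bx)]$ and $\Cov_{\mcD'}[\proj_i(\bx), \by] = \Cov_{\bx \sim \bits^F}[\bx_i, g_\rho(\bx)]$ for every $i \in F$, where $\bx$ is uniform over $\bits^F$ and $g_\rho : \bits^F \to \bits$ is the restriction of $g$ obtained by fixing the coordinates of $[k]\setminus F$ according to $\rho$.

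The key observation is that the restriction of a monotone function to any subcube is again monotone, so $g_\rho$ is monotone (\Cref{def:monotone}). I then finish with a short case analysis on $m$. If $g_\rho$ is constant --- which in particular holds whenever $m = 0$ --- then $\Var_{\mcD'}[\by] = 0$ and every $h \in \mcH$ vacuously satisfies $|\Cov_{\mcD'}[h(\bx),\by]| \ge \gamma\,\Var_{\mcD'}[\by]$. If $m = 1$ and $g_\rho$ is non-constant, then $g_\rho$ is the dictator of its single free coordinate $i$, so $h \coloneqq \proj_i$ satisfies $|\Cov_{\mcD'}[h(\bx),\by]| = \Var_{\mcD'}[\by] \ge \gamma\,\Var_{\mcD'}[\by]$ (assuming, as we may, $\gamma \le 1$). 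Finally, if $m \ge 2$, applying \Cref{cor:monotone-cov} to the monotone function $g_\rho$ on $m$ bits yields a coordinate $i^\star \in F$ with $\Cov_{\bx \sim \bits^F}[\bx_{i^\star}, g_\rho(\bx)] \ge \Omega\big((\log m)/m\big)\cdot \Var_{\bx \sim \bits^F}[g_\rho(\bx)]$; since $m \le k$ and $(\log t)/t = \Omega((\log k)/k)$ for every integer $2 \le t \le k$, the hypothesis $h \coloneqq \proj_{i^\star} \in \mcH$ witnesses $|\Cov_{\mcD'}[h(\bx),\by]| \ge \Omega((\log k)/k)\cdot \Var_{\mcD'}[\by]$. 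Since $\mcD'$ was an arbitrary induced distribution, $\mcH$ satisfies the $\gamma$-weak learning assumption of \Cref{def:our-WLA} with $\gamma = \Omega((\log k)/k)$.

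The only mildly delicate points are small: one must confirm that for this particular $\mcH$ the induced distributions are exactly uniform distributions over subcubes (so that $g_\rho$ is a bona fide Boolean function to which \Cref{cor:monotone-cov} applies), and one must treat the degenerate cases $m \in \{0,1\}$ by hand, since the bound of \Cref{cor:monotone-cov} carries a factor $(\log m)/m$ that is useless when $m \le 1$. Neither is a real obstacle; all of the substantive content is already packaged inside \Cref{cor:monotone-cov}, i.e., inside the KKL inequality (\Cref{fact:kkl}), and no further Fourier-analytic input is needed. This is precisely the weak-learning guarantee that the hard instance of \Cref{thm:lb} will invoke.
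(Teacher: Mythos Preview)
Your proof is correct and follows essentially the same approach as the paper: identify the induced distribution with the uniform distribution over a subcube, pass to the monotone restriction $g_\rho$ on the $m \le k$ free relevant coordinates, and invoke \Cref{cor:monotone-cov} together with the monotonicity of $t \mapsto (\log t)/t$ on $[e,\infty)$ to conclude. You are in fact slightly more careful than the paper's own write-up in explicitly dispatching the degenerate cases $m \in \{0,1\}$, where the bound $(\log m)/m$ from \Cref{cor:monotone-cov} is vacuous.
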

\begin{proof}
    Let $\mcD'$ be any induced distribution of $\mcD$ by $\mcH$. Our goal is to show that \Cref{eq:weak_learning_req} is satisfied, or equivalently, that there is some $i \in [d]$ for which
    \begin{equation*}
        \Cov_{\bx \sim \mcD_X'}[\bx_i, f(\bx)] \geq \gamma \Var_{\bx \sim \mcD_X'}[f(\bx)].
    \end{equation*}
    As $\mcH$ is the set of coordinate projections and $\mcD_X$ is uniform over $\bits^d$, every induced distribution $\mcD'_X$ corresponds to the uniform distribution over all elements of $\mcX$ consistent with some restriction $\rho$. Given that restriction $\rho$, we can define the function $f_{\rho}: \bits^{d - |\rho|} \to \bits$ defined $f_{\rho}(x) = f(\proj_{\rho}(x))$.
    
    As $f$ is a monotone function of the first $k$ coordinates of its input, $f_{\rho}$ is a monotone function of the first (up to $k$) coordinates of its input. Hence, there is some $k' \leq k$ and monotone $g_{\rho}:\bits^{k'} \to \bits$ for which $f_\rho(x) = g_\rho(x_{[1:k']})$. Then,
    \begin{align*}
        \max&_{i \in [d]} \left(\underset{\bx \sim \mcD_X'}{\Cov}[\bx_i, f(\bx)] \right)  \\
        &= \max_{i \in [d - |\rho|]} \left(\underset{\bx \sim  \bits^{d - |\rho|}}{\Cov}[\bx_i, f_\rho(\bx)] \right) \\
        &= \max_{i \in [k']} \left(\underset{\bx \sim  \bits^{k'}}{\Cov}[\bx_i, g_\rho(\bx)] \right) \\
        &\geq \Omega\left(\frac{\log k'}{k'} \cdot \Varx_{\bx \sim \bits^{k'}}[g_{\rho}(\bx)]\right) \tag{\Cref{cor:monotone-cov}}\\
         &\geq \gamma \Var_{\bx \sim \mcD_X'}[f(\bx)] \tag{since $k' \leq k$, $\gamma = O((\log k)/k)$}.
    \end{align*}
    This means that $\mcH$ satisfies our weak learning assumption w.r.t. $\mcD$.
\end{proof}

 To design the distribution $\mcD$ of \Cref{thm:lb}, we'll use the following proposition.
  \begin{proposition}
    \label{prop:biased-f}
    For any $\eps \in (0, 1/3]$ and $d \geq \log_2(1/\eps)$, for some integer $k \leq d$, there is a monotone function $f: \bits^k \to \bits$ where, for $\bx \sim \bits^k$ chosen uniformly,
    \begin{equation*}
        \min_{b \in \bits}\Pr[f(\bx) = b] \geq \eps
    \end{equation*}
    and,
    \begin{equation}
        % \label{eq:correlations-high}
        \Ex[\bx_1 f(\bx)] = \cdots = \Ex[\bx_k f(\bx)] = O\left(\eps \log\paren{\frac{1}{\eps}} \cdot \frac{\log d}{d}\right).
    \end{equation}
\end{proposition}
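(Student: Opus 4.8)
The plan is to realize $f$ as a \emph{biased Tribes-type} read-once formula and then tune its two parameters. Fix a block width $w$ and a number of blocks $m$, set $k \coloneqq mw$, partition $[k]$ into blocks $B_1, \dots, B_m$ of size $w$, and define
\[
  f(x) \;\coloneqq\; \bigwedge_{j=1}^{m}\Big(\,\bigvee_{i\in B_j} x_i\,\Big),
\]
using the order $-1 < +1$ on $\bits$, so that $\bigvee_{i\in B_j}x_i = +1$ unless every coordinate of $B_j$ equals $-1$. This $f$ is monotone, and it is invariant under every permutation of $[k]$ that permutes the blocks and permutes coordinates within each block; since such permutations act transitively on $[k]$, the quantities $\E[\bx_i f(\bx)]$ all agree, which already yields the chain of equalities in the statement. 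Two elementary facts about uniform $\bx\in\bits^k$ drive the rest: $\Pr[f(\bx)=+1] = (1-2^{-w})^m$; and, since $f$ is monotone, $\E[\bx_i f(\bx)] = \Inf_i(f)$ (\Cref{lem:inf-corr}), which for $i\in B_j$ equals $2^{-(w-1)}(1-2^{-w})^{m-1}$ --- coordinate $i$ is pivotal exactly when the other $w-1$ coordinates of $B_j$ are all $-1$ \emph{and} each of the remaining $m-1$ blocks has its \textsc{Or} equal to $+1$, two events on disjoint sets of coordinates.

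I would treat the main regime $d \ge 8\ln(1/\eps)$ and the complementary bounded regime separately. In the main regime, let $w$ be the largest integer with $2^w w\ln(1/\eps)\le d$; since $8\ln(1/\eps)\le d$ this forces $w\ge 2$. Let $m$ be the largest integer with $(1-2^{-w})^m\ge\eps$; because $(1-2^{-w})^2\ge(3/4)^2>1/3\ge\eps$ we have $m\ge2$, and maximality gives $\eps\le(1-2^{-w})^m<\eps/(1-2^{-w})\le 2\eps$, so $\Pr[f=+1]\in[\eps,2\eps)$ and $\Pr[f=-1]\in(1-2\eps,1-\eps]\subseteq[\tfrac13,1)$; hence $\min_b\Pr[f=b]\ge\eps$. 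Moreover $(1-2^{-w})^{m-1}=(1-2^{-w})^m/(1-2^{-w})<2\eps/(3/4)<3\eps$, so $\E[\bx_i f(\bx)] = 2^{-(w-1)}(1-2^{-w})^{m-1}<6\eps\cdot 2^{-w}$. It remains to relate $2^{-w}$ to $d$. From $(1-2^{-w})^m\ge\eps$ and $-\ln(1-2^{-w})\ge 2^{-w}$ we get $m\le 2^w\ln(1/\eps)$, hence $k=mw\le 2^w w\ln(1/\eps)\le d$ (so $k\le d$, as required) and $2^{-w}\le w\ln(1/\eps)/k$; comparing against $w+1$ and using $-\ln(1-2^{-w})\le\tfrac43 2^{-w}$ to lower-bound $m$ gives the matching $k\ge d/8=\Omega(d)$, while $2^w w\ln(1/\eps)\le d$ with $w\ln(1/\eps)\ge 1$ gives $2^w\le d$, i.e.\ $w\le\log_2 d$. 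Substituting, $\E[\bx_i f(\bx)]<6\eps\cdot w\ln(1/\eps)/k = O\!\big(\eps\log(1/\eps)\cdot(\log d)/d\big)$.

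For the complementary regime $\log_2(1/\eps)\le d<8\ln(1/\eps)$, where $d=O(\log(1/\eps))$, I would instead take $f$ to be a single \textsc{And} of $k\coloneqq\max\{1,\lfloor\log_2(1/\eps)\rfloor\}\le d$ coordinates: then $\Pr[f=+1]\in[\eps,2\eps]$ (it is $2^{-k}$, or $\tfrac12$ when $k=1$), so $\min_b\Pr[f=b]\ge\eps$, and $\E[\bx_i f(\bx)]=\Inf_i(f)=2^{-(k-1)}\le 4\eps$ (or $1$ when $k=1$); since $d=O(\log(1/\eps))$ and $\log d\ge 1$, this is $O(\eps\log(1/\eps)(\log d)/d)$ once the implicit constant is large enough. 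The only genuinely delicate point is the parameter bookkeeping in the main regime --- choosing $w$ so that simultaneously $k\le d$, $k=\Omega(d)$, and $w\le\log_2 d$ --- and that is where I expect to have to be careful; the probabilistic identities themselves are routine.
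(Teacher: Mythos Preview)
Your proposal is correct and takes essentially the same approach as the paper: both construct a read-once Tribes-type formula and tune the width and number of blocks so that the function is $\Theta(\eps)$-biased with all coordinate correlations equal to $O(\eps\log(1/\eps)\cdot(\log d)/d)$. The only cosmetic difference is that the paper uses $\Tribes$ (an \textsc{Or} of \textsc{And}s) whereas you use its dual (an \textsc{And} of \textsc{Or}s), and your parameter bookkeeping---including the separate treatment of the small-$d$ regime---is somewhat more explicit than the paper's asymptotic argument.
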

The proof of \Cref{prop:biased-f} is given in \Cref{apx:lb}.

Finally, we prove the main result of this section.
\begin{proof}[Proof of \Cref{thm:lb}]
    Let
    \begin{equation*}
        \ell \coloneqq \ceil*{O\left(\frac{\log(1/\gamma)}{\gamma}\right)}.
    \end{equation*}
    Note as we assume that $\gamma^{1/\gamma} \leq \eps$, we have that $\ell \geq \log_2(1/\eps)$. Therefore, by \Cref{prop:biased-f}, we know for some $k \leq \ell$, there exists a monotone $g: \bits^k \to \bits$ where for $\bx \sim \bits^k$ chosen uniformly,
    \begin{equation}
    \label{eq:high-bias}
        \min_{b \in \bits}\Pr[g(\bx) = b] \geq 2\eps
    \end{equation}
    and,
    \begin{equation}\label{eq:correlations}
        \begin{aligned}
        v &\coloneqq \Ex[\bx_1 g(\bx)] = \cdots = \Ex[\bx_k g(\bx)] \\
        &\quad = O\left(\eps \log(1/\eps) \cdot \frac{\log \ell}{\ell}\right) \\
        &\quad = O\left(\eps \log(1/\eps) \cdot \gamma\right).
        \end{aligned}
    \end{equation}
    
    Let $f:\bits^d \to \bits$ be the function that computes $f(x) = g(x_{[1:k]})$, and let $\mcD$ be the distribution over $(\bx, f(\bx))$ where $\bx$ is uniform in $\mcX \coloneqq \bits^d$. Then, by \Cref{prop:junta-wl}, the class of coordinate projection, $\mathcal{H} \coloneqq \{\mathrm{proj}_i: i \in [d]\}$, satisfies the $\gamma$-weak learning assumption (\Cref{def:our-WLA}) w.r.t. $\mcD$.
    
    Next, we define the corrupted distribution $\wh{\mcD}$. Let $\mcE$ be the distribution where, to sample $(\bx, \by) \sim \mcE$, we first draw $\by$ uniformly in $\bits$ and $\bz$ uniformly in $\bits^{d-k}$. Then, we set $\bx$ to be
    \begin{equation*}
        \bx = (\underbrace{-\by, \ldots, -\by}_{\text{$k$ copies}}) \circ \bz
    \end{equation*}
    where $\circ$ represents concatenation. Then, we set the corrupted distribution to be the mixture $\wh{\mcD} \coloneqq (1 - \eta) \mcD + \eta \mcE$
    where $\eta$ is chosen as the unique solution of $(1-\eta)v - \eta = 0$ where $v$ is as defined in \Cref{eq:correlations}. Note that this solution satisfies $\eta \leq v = O(\eps \log(1/\eps)\gamma)$. As $\wh{\mcD}$ is a mixture with $(1-\eta)$ fraction coming from $\mcD$, it is an $\eta$-nasty noise corruption of $\mcD$. Furthermore, as the contribution of $\mcE$ is chosen to exactly cancel out the correlations of $\mcD$, for all $i \in [d]$,
    \begin{equation*}
        % \Ex_{\bx, \by \sim \wh{\mcD}}[\by] = \Ex_{\bx, \by \sim \wh{\mcD}}[\by  \mid \bx_i = -1] = \Ex_{\bx, \by \sim \wh{\mcD}}[\by  \mid \bx_i = +1] = 0.
        \E_{\wh{\mcD}}[\by] = \E_{\wh{\mcD}}[\by  \mid \bx_i = -1] = \E_{\wh{\mcD}}[\by  \mid \bx_i = +1] = 0.
    \end{equation*}
    
    This means that for \emph{any} impurity function $\mcG$, all hypotheses have a local drop in $\mcG$ of $0$. Furthermore, all projections except for the first $k$ are fully independent of one another and of the label. Therefore, if all internal nodes in the tree consists of projections for the last $d-k$ coordinates, then all hypotheses at every leaf will still have impurity gain.
    
    As a result, $\BuildTD$ will choose an arbitrary $(\ell^\star, h^\star)$ at each iteration. Unless $t \geq 2^{d - k}$, these arbitrary decisions can lead to the complete tree of depth $\log(t)$ being built, where all internal nodes have a hypothesis for one of the $d-k$ projection functions that are independent of $\by$.
    
    In that case, by \Cref{eq:high-bias} for every leaf $\ell$, $ \min_{b \in \bits}\Pr_{\mcD_\ell}[g(\bx) = b] \geq 2\eps$. Therefore, regardless of how $\BuildTD$ labels the leaves, the resulting tree will have error $\geq 2\eps$.
\end{proof}

\section{Proof of \Cref{thm:monotone-intro}: Learning monotone decision trees in the presence of nasty noise}
\label{sec:monotone}

In this section we consider distributions $\mathcal{D}$ for which the marginal $\mathcal{D}_X$ is an arbitrary product distribution $\mathcal{D}_X = \mcD_X^{(1)} \times \ldots \times \mcD_X^{(d)}$ over $\bits^d$ (i.e. each bit is independent) and the deterministic target function $\mathcal{D}_{Y|X}$ is monotone and representable by a size-$s$ decision tree.  %In this setting, we show how~\Cref{thm:main_formal} implies that impurity-based decision tree learning algorithms outperform existing algorithms~\cite{LMN93,KKMS08,BLT-ICML,BLT-ICALP} in the presence of nasty noise.

% We will need the notion of {\sl feature influences}: 

% \begin{definition}[Influence]
% \label{def:inf}
% Let $f : \bits^d \to \bits$ be a function. For $i\in [d]$, the {\sl influence of feature $i$ on $f$} is the quantity $2\Prx_{\bx \sim \mcD, \bb \sim \mcD^{(i)}}[f(\bx)\ne f(\bx_{i = \bb})]$, where $\bx_{i = \bb}$ denotes re-randomising the $i$-th bit of $\bx$ with a random sample from $\mcD_i$.
% \end{definition} 

The following is a useful fact about the influence of features on {\sl monotone} functions. See \Cref{apx:inf-corr} for a proof.

%In this section, we specialise \Cref{thm:main_formal} to obtain the first algorithm for learning monotone decision trees in the presence of nasty noise. This is an improvement over a result of \cite{BLT-ICML}, which only considers agnostic noise and assumes the target function is monotonic after corruption. 

%In this setting our domain $\mcX$ will be the $d$-dimensional boolean hypercube $\{\pm 1 \}^d$ and we will consider everything with respect to the uniform distribution over this hypercube. We first show that the class of projection functions satisfy the weak learning assumption. For this purpose, we use a simple fact about monotone functions and the key O'Donnell–Saks–Schramm–Servedio inequality \cite{OSSS05}:

\begin{lemma}[Influence = covariance for monotone functions]
\label{lem:inf-corr}
Let $\mcD_X = \mcD_X^{(1)} \times \ldots \times \mcD_X^{(d)}$ be an arbitrary product distribution over $\bits^d$. For a monotone function $f : \{\pm 1\}^d \to \bits$ and a feature $i\in [d]$, we have the identity $\Inf_i(f) =  \Cov_{\mcD_X}[f(\bx), \bx_i]$.
\end{lemma}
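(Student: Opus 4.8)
The plan is to establish both sides of the identity by conditioning on the coordinates $\bx_{-i}$ other than $i$ and reducing everything to a one–dimensional computation built around the discrete derivative
\[
  \partial_i f(\bx_{-i}) \coloneqq \tfrac12\big(f(\bx_{-i},+1) - f(\bx_{-i},-1)\big).
\]
Throughout I would write $p_i \coloneqq \Prx_{\mcD_X^{(i)}}[\bx_i = +1]$, so that $\Varx_{\mcD_X^{(i)}}[\bx_i] = 4p_i(1-p_i)$, and since $\bx_i$ and the re-randomized copy $\bb$ are i.i.d.\ from $\mcD_X^{(i)}$, $\Prx_{\bx_i,\bb}[\bx_i\ne\bb] = 2p_i(1-p_i) = \tfrac12\Varx[\bx_i]$.

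First I would handle the influence term (\Cref{def:inf}). Fixing $\bx_{-i}$ and averaging only over the $i$-th coordinate together with its re-randomization $\bb$: the event $f(\bx)\ne f(\bx_{i=\bb})$ can occur only when $f(\bx_{-i},+1)\ne f(\bx_{-i},-1)$, i.e.\ only when $\partial_i f(\bx_{-i})\ne 0$, and in that case it occurs precisely when $\bx_i\ne\bb$. Hence $\Prx_{\bx_i,\bb}[f(\bx)\ne f(\bx_{i=\bb})\mid \bx_{-i}] = \mathbf{1}[\partial_i f(\bx_{-i})\ne 0]\cdot 2p_i(1-p_i)$, and averaging over $\bx_{-i}$ (using that the coordinates are independent) gives $\Inf_i(f) = 4p_i(1-p_i)\cdot \Prx_{\bx_{-i}}[\partial_i f(\bx_{-i})\ne 0]$.

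Next I would compute the covariance. Since $\bx_i$ is independent of $\bx_{-i}$, $\Cov_{\mcD_X}[f(\bx),\bx_i] = \Ex_{\bx_{-i}}\big[\Cov[f(\bx),\bx_i\mid\bx_{-i}]\big]$, and a direct two-line expansion with $f_{\pm}\coloneqq f(\bx_{-i},\pm1)$ yields $\Cov[f(\bx),\bx_i\mid\bx_{-i}] = 2p_i(1-p_i)(f_+ - f_-) = 4p_i(1-p_i)\,\partial_i f(\bx_{-i})$. Taking the expectation over $\bx_{-i}$, $\Cov_{\mcD_X}[f(\bx),\bx_i] = 4p_i(1-p_i)\cdot\Ex_{\bx_{-i}}[\partial_i f(\bx_{-i})]$.

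Finally I would match the two expressions using monotonicity (\Cref{def:monotone}): if $f$ is monotone then $f(\bx_{-i},+1)\ge f(\bx_{-i},-1)$ for every fixed $\bx_{-i}$, so $\partial_i f(\bx_{-i})\in\{0,1\}$ (it is never negative), whence $\partial_i f(\bx_{-i}) = \mathbf{1}[\partial_i f(\bx_{-i})\ne 0]$ pointwise. Substituting into the two displays above gives $\Inf_i(f) = 4p_i(1-p_i)\,\Prx_{\bx_{-i}}[\partial_i f\ne 0] = \Cov_{\mcD_X}[f(\bx),\bx_i]$, as claimed. I do not expect a genuine obstacle here: the computation is elementary, and the only care needed is (i) carrying the coordinate bias $p_i$ through so that the common factor $4p_i(1-p_i) = \Varx[\bx_i]$ appears identically on both sides, and (ii) noting that monotonicity is used exactly once and is essential — without it the covariance suffers signed cancellations on coordinates where $f$ is locally decreasing, while the influence never does, so the identity is genuinely specific to monotone $f$.
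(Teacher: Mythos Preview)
Your proposal is correct and essentially the same as the paper's proof. The paper computes both sides as $2p_i(1-p_i)(\alpha-\beta)$ with $\alpha=\Ex[f\mid\bx_i=+1]$ and $\beta=\Ex[f\mid\bx_i=-1]$, while you condition one level deeper on all of $\bx_{-i}$ and work with the pointwise discrete derivative $\partial_i f$; since $\alpha-\beta = 2\,\Ex_{\bx_{-i}}[\partial_i f]$ under the product distribution, the two presentations are equivalent, and monotonicity is invoked in the same way (to turn a $\{-1,0,1\}$-valued difference into a $\{0,1\}$-valued one).
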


The key technical ingredient in our proof of~\Cref{thm:monotone-intro} is a theorem of O'Donnell, Saks, Schramm, and Servedio from discrete Fourier analysis~\cite{OSSS05}: 

%OSSS implies that WLA can be satisfied. 

\begin{theorem}[OSSS inequality]
\label{thm:OSSS}
Let $f : \bits^d \to \bits$ be  function that is representable by a size-$s$ decision tree and $\mcD_X$ be a product distribution over $\bits^d$. Then 
\begin{equation}
    \max_{i \in [d]} \{\Inf_i(f) \} \geq \frac{\Var[f]}{\log s},
\end{equation}
where $\Inf_i(f)$ and $\Var[f]$ are with respect to $\mcD_X$.
\end{theorem}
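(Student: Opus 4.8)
The plan is to obtain this as a consequence of the OSSS inequality proper --- the main theorem of \cite{OSSS05} --- which I would cite rather than reprove. That inequality states: for \emph{any} decision tree $T$ that computes $f$ and any product distribution $\mcD_X$,
\[
\Var[f] \;\le\; \sum_{i\in[d]} \delta_i(T)\cdot \Inf_i(f),
\qquad
\delta_i(T) \coloneqq \Prx_{\bx\sim\mcD_X}\!\big[\text{the root-to-leaf path of }\bx\text{ in }T\text{ queries coordinate } i\big].
\]
Granting this, \Cref{thm:OSSS} follows by replacing each $\Inf_i(f)$ on the right by $\max_j\Inf_j(f)$: $\Var[f]\le\big(\sum_i\delta_i(T)\big)\cdot\max_i\Inf_i(f)$, so the only remaining task is to bound $\sum_i\delta_i(T)\le\log_2 s$ where $T$ is the size-$s$ tree that represents $f$.

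For that bound, note that in a reduced decision tree each coordinate is queried at most once on any given path, so $\sum_i\delta_i(T)$ equals the expected number of queries on a random input, i.e.\ the expected depth of the leaf reached by $\bx\sim\mcD_X$. When $\mcD_X$ is uniform, a leaf $\ell$ at depth $\mathrm{depth}(\ell)$ is reached with probability exactly $2^{-\mathrm{depth}(\ell)}$, so this expectation is $\sum_\ell 2^{-\mathrm{depth}(\ell)}\cdot\mathrm{depth}(\ell)$, which is the Shannon entropy of the distribution over leaves and is therefore at most $\log_2(\#\mathrm{leaves})=\log_2 s$. Substituting back gives $\Var[f]\le \log_2 s\cdot\max_i\Inf_i(f)$, which rearranges to the statement. (For a general product $\mcD_X$ this is the delicate step: the expected path length of the given tree need not be $O(\log s)$ --- think of a ``caterpillar'' tree under a product measure heavily biased along its spine --- and one must instead feed the OSSS inequality a better-balanced decision tree for $f$; I would defer to the argument in \cite{OSSS05} here.)

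The genuinely hard ingredient, and the one I would not try to reprove, is the OSSS inequality itself. Its proof in \cite{OSSS05} is a coupling/martingale argument that reveals the queried coordinates of a random input one at a time and extracts the $\delta_i$-weighted influence bound from the martingale increments --- and the weights $\delta_i$ are essential, since the naive martingale bound only yields the weaker Poincar\'e inequality $\Var[f]\le\sum_i\Inf_i(f)$. If a self-contained argument were wanted for the monotone case relevant to \Cref{sec:monotone}, an alternative route is induction on the tree size: writing $f = \big(\bx_i\,?\,f_1:f_0\big)$ at the root, one has $\Inf_j(f)=p\,\Inf_j(f_1)+(1-p)\,\Inf_j(f_0)$ for every $j$ off the root and $\Inf_i(f)$ of order $p(1-p)\,\lvert\E f_1-\E f_0\rvert$ at the root, and one balances these against $\Var[f]=1-(\E f)^2$ and the inductive bounds for $f_0,f_1$; this is elementary but needs a short case analysis to pin down the constant multiplying $\log s$.
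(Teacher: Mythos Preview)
The paper does not prove \Cref{thm:OSSS} at all: it is stated as a black-box result attributed to \cite{OSSS05} and invoked directly in the proof of \Cref{lem:dt_wla}. Your proposal---cite the OSSS inequality in its $\Var[f]\le\sum_i\delta_i(T)\Inf_i(f)$ form and deduce the corollary by bounding $\sum_i\delta_i(T)\le\log_2 s$---is therefore already more than the paper offers, and is the standard way one would justify the stated form if asked.

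Your flag about general product distributions is well taken and worth emphasizing. The entropy argument you give for the uniform case is clean, but as you note, for a biased product measure the expected path length of the \emph{given} size-$s$ tree can be $\Theta(s)$ rather than $O(\log s)$ (your caterpillar example is exactly right). Since the paper applies \Cref{thm:OSSS} under an arbitrary product distribution in \Cref{lem:dt_wla} and \Cref{thm:monotone_dt}, this is not a cosmetic point: the $\log s$ bound as stated genuinely requires either a more careful choice of tree fed into OSSS or a direct appeal to a product-distribution statement in \cite{OSSS05}, and you are right to defer to that reference rather than claim the uniform-case argument suffices. The paper simply asserts the product-distribution version without comment.
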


For the class of distributions described at the beginning of this section,~\Cref{lem:inf-corr} and~\Cref{thm:OSSS} together imply that the weak learning assumption of~\Cref{thm:main_formal} can be satisfied by the set $\mathcal{H} = \{\mathrm{proj}_i : i \in [d]\}$ of projection functions:  % next get that for any distribution $\mathcal{D}$ for which $\mathcal{D}_X$ is the uniform distribution over $\mathcal{X} = \bits^d$, the set $\mathcal{H} = \{ \mathrm{proj}_i \colon i\in [d]\}$ of projection functions satisfies the weak learning assumption with respect to $\mathcal{D}$:  
%An immediate corollary of these two facts is that the projection functions of each of the coordinates satisfy the weak learning assumption needed for \Cref{thm:main_formal}.

\begin{lemma}[Projection functions satisfy weak learning assumption]
\label{lem:dt_wla} 
Let $\mathcal{D}$ be a distribution for which the marginal $\mathcal{D}_X$ is a product distribution over $\mathcal{X} = \bits^d$ and the target function $f \coloneqq \mathcal{D}_{Y|X}$ is monotone and can represented as a size-$s$ decision tree.  The set $\mathcal{H} = \{\mathrm{proj}_i : i \in [d]\}$ of projection functions satisfies the $\gamma$-weak learning assumption w.r.t.~$\mathcal{D}$ with $\gamma = 1/\log s$.
\end{lemma}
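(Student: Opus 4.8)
The plan is to follow the same template as the proof of \Cref{prop:junta-wl}, but with the OSSS inequality (\Cref{thm:OSSS}) in place of the KKL-based \Cref{cor:monotone-cov}, and tracking decision-tree size instead of junta arity. So the whole argument is essentially: ``induced distribution $\Rightarrow$ restriction of $f$; restrictions preserve monotonicity and decision-tree size; apply OSSS; convert influence to covariance via \Cref{lem:inf-corr}.''

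First I would fix an arbitrary distribution $\mcD'$ induced by conditioning $\mcD$ on $\mcH$ and reduce to a subcube restriction. Since $\mcH$ is the set of coordinate projections and $\mcD_X$ is a product distribution, $\mcD'_X$ is exactly $\mcD_X$ restricted to the set of inputs consistent with some restriction $\rho \in \{-1,+1,\star\}^d$: the specified coordinates of $\rho$ are frozen and the unspecified ones remain mutually independent with their original marginals. Because $\by = f(\bx)$ is deterministic, I define $f_\rho : \bits^{d - |\rho|} \to \bits$ by $f_\rho(x) = f(\proj_\rho(x))$, so that $\Var_{\mcD'}[\by] = \Var[f_\rho]$ and, for every unspecified coordinate $i$, $\Cov_{\mcD'}[\proj_i(\bx),\by] = \Cov[\bx_i, f_\rho(\bx)]$, where all variances and covariances on the right are taken with respect to the product distribution on the free coordinates. (Specified coordinates contribute covariance $0$, so it suffices to maximize over the free ones.) If $\Var_{\mcD'}[\by] = 0$ the requirement \Cref{eq:weak_learning_req} holds trivially, so assume otherwise.

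Next I would record that $f_\rho$ inherits both structural properties of $f$: it is monotone, since a restriction of a monotone function is monotone; and it is computable by a decision tree of size at most $s$, since restricting the size-$s$ decision tree for $f$ to the subcube defined by $\rho$ yields a decision tree over the free variables of size $\le s$. Applying \Cref{thm:OSSS} to $f_\rho$ and the product distribution over the free coordinates then yields a free coordinate $i$ with $\Inf_i(f_\rho) \ge \Var[f_\rho]/\log s$. Finally, since $f_\rho$ is monotone, \Cref{lem:inf-corr} gives $\Inf_i(f_\rho) = \Cov[\bx_i, f_\rho(\bx)]$, so combining the displays yields $|\Cov_{\mcD'}[\proj_i(\bx),\by]| = \Inf_i(f_\rho) \ge \Var[f_\rho]/\log s = \Var_{\mcD'}[\by]/\log s$, which is exactly \Cref{eq:weak_learning_req} with $\gamma = 1/\log s$. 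As $\mcD'$ was an arbitrary induced distribution, the lemma follows.

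The argument is largely routine, and the only steps needing genuine care are the structural reductions at the start: that every distribution induced by conditioning on coordinate projections is a subcube restriction, and that restriction simultaneously preserves monotonicity and does not increase decision-tree size. Once those are in hand, OSSS and \Cref{lem:inf-corr} are black boxes and the conclusion is immediate; I would also note in passing that \Cref{thm:OSSS}, though stated for functions on $\bits^d$, applies verbatim to $f_\rho$ on its $d - |\rho|$ relevant variables.
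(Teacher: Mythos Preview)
Your proposal is correct and takes essentially the same approach as the paper: reduce an arbitrary induced distribution to a subcube restriction, observe that the restricted target remains monotone and is computable by a decision tree of size at most $s$, then apply \Cref{thm:OSSS} together with \Cref{lem:inf-corr}. The paper's proof is terser---it does not spell out the restriction $\rho$ or the edge case $\Var_{\mcD'}[\by]=0$ explicitly---but the logical skeleton is identical.
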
 

\begin{proof} 
We first note that there is an $h\in \mathcal{H}$ that is a $\gamma$-advantage hypothesis with respect to $\mathcal{D}$: 
\[ \Cov[f(\bx),\bx_i] = \Inf_i(f) \ge \frac{\Var[f]}{\log s},\]
where we have used~\Cref{lem:inf-corr} for the equality and~\Cref{thm:OSSS} for the inequality.  Since $\mathcal{H}$ is the set of projection functions, every distribution $\mathcal{D}'$ that is induced by conditioning $\mathcal{D}$ on $\mathcal{H}$ is such that $\mathcal{D}'_X$ is a product distribution over $\bits^S$ for some $S \sse [d]$. Similarly, since $f$ is monotone and representable by a size-$s$ decision tree, the same remains true for any restriction of $f$ by the projection functions in $\mathcal{H}$.  Therefore, we can again apply~\Cref{lem:inf-corr} and~\Cref{thm:OSSS} to infer the existence of a $\gamma$-advantage hypothesis $h\in \mathcal{H}$ with respect to $\mathcal{D}'$.  
\end{proof} 

% \gray{ 
% \begin{corollary}[Projection functions satisfy weak learning assumption]\label{corr:dt_wla}
% Let $f : \{\pm 1\}^d \to \{0, 1\}$ be a monotone function that is representable by a size-$s$ decision tree. Let $\mcH = \{\mathrm{proj}_i : i \in [d]\}$ be the set of projection functions for each coordinate in $d$. Then the uniform distribution over $\{\pm 1\}^d$ satisfies the $1/(\log s)$-weak learning assumption w.r.t $\mcH$.
% \end{corollary}

% \begin{proof}\jnote{this proof may not be necessary, thoughts?}
% Any distribution $\mcD$ induced by $\mcH$ on the uniform distribution is uniform over a subset of $\bits^d$, and the restriction of $f$ to $\mcD$ is a monotone decision tree of size $\le s$. Then by \Cref{thm:OSSS}, there exists $i \in [d]$ such that $(\Inf_i(f))_\mcD \ge \frac{\Var(f)_\mcD}{\log s}$. Then by \Cref{fact:inf-corr}, we have $\Cov(f(x), x_i)_\mcD \ge  \frac{\Var(f)_\mcD}{\log s}$, as desired.
% \end{proof}
% }

\Cref{thm:monotone-intro} is now an immediate consequence of~\Cref{thm:main_formal} and~\Cref{lem:dt_wla}: 

\begin{theorem}[Formal version of~\Cref{thm:monotone-intro}]\label{thm:monotone_dt}
Let $\mathcal{D}$ be a distribution for which the marginal $\mathcal{D}_X$ is a product distribution over $\mathcal{X} = \bits^d$ and the target function $\mathcal{D}_{Y|X}$ is monotone and can be represented as a size-$s$ decision tree.  For any impurity function $\G$, noise rate $\eta \le O(\eps/\log s)$, and distribution $\wh{\mathcal{D}}$ such that $\dtv(\wh{\mathcal{D}},\mathcal{D}) \le \eta$, the algorithm $\textsc{TopDownDT}_{\mathcal{H,\G,\wh{S}}}(t)$ where $t \coloneqq s^{O((\log s)/\eps^2)}$ runs in $\poly(d)\cdot s^{O((\log s)/\eps^2)}$ time and constructs a size-$t$ decision tree hypothesis $T$ satisfying $\error_\mathcal{D}[T]\le \eps.$
\end{theorem}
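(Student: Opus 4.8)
The plan is to obtain this statement directly from the two ingredients already in place: the weak-learning guarantee of \Cref{lem:dt_wla} and the noise-tolerant boosting theorem \Cref{thm:main_formal}. First I would apply \Cref{lem:dt_wla} to the given $\mathcal{D}$: because $\mathcal{D}_X$ is a product distribution over $\bits^d$ and the target $\mathcal{D}_{Y|X}$ is monotone and computable by a size-$s$ decision tree, the class $\mathcal{H} = \{\proj_i : i \in [d]\}$ of coordinate projections satisfies the weak learning assumption of \Cref{def:our-WLA} with advantage $\gamma = 1/\log s$. This is exactly the hypothesis required to invoke \Cref{thm:main_formal}.

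Next I would substitute $\gamma = 1/\log s$ into \Cref{thm:main_formal}. Its noise-rate condition $\eta \le O(\eps\gamma)$ becomes $\eta \le O(\eps/\log s)$, matching the hypothesis of the theorem; and for any $\wh{\mathcal{D}}$ with $\dtv(\mathcal{D},\wh{\mathcal{D}}) \le \eta$, \Cref{thm:main_formal} guarantees that $\BuildTD_{\mathcal{H},\G,\wh{\mathcal{D}}}(t)$ outputs a tree of error $O(\eps)$ on $\mathcal{D}$ as soon as $t \ge \exp(O(1/\gamma^2\eps^2)) = \exp(O((\log s)^2/\eps^2)) = s^{O((\log s)/\eps^2)}$, which is precisely the claimed size bound. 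The only bookkeeping is to absorb the hidden constant in the $O(\eps)$ error: running the boosting analysis with accuracy parameter $c\eps$ for a suitable absolute constant $c$ yields final error at most $\eps$ while altering $\eta$ and $t$ only by constant factors.

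Finally I would bound the running time. The algorithm performs $t$ greedy splits; each iteration searches over at most $t$ leaves and $|\mathcal{H}| = d$ projections, and for each candidate $(\ell,h)$ evaluates $w(\ell)$, $\mu(\mathcal{D}_\ell)$, and $\Delta_{\mathcal{D}_\ell}(h)$. Since these true expectations are unavailable, they are replaced by empirical estimates drawn from $\wh{\mathcal{D}}$; by the equivalence between nasty-noise learning and $\dtv$-shift learning for algorithms that touch their data only through expectations (the reduction of \citet{BLMT21}, detailed in \Cref{apx:exact_vs_sample}), $\poly(d)$ samples make every estimate accurate enough not to affect the preceding analysis. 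Each estimate and each projection evaluation costs $\poly(d)$ time, so the total is $\poly(d)\cdot t = \poly(d)\cdot s^{O((\log s)/\eps^2)}$. There is essentially no new mathematical content here — the result is a corollary of \Cref{thm:main_formal} and \Cref{lem:dt_wla} — so the one point needing care, and the only thing I would expect to be nontrivial, is the exact-vs-sample gap: verifying that substituting empirical estimates for the idealized expectations of \Cref{fig:pseudocode} (and that nasty-noise learning reduces to $\dtv$-shift learning) costs only $\poly(d)$ overhead without degrading the error or size guarantees, which is exactly what the cited appendix establishes.
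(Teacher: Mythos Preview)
Your proposal is correct and matches the paper's approach exactly: the paper simply states the theorem as an immediate consequence of \Cref{thm:main_formal} and \Cref{lem:dt_wla}, and you have correctly spelled out how the parameters $\gamma = 1/\log s$, $\eta \le O(\eps\gamma)$, and $t = \exp(O(1/\gamma^2\eps^2))$ specialize, with the runtime coming from the analysis in \Cref{apx:exact_vs_sample}. The only minor slip is in your runtime arithmetic---$t$ iterations each scanning $O(t\cdot d)$ candidates gives $t^2 d$ work rather than $t\cdot \poly(d)$, and the sample complexity from \Cref{apx:exact_vs_sample} likewise involves $t^2$---but since $t^2 = s^{O((\log s)/\eps^2)}$ this is absorbed in the stated bound.
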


\section{Conclusion} 

We have given the first noise tolerance guarantees for the class of impurity-based decision tree learning algorithms that hold in a fully general setting.  \Cref{thm:main_formal} shows that they are noise-tolerant boosting algorithms that combine $\gamma$-advantage weak hypotheses into a strong hypothesis with error $\le \eps$, even in the presence nasty noise of rate as high as $\eta \le O(\eps\gamma)$. \Cref{thm:lb} provides a near-matching lower bound ruling out, in a strong sense, any such guarantee for noise rates $\eta \ge \tilde{\Omega}(\eps\gamma)$.  Finally, instantiating~\Cref{thm:main_formal} in the setting of product distributions over binary features---a setting that is particularly well studied in the theoretical literature---we show that these classic and widely-used algorithms achieve guarantees that are
better than those known for any existing theoretical algorithms.  Taken as a whole, our work helps place the popularity and empirical success of impurity-based decision tree learning algorithms on firm theoretical footing. 

There are several immediate avenues for future research. First, a natural next step is to establish similar formal noise tolerance guarantees for tree-based ensemble methods such as random forests and XGBoost.  Second, the focus of our work has been on understanding properties of impurity-based decision tree learning algorithms exactly as they are, to provide theoretical justification for their practical effectiveness.  It would nevertheless be interesting to consider possible modifications of these algorithms that are even more resilient to adversarial noise---for example, are there such modifications that evade our lower bounds?    

% \begin{proof}
% By \Cref{corr:dt_wla}, $\mcH$ satisfies the weak learning assumption. Taking $\gamma = 1/(\log s)$, the result follows directly from \Cref{thm:main_formal}.
% \end{proof}

\section{Acknowledgments}
Guy and Li-Yang are supported by NSF CAREER Award 1942123. Jane is
supported by NSF Award CCF-2006664. Ali is supported by a graduate fellowship award from Knight-Hennessy Scholars at Stanford University.

\bibliography{ref}
\bibliographystyle{icml2022}

%%%%%%%%%%%%%%%%%%%%%%%%%%%%%%%%%%%%%%%%%%%%%%%%%%%%%%%%%%%%%%%%%%%%%%%%%%%%%%%
%%%%%%%%%%%%%%%%%%%%%%%%%%%%%%%%%%%%%%%%%%%%%%%%%%%%%%%%%%%%%%%%%%%%%%%%%%%%%%%
% APPENDIX
%%%%%%%%%%%%%%%%%%%%%%%%%%%%%%%%%%%%%%%%%%%%%%%%%%%%%%%%%%%%%%%%%%%%%%%%%%%%%%%
%%%%%%%%%%%%%%%%%%%%%%%%%%%%%%%%%%%%%%%%%%%%%%%%%%%%%%%%%%%%%%%%%%%%%%%%%%%%%%%
\newpage
\appendix
\onecolumn

\section{Other related work}
\label{apx:other-related-work}
% \subsection{Other related work} 

\paragraph{Boosting by branching programs.} Kearns and Mansour~\cite{KM96} (see also~\cite{Kea96,DKM96}) were the first to propose the perspective of viewing impurity-based decision tree algorithms as boosting algorithms.  Their analysis assumes the noiseless setting.  Subsequently, departing from~\cite{KM96}'s motivation of analyzing practical decision tree algorithms, Mansour and McAllester~\cite{MM02} initiated a line of work on boosting by {\sl branching programs}, a variant of decision trees where the underlying graph is a DAG rather than a tree.   While~\cite{MM02} assumes the noiseless setting, the followup works~\cite{Kal04,LS05,KS05,LS08} handle various types of random (i.e.~non-adversarial) label noise, and the work of~\cite{KMV08} handles agnostic noise. 

Our work differs from this line of work in two ways: first and foremost, our results apply to impurity-based decision tree algorithms such as ID3, CART, and C4.5---the overarching goal of our work is to analyze and establish noise tolerance properties of these algorithms that are widely used in practice---whereas branching programs are much less commonly used.  Second, we handle the strongest noise model of nasty noise, whereas these results only allow for corruptions of labels and not features.  

\paragraph{Theoretical work on decision tree learning.}  Decision trees are one of the most intensively studied concept classes in learning theory.  The literature on this problem is rich and vast, spanning over three decades, and it continues to grow.  However, in most of these works, the algorithms analyzed not resemble practical impurity-based decision tree algorithms.  Indeed, most of them are {\sl improper} algorithms, in the sense that their hypotheses are not themselves decision trees. Quoting Kearns and Mansour~\cite{KM96}, ``In summary, it seems fair to say that despite
their other successes, the models of computational learning theory have not yet provided significant insight
into the apparent empirical success of programs like C4.5 and CART."

\section{Bounds with Total Variation Distance}\label{apx:moments_tv_dist}

\begin{lemma}[Expectation and TV-distance]\label{lem:expectation_tv_dist}
Let  $\mcE, \wh{\mcE}$ be two distributions over a domain  $\mcV$ with $\dtv(\mcE, \wh{\mcE}) \leq \eta$ and let $f : \mcV \to [0,1]$. Then

\begin{equation}
    \left|\Ex_{\bx \sim \mcE}[f(\bx)] - \Ex_{\bx \sim \wh{\mcE}}[f(\bx)] \right | \leq \eta.
\end{equation}
\end{lemma}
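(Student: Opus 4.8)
The plan is to reduce the statement to the basic characterization of total variation distance. Recall that $\dtv(\mcE,\wh{\mcE})$ equals the maximum, over events $A \subseteq \mcV$, of $\mcE(A) - \wh{\mcE}(A)$, and also admits a coupling characterization: there is a joint distribution of a pair $(\bx,\bx')$ with $\bx \sim \mcE$ and $\bx' \sim \wh{\mcE}$ such that $\Pr[\bx \neq \bx'] = \dtv(\mcE,\wh{\mcE}) \le \eta$. I would use the coupling view, since it makes the boundedness hypothesis $f : \mcV \to [0,1]$ do its work transparently.

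Concretely, first I would fix a maximal coupling $(\bx,\bx')$ as above. Then I would write
\[
    \Ex_{\mcE}[f(\bx)] - \Ex_{\wh{\mcE}}[f(\bx)] = \Ex\big[ f(\bx) - f(\bx') \big],
\]
where the expectation on the right is over the coupling. Next I would bound $\big|\Ex[f(\bx) - f(\bx')]\big| \le \Ex\big[\,|f(\bx) - f(\bx')|\,\big]$ by Jensen/triangle inequality, and observe that the integrand vanishes on the event $\{\bx = \bx'\}$ and is at most $1$ on its complement, because $f$ takes values in $[0,1]$. Hence the right-hand side is at most $\Pr[\bx \neq \bx'] = \dtv(\mcE,\wh{\mcE}) \le \eta$, which is exactly the claim.

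As an alternative (or sanity check) that avoids invoking the coupling lemma, I could argue directly via the Hahn--Jordan decomposition of the signed measure $\nu \coloneqq \mcE - \wh{\mcE}$: writing $\nu = \nu^+ - \nu^-$ with disjoint supports, one has $\nu^+(\mcV) = \nu^-(\mcV) = \dtv(\mcE,\wh{\mcE}) \le \eta$, and then
\[
    \Ex_{\mcE}[f] - \Ex_{\wh{\mcE}}[f] = \int f \, d\nu^+ - \int f \, d\nu^- \le \int f \, d\nu^+ \le \nu^+(\mcV) \le \eta,
\]
using $0 \le f \le 1$; the symmetric bound gives the absolute value. There is essentially no obstacle here — the only thing to be careful about is that the claim genuinely needs $f$ bounded into an interval of length $1$ (a range of width $w$ would give $w\eta$), which is why the hypothesis is stated as $f:\mcV\to[0,1]$. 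This lemma is then the building block used to derive \Cref{lem:moments_tv_dist}.
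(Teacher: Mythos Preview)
Your argument is correct. Both the coupling route and the Hahn--Jordan route yield the bound cleanly, and you are right that the interval of length $1$ is what makes the constant come out as $\eta$ rather than $w\eta$.

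The paper, however, does not go through either of these constructions. It simply invokes the variational characterization
\[
    \dtv(\mcE,\wh{\mcE}) = \sup_{T:\mcV\to[0,1]} \Big( \Ex_{\bx\sim\mcE}[T(\bx)] - \Ex_{\bx\sim\wh{\mcE}}[T(\bx)] \Big),
\]
from which the lemma is literally a special case (take $T=f$, and then $T=1-f$ for the other sign). So the paper's proof is a one-liner that \emph{assumes} this equivalent definition of $\dtv$, whereas your coupling and Hahn--Jordan arguments are effectively re-proving that characterization on the fly. Your approach is more self-contained and makes explicit where boundedness enters; the paper's is terser but leans on a standard fact you may or may not want to take for granted. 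Either is fine for this lemma.
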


\begin{proof}
 The result follows immediately from the following definition of total variation distance:
 
 \[
 \dtv(\mcE, \wh{\mcE}) = \sup_{T: \mcV \to [0,1]} \paren{\Ex_{\bx \sim \mcE}[T(\bx)] - \Ex_{\bx \sim \wh{\mcE}}[T(\bx)]}. 
 \qedhere
 \]
\end{proof}

\begin{lemma}[Variance and TV-distance]\label{lem:var_tv_dist}
 Let $\mcE, \wh{\mcE}$ be two distributions over a domain $\mcV$ with $\dtv(\mcE, \wh{\mcE}) \leq \eta$ and let $f : \mcV \to [0,1]$. Then 
\begin{equation*}
    \left | \Var_{\mcE}[f(\bx)] - \Var_{\wh{\mcE}}[f(\bx)]  \right | \leq \eta. 
\end{equation*}
\end{lemma}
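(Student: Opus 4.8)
The plan is to avoid the naive triangle‑inequality bound. Writing $\Var[f] = \Ex[f^2] - \Ex[f]^2$ and bounding each piece via \Cref{lem:expectation_tv_dist} costs $\eta$ for the second‑moment term and up to $2\eta$ for the squared‑mean term (difference of squares of two numbers in $[0,1]$), giving only $3\eta$. To get the tight bound $\eta$, I would instead use the variational characterization of variance: for any distribution $\mcE$ and any constant $c$, $\Var_{\mcE}[f(\bx)] \le \Ex_{\bx \sim \mcE}[(f(\bx) - c)^2]$, with equality when $c = \Ex_{\bx \sim \mcE}[f(\bx)]$.

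First I would set $c^\star \coloneqq \Ex_{\bx \sim \mcE}[f(\bx)] \in [0,1]$, so that $\Var_{\mcE}[f(\bx)] = \Ex_{\bx \sim \mcE}[(f(\bx) - c^\star)^2]$. Since $f$ is $[0,1]$-valued and $c^\star \in [0,1]$, the function $x \mapsto (f(x) - c^\star)^2$ is also $[0,1]$-valued, so \Cref{lem:expectation_tv_dist} applies to it and gives
\[
    \Ex_{\bx \sim \wh{\mcE}}[(f(\bx) - c^\star)^2] \;\le\; \Ex_{\bx \sim \mcE}[(f(\bx) - c^\star)^2] + \eta \;=\; \Var_{\mcE}[f(\bx)] + \eta.
\]
Combining this with $\Var_{\wh{\mcE}}[f(\bx)] \le \Ex_{\bx \sim \wh{\mcE}}[(f(\bx) - c^\star)^2]$ yields $\Var_{\wh{\mcE}}[f(\bx)] \le \Var_{\mcE}[f(\bx)] + \eta$. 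Rerunning the identical argument with the roles of $\mcE$ and $\wh{\mcE}$ interchanged (now taking $c^\star$ to be the mean of $f$ under $\wh{\mcE}$) gives the reverse inequality, and the two together establish $\paren{\Var_{\mcE}[f(\bx)] - \Var_{\wh{\mcE}}[f(\bx)]} \le \eta$ in absolute value.

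The only real idea here is the choice to upper bound the variance by the second moment about a fixed, cleverly chosen center, rather than expanding $\Var = \Ex[f^2] - \Ex[f]^2$ and bounding termwise; the latter is what loses the factor of $3$. I do not anticipate any genuine obstacle: the rest is a direct application of \Cref{lem:expectation_tv_dist} together with the elementary observation that $(f - c^\star)^2$ remains bounded in $[0,1]$.
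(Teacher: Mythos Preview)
Your argument is correct, and it is a genuinely different route from the paper's. The paper proves the lemma via the ``two independent copies'' identity
\[
    \Var_{\mcE}[f(\bx)] \;=\; \Ex_{\bx,\bx'\sim\mcE}\!\left[\tfrac{1}{2}(f(\bx)-f(\bx'))^2\right],
\]
then uses $\dtv(\mcE^2,\wh{\mcE}^2)\le 2\eta$ together with the fact that the integrand lies in $[0,\tfrac12]$; the factor $\tfrac12$ exactly cancels the doubling of the TV distance on the product, yielding $\eta$. Your approach instead exploits the variational characterization $\Var_{\mcE}[f]=\min_{c}\Ex_{\mcE}[(f-c)^2]$ and applies \Cref{lem:expectation_tv_dist} once to the single $[0,1]$-valued function $(f-c^\star)^2$. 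This is arguably cleaner: it never passes to the product space or invokes $\dtv(\mcE^2,\wh{\mcE}^2)\le 2\eta$, and the asymmetry is resolved by swapping the roles of the two distributions. The paper's approach, on the other hand, has the advantage of treating variance and covariance in parallel (the covariance lemma uses the analogous two-copy identity), whereas your variational trick is specific to variance.
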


\begin{proof}
We can write 

\begin{equation}\label{eq:var_exp}
\Var_{\mcE}[f(\bx)] = \Ex_{\substack{\bx \sim \mcE \\ \bx' \sim \mcE}}\left[\frac{(f(\bx) - f(\bx'))^2}{2} \right].
\end{equation}

If $\dtv(\mcE, \wh{\mcE}) \leq \eta$, then it is easy to see that $\dtv(\mcE^2, \wh{\mcE}^2) \leq 2\eta$, where $\mcE^2$ indicates the product distribution of two independent draws from $\mcE$, Moreover, since $(f(\bx) - f(\bx'))^2 \leq 1$, we can apply \Cref{lem:expectation_tv_dist} to \Cref{eq:var_exp} with $\mcE^2$ and $\wh{\mcE}^2$ to get $\left | \Var_{\mcE}[f(\bx)] - \Var_{\wh{\mcE}}[f(\bx)]  \right | \leq \eta.$
\end{proof}

\begin{lemma}[Covariance and TV-distance]\label{lem:covariance_tv_dist}
 Let $\mcE, \wh{\mcE}$ be two distributions over a domain $\mcV$ with $\dtv(\mcE, \wh{\mcE}) \leq \eta$ and let $f,g : \mcV \to [0,1]$ be two functions. Then
\begin{equation}\label{eq:cov_exp}
    \left | \Cov_{\mcE}[f(\bx), g(\bx)] - \Cov_{\wh{\mcE}}[f(\bx), g(\bx)]  \right | \leq  2\eta.
\end{equation}
\end{lemma}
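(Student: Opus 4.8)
The plan is to mirror the proof of \Cref{lem:var_tv_dist}, exploiting the fact that covariance, like variance, admits a two-sample (``$U$-statistic'') representation. First I would expand the product and use the independence of two i.i.d.\ draws to record the identity
\[
    \Cov_{\mcE}[f(\bx), g(\bx)] = \tfrac12 \, \Ex_{\substack{\bx \sim \mcE \\ \bx' \sim \mcE}}\big[(f(\bx) - f(\bx'))(g(\bx) - g(\bx'))\big],
\]
and the analogous identity for $\wh{\mcE}$. Writing $h(x, x') \coloneqq (f(x) - f(x'))(g(x) - g(x'))$, which is a fixed function independent of the distribution, this gives $\Cov_{\mcE}[f,g] - \Cov_{\wh{\mcE}}[f,g] = \tfrac12\big(\Ex_{\mcE^2}[h] - \Ex_{\wh{\mcE}^2}[h]\big)$, where $\mcE^2$ denotes the product of two independent copies of $\mcE$.

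Next I would bound the right-hand side. Since $f,g$ are $[0,1]$-valued, each difference lies in $[-1,1]$, so $h$ ranges over $[-1,1]$; equivalently $\tfrac12(h+1)$ is $[0,1]$-valued. Exactly as in the proof of \Cref{lem:var_tv_dist}, $\dtv(\mcE^2, \wh{\mcE}^2) \le 2\eta$. Applying \Cref{lem:expectation_tv_dist} to the $[0,1]$-valued function $\tfrac12(h+1)$ on the pair $\mcE^2, \wh{\mcE}^2$ yields $\big|\Ex_{\mcE^2}[\tfrac12(h+1)] - \Ex_{\wh{\mcE}^2}[\tfrac12(h+1)]\big| \le 2\eta$; adding a constant does not change a difference of expectations, so this is $\tfrac12\big|\Ex_{\mcE^2}[h] - \Ex_{\wh{\mcE}^2}[h]\big| \le 2\eta$, i.e. $\big|\Ex_{\mcE^2}[h] - \Ex_{\wh{\mcE}^2}[h]\big| \le 4\eta$. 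Plugging this into the identity of the previous paragraph gives $\big|\Cov_{\mcE}[f,g] - \Cov_{\wh{\mcE}}[f,g]\big| \le 2\eta$, as claimed.

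The only thing requiring genuine care is the bookkeeping of the constants, and that is where the ``main obstacle'' lies: passing to the product distribution inflates the TV-distance to $2\eta$, while the factor $\tfrac12$ in the two-sample identity (against $h$ ranging over the width-$2$ interval $[-1,1]$) is precisely what brings the final bound back down to $2\eta$ rather than $3\eta$ or $4\eta$. Everything else — verifying the two-sample identity for covariance by expanding $(f(\bx)-f(\bx'))(g(\bx)-g(\bx'))$ and using independence, and the $\dtv(\mcE^2,\wh{\mcE}^2)\le 2\eta$ estimate — is routine and already appears in the variance case; this lemma is essentially that proof with one function replaced by two.
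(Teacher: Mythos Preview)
Your proposal is correct and follows essentially the same approach as the paper: both use the two-sample identity $\Cov_{\mcE}[f,g]=\tfrac12\Ex_{\bx,\bx'\sim\mcE}[(f(\bx)-f(\bx'))(g(\bx)-g(\bx'))]$, the bound $\dtv(\mcE^2,\wh{\mcE}^2)\le 2\eta$, and then \Cref{lem:expectation_tv_dist}. If anything, you are more careful than the paper about the shifting needed to make the integrand $[0,1]$-valued before invoking \Cref{lem:expectation_tv_dist}; the paper applies the lemma directly to the $[-\tfrac12,\tfrac12]$-valued function $\tfrac12 h$ and leaves that step implicit.
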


\begin{proof}

We can write

\begin{equation*}
    Cov_{\mcE}[f(\bx), g(\bx)] 
      =  \Ex_{\substack{\bx \sim \mcE \\ \bx' \sim \mcE}}\left[\frac{(f(\bx) - f(\bx'))(g(\bx) - g(\bx'))}{2} \right] \\ 
\end{equation*}

If $\dtv(\mcE, \wh{\mcE}) \leq \eta$, then it is easy to see that $\dtv(\mcE^2, \wh{\mcE}^2) \leq 2\eta$, where $\mcE^2$ indicates the product distribution of two independent draws from $\mcE$, Moreover, since $f, g: \mcV \to \{0,1\}$, we have $((f(\bx) - f(\bx'))(g(\bx) - g(\bx'))) \in [-1, 1]$, so we can apply \Cref{lem:expectation_tv_dist} to \Cref{eq:cov_exp} with $\mcE^2$ and $\wh{\mcE}^2$ to get $ \left | \Cov_{\mcE}[f(\bx), g(\bx)] - \Cov_{\wh{\mcE}}[f(\bx), g(\bx)]  \right | \leq  2\eta$.
\end{proof}

\begin{lemma}
    \label{lem:TV-leaves}
    For any finite set $L$, (possibly infinite) set $X$, and distributions $\mcD, \widehat{\mcD}$ each over the product domain $L \times X$,
    \begin{equation}
        \label{eq:appendix-exp-tv-bound}
        \Ex_{\bell \sim \mcD_\ell}\left[\dtv(\mcD_{x|\bell}, \widehat{\mcD}_{x| \bell})\right] \leq 2 \dtv(\mcD, \widehat{\mcD})
    \end{equation}
    where $\mcD_{\ell}$ is marginal distribution of $\bell$ for $(\bx, \bell) \sim \mcD$ and $\mcD_{x|\ell}$ is the conditional distribution of $\bx$ when $(\bell, \bx) \sim \mcD$ conditioned upon $\bell = \ell$.
\end{lemma}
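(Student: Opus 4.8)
The plan is to exhibit the left-hand side as a single total variation distance between two joint distributions and then apply the triangle inequality. Concretely, I would introduce the auxiliary distribution $\mcD^\star$ over $L \times X$ sampled as follows: first draw $\bell$ from $\mcD_\ell$ (the $L$-marginal of $\mcD$), and then draw $\bx$ from $\widehat{\mcD}_{x\mid\bell}$. (On the finitely many leaves $\ell$ with $\widehat{\mcD}$-mass zero, where $\widehat{\mcD}_{x\mid\ell}$ is not canonically defined, fix it arbitrarily, e.g.\ to $\mcD_{x\mid\ell}$; the argument will not depend on this choice.) Thus $\mcD^\star$ shares its $L$-marginal with $\mcD$ and shares its conditional law of $\bx$ given $\bell$ with $\widehat{\mcD}$.

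The first step is the identity
\[
  \Ex_{\bell \sim \mcD_\ell}\big[\dtv(\mcD_{x\mid\bell},\,\widehat{\mcD}_{x\mid\bell})\big] \;=\; \dtv(\mcD,\mcD^\star),
\]
which holds because whenever two distributions $P,Q$ on $L\times X$ have a common $L$-marginal $m$, one has $\dtv(P,Q)=\tfrac12\sum_\ell m(\ell)\,\|P_{x\mid\ell}-Q_{x\mid\ell}\|_1=\Ex_{\bell\sim m}[\dtv(P_{x\mid\bell},Q_{x\mid\bell})]$; applying this with $P=\mcD$, $Q=\mcD^\star$ (common marginal $\mcD_\ell$, conditionals $\mcD_{x\mid\ell}$ and $\widehat{\mcD}_{x\mid\ell}$) gives the claim. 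The second step is the triangle inequality, $\dtv(\mcD,\mcD^\star)\le\dtv(\mcD,\widehat{\mcD})+\dtv(\widehat{\mcD},\mcD^\star)$, followed by bounding the last term by $\dtv(\mcD,\widehat{\mcD})$ as well. For this, note that $\widehat{\mcD}$ and $\mcD^\star$ have the same conditional law of $\bx$ given $\bell$ and differ only in their $L$-marginals ($\widehat{\mcD}_\ell$ versus $\mcD_\ell$); a one-line computation (equivalently: appending a fresh $\bx\sim\widehat{\mcD}_{x\mid\bell}$ to $\bell$, and projecting back onto $\bell$, are both maps that cannot increase $\dtv$) gives $\dtv(\widehat{\mcD},\mcD^\star)=\dtv(\mcD_\ell,\widehat{\mcD}_\ell)$, and then $\dtv(\mcD_\ell,\widehat{\mcD}_\ell)\le\dtv(\mcD,\widehat{\mcD})$ since marginalizing onto the $L$-coordinate cannot increase $\dtv$. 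Combining the two steps yields $\Ex_{\bell\sim\mcD_\ell}[\dtv(\mcD_{x\mid\bell},\widehat{\mcD}_{x\mid\bell})]\le 2\,\dtv(\mcD,\widehat{\mcD})$.

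I do not anticipate a real obstacle. The one point to get right is to avoid the naive route of expanding $\sum_\ell\mcD_\ell(\ell)\,\dtv(\mcD_{x\mid\ell},\widehat{\mcD}_{x\mid\ell})$ and bounding each leaf term separately by the triangle inequality: that approach additionally incurs a ``marginal-mismatch'' contribution $\tfrac12\sum_\ell|\mcD_\ell(\ell)-\widehat{\mcD}_\ell(\ell)|$ and only yields the weaker constant $3$, whereas routing everything through $\mcD^\star$ and using the triangle inequality once at the level of joint distributions gives the tight factor $2$. A minor remark handles degenerate leaves: on a leaf $\ell$ with $\widehat{\mcD}$-mass zero, the $\ell$-contribution to $\dtv(\widehat{\mcD},\mcD^\star)$ equals $\mcD_\ell(\ell)$ regardless of how $\widehat{\mcD}_{x\mid\ell}$ (hence $\mcD^\star_{x\mid\ell}$) was fixed, so the identity $\dtv(\widehat{\mcD},\mcD^\star)=\dtv(\mcD_\ell,\widehat{\mcD}_\ell)$ — and hence the whole argument — is unaffected.
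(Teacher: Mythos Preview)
Your proposal is correct and follows essentially the same approach as the paper: the paper also introduces the auxiliary ``mixed'' distribution (called $\mcD'$ there) with $L$-marginal $\mcD_\ell$ and conditionals $\widehat{\mcD}_{x\mid\ell}$, identifies the left-hand side with $\dtv(\mcD,\mcD')$, and then applies the triangle inequality together with $\dtv(\mcD',\widehat{\mcD})=\dtv(\mcD_\ell,\widehat{\mcD}_\ell)\le\dtv(\mcD,\widehat{\mcD})$. The paper's formal write-up unpacks these steps via the $\sup_A$ characterization of $\dtv$ rather than quoting the standard facts you cite, but the argument is the same; your explicit handling of leaves with $\widehat{\mcD}$-mass zero is a nice touch the paper leaves implicit.
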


Intuitively, we will define a distribution $\mcD'$ that ``mixes" $\mcD$ and $\wh{\mcD}$: To sample $(\bell, \bx) \sim \mcD'$, we first draw $\bell \sim \mcD_{\ell}$ and then $\bx \sim \wh{\mcD}_{x|\bell}$. We'll be able to show that the l.h.s. of \Cref{eq:appendix-exp-tv-bound} is equal to $\dtv(\mcD, \mcD')$.

Then, we'll show that $\dtv(\mcD', \wh{\mcD}) = \dtv(\mcD_{\ell}, \wh{\mcD}_{\ell}) \leq \dtv(\mcD, \wh{\mcD})$. Finally, we can bound $\dtv(\mcD, \mcD') \leq \dtv(\mcD, \wh{\mcD}) + \dtv(\mcD', \wh{\mcD})$ using triangle inequality.

There are many (equivalent) definitions of total variation distance. To formalize the above intuition, we will use the formulation that, for two distributions $\mcD, \wh{\mcD}$ over a domain $\Omega$,
\begin{equation}
    \label{eq:TV-def-sup}
    \dtv(\mcD, \wh{\mcD}) \coloneqq \sup_{A \subseteq \Omega}(\mcD(A) - \wh{\mcD}(A))
\end{equation}
where $\mcD(A)$ is equal to $\Pr_{\bx \sim \mcD}[\bx \in A]$.
\begin{proof}[Proof of \Cref{lem:TV-leaves}]
    We compute,
    \begin{align*}
        \Ex_{\bell \sim \mcD_\ell}&\left[\dtv(\mcD_{x|\bell}, \widehat{\mcD}_{x| \bell})\right] \\
        &= \Ex_{\bell \sim \mcD_\ell}\left[\sup_{X_{\bell} \subseteq X}\left(\mcD_{x|\bell}(X_{\bell}) -  \widehat{\mcD}_{x| \bell}(X_{\bell})\right)\right] \tag{\Cref{eq:TV-def-sup}}\\
        &=\sup_{X_{\bell} \subseteq X\text{ for each }\ell \in L}\left(\Ex_{\bell \sim \mcD_\ell}\left[\mcD_{x|\bell}(X_{\bell}) -  \widehat{\mcD}_{x| \bell}(X_{\bell})\right]\right) \tag{$\sup$ and $\Ex$ commute because $L$ is finite} \\
        &= \sup_{A \subseteq L \times X}\left(\Ex_{\bell \sim \mcD_\ell} \left[\mcD_{x|\bell}(A_{\bell}) -  \widehat{\mcD}_{x| \bell}(A_{\bell})\right]\right) \tag{defining $A_{\ell} \coloneqq \{x| (x, \ell) \in A\}$} \\
        &= \sup_{A \subseteq L \times X}\left(\Ex_{\bell \sim \mcD_\ell} \left[\mcD_{x|\bell}(A_{\bell})\right] - \Ex_{\wh{\bell} \sim \wh{\mcD}_\ell} \left[\wh{\mcD}_{x|\wh{\bell}}(A_{\wh{\bell}})\right]  \right) + \sup_{A \subseteq L \times X}\left(\Ex_{\bell \sim \mcD_\ell} \left[\widehat{\mcD}_{x| \bell}(A_{\bell})\right] - \Ex_{\wh{\bell} \sim \wh{\mcD}_\ell} \left[\wh{\mcD}_{x|\wh{\bell}}(A_{\wh{\bell}})\right]  \right) \tag{triangle inequality}
        %&= \sup_{A \subseteq L \times X}\left(\mcD(A) - \mcD'(A)\right) = \dtv(\mcD, \mcD').
    \end{align*}
    We analyze each term of the above two terms separately. For the first,
    \begin{align*}
         \sup_{A \subseteq L \times X}&\left(\Ex_{\bell \sim \mcD_\ell} \left[\mcD_{x|\bell}(A_{\bell})\right] - \Ex_{\wh{\bell} \sim \wh{\mcD}_\ell} \left[\wh{\mcD}_{x|\wh{\bell}}(A_{\wh{\bell}})\right]  \right) \\
         &= \sup_{A \subseteq L \times X} \left(\mcD(A) - \wh{\mcD}(A)\right) = \dtv(\mcD, \wh{\mcD}).
    \end{align*}
    Next, we'll bound the second term. Using $p_{\mcD}(\ell)$ as shorthand for $\Pr_{\bell \sim \mcD_{\ell}}[\bell = \ell]$,
    \begin{align*}
        \sup_{A \subseteq L \times X}&\left(\Ex_{\bell \sim \mcD_\ell} \left[\widehat{\mcD}_{x| \bell}(A_{\bell})\right] - \Ex_{\wh{\bell} \sim \wh{\mcD}_\ell} \left[\wh{\mcD}_{x|\wh{\bell}}(A_{\wh{\bell}})\right]  \right) \\
        &= \sup_{A \subseteq L \times X}\sum_{\ell \in L} \left(p_{\mcD}(\ell)\cdot \widehat{\mcD}_{x| \ell}(A_{\ell}) - p_{\wh{\mcD}}(\ell)\cdot \widehat{\mcD}_{x| \ell}(A_{\ell})\right)
    \end{align*}
    The above is maximized by setting $A_{\ell} = X$ whenever $p_{\mcD}(\ell) \geq p_{\wh{\mcD}}(\ell)$ and $A_{\ell} = \emptyset$ otherwise. Therefore,
    \begin{align*}
        \sup_{A \subseteq L \times X}&\left(\Ex_{\bell \sim \mcD_\ell} \left[\widehat{\mcD}_{x| \bell}(A_{\bell})\right] - \Ex_{\wh{\bell} \sim \wh{\mcD}_\ell} \left[\wh{\mcD}_{x|\wh{\bell}}(A_{\wh{\bell}})\right]  \right) \\
        &= \max\left(p_{\mcD}(\ell)- p_{\wh{\mcD}}(\ell), 0\right) \\
        &= \sup_{A' \subseteq L}\left(\mcD_{\ell}(A') - \wh{\mcD}_{\ell}(A')\right) = \dtv(\mcD_{\ell}, \wh{\mcD}_{\ell}).
    \end{align*}
    Finally, we note that $\dtv(\mcD_{\ell}, \wh{\mcD}_{\ell}) \leq \dtv(\mcD, \wh{\mcD})$ as the TV distance of marginal distributions is at most the TV distance of the original distributions. Combining all of the above,
    \begin{equation*}
        \Ex_{\bell \sim \mcD_\ell} \left[\dtv(\mcD_{x|\bell}, \widehat{\mcD}_{x| \bell})\right] \leq \dtv(\mcD, \wh{\mcD}) +  \dtv(\mcD, \wh{\mcD}) = 2 \dtv(\mcD, \wh{\mcD}).  \qedhere
    \end{equation*}

\end{proof}

% \begin{lemma}[Covariance and TV-distance]\label{lem:covariance_tv_dist}\label{lem:noisy_cov}
% Let $\mcD_{\violet{\ell}}$ be a distribution over $\mcX \times \{0, 1\}$ and let $\wh{\mcD_{\violet{\ell}}}$ be a distribution  with $\dtv(\mcD_{\violet{\ell}}, \wh{\mcD_{\violet{\ell}}}) \leq \eta_{\violet{\ell}}$.  For any $h : \mcX \to \{0, 1\}$: 
% \begin{equation*}
%     \left | \Cov_{\mcD_{\violet{\ell}}}[h(\bx), \by] - \Cov_{\wh{\mcD_{\violet{\ell}}}}[h(\bx), \by]  \right | \leq 3 \eta_{\violet{\ell}}. 
% \end{equation*}
% \end{lemma}

% \begin{proof}
% \red{TODO, Guy}
% \end{proof}

\section{General Impurity Functions}
\begin{remark}[Other impurity functions]
    \label{remark:other-impurity}
    \Cref{lem:delta_impurity} is the only part of the proof of \Cref{thm:main_formal} that depends on the particular impurity function $\mcG$. That Lemma goes through for any impurity function that, for any constant $\kappa$, satisfies $\mcG''(x) \leq -\kappa$ for all $x \in (0,1)$, though the $16$ in \Cref{eq:delta-drop-16} is replaced with $2\kappa$. This is because (using the fact that $\mu(\mcD_\ell) = \Pr_{\mcD_\ell}[h(\bx) = 0] \cdot \mu(\mcD_{\ell_0}) + \Pr_{\mcD_\ell}[h(\bx) = 1] \cdot \mu(\mcD_{\ell_1}))$, we can bound the local drop in $\mcG$ as
    \begin{equation*}
        \Delta_{\mcD_\ell}(h) \geq \frac{\kappa}{2} \cdot \left(\Pr_{\mcD_\ell}[h(\bx) = 0] \cdot ( \mu(\mcD_{\ell_0}) - \mu(\mcD_\ell))^2) + \Pr_{\mcD_\ell}[h(\bx) = 1] \cdot ( \mu(\mcD_{\ell_1}) - \mu(\mcD_\ell))^2)\right).
    \end{equation*}
    The above holds with equality when $\mcG'(x) = 4x(1-x)$ for $\kappa = 8$. Therefore, the local drop in $\mcG$ will always be at least $\frac{\kappa}{8}$ as large as it is for $\mcG'$. The remainder of the proof of \Cref{thm:main_formal} goes through unmodified except for slight changes to the constants hidden by $O(\cdot)$.
\end{remark}

\section{Learning with samples versus exact expectations}\label{apx:exact_vs_sample}

In the pseudocode provided in  \Cref{fig:pseudocode}, we assume that we can exactly compute the drop in impurity and therefore can exactly compute expectations of the form $\mu(\mcD_{\ell}) = \Ex_{\bx, \by \sim \mcD_{\ell}}[\by]$. If instead, those expectations are estimated using random samples (as in the statements of \Cref{thm:main,thm:monotone-intro} in the introduction), the algorithm only can estimate them to within some tolerance $\pm \tau$. It is straightforward to carry out the proofs of \Cref{thm:main_formal,thm:monotone_dt} accounting for this $\pm \tau$ error as long as $\tau \leq O(\frac{\gamma^2 \eps^2}{t})$.

As the $\BuildTD$ only needs to compute at most $O(t^2 |\mcH|)$ expectations, standard concentration inequalities can be used to show that a sample of size $\tilde{O}(1/\tau^2 \cdot t^2 \cdot |\mcH|)$ is sufficient to compute all expectations to accuracy $\tau$ with high probability. However, the situation with noise is slightly more nuanced, as the adversary gets to see the sample $\bS \sim \mcD^n$ before deciding on $\eta$-nasty-noise corruption $\hat{\bS}$. This means the adversary can choose how to modify empirical estimates after seeing the sample $\bS$. Luckily, Theorem 5 of \cite{BLMT21} handles exactly this case.  It says that as long as the sample has size $\tilde{O}(1/\tau^2 \cdot t^2 \cdot |\mcH|)$, with high probability, all empirical estimates computed on the corrupted sample $\hat{\bS}$ will be within $\pm \tau$ of the true expectations of some $\hat{\mcD}$ that is $\eta$-close to $\mcD$. Therefore, by proving that $\BuildTD$ succeeds on \emph{every} $\hat{\mcD}$ that is $\eta$-close to $\mcD$, as we do in \Cref{thm:main_formal}, we ensure that our algorithm succeeds even if an adversary gets to modify $\eta$-fraction of a sample after seeing it.

\paragraph{Runtime analysis.} Before proving that $\BuildTD$ build low error trees, we will briefly prove that it is efficient. Let $\zeta$ be the time it takes to compute $\mu(\mcD_\ell)$ and $\Pr_{\mcD_{\ell}}[h(x) = 1]$ for some leaf $\ell$ of the tree. Typically, this will be proportional to the size of the data set. Then, as the number of leaves in each iteration will be at most $t$, the time required for an iteration of $\BuildTD$ is at most $O(\zeta t \cdot |\mcH|)$. The algorithm runs for $t$ iterations, so the total runtime is $O(\zeta t^2 \cdot |\mcH|)$.

\section{Lower bounds}
\label{apx:lb}

In this section, we prove \Cref{prop:biased-f}, restated for convenience. 
\begin{proposition}
    \label{prop:biased-f-appendix}
    For any $\eps \in (0, 1/3]$ and $d \geq \log_2(1/\eps)$, for some integer $k \leq d$, there is a monotone function $f: \bits^k \to \bits$ where, for $\bx \sim \bits^k$ chosen uniformly,
    \begin{equation*}
        \min_{b \in \bits}\Pr[f(\bx) = b] \geq \eps
    \end{equation*}
    and,
    \begin{equation}
        \label{eq:correlations-high}
        \Ex[\bx_1 f(\bx)] = \cdots = \Ex[\bx_k f(\bx)] = O\left(\eps \log(1/\eps) \cdot \frac{\log d}{d}\right)
    \end{equation}
\end{proposition}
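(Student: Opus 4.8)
The plan is to take $f$ to be a \emph{Tribes-type} function: fix integers $m,w\ge 1$, partition $[mw]$ into $w$ blocks $B_1,\dots,B_w$ of size $m$, and set $f(x)=\bigvee_{j=1}^{w}\bigwedge_{i\in B_j}x_i$ on $k\coloneqq mw$ coordinates (reading ``$x_i=1$'' as $x_i=+1$). This is manifestly monotone, and by the symmetry of the construction (permuting within blocks and permuting blocks acts transitively on $[k]$) the quantities $\Ex[\bx_i f(\bx)]$, $i\in[k]$, are all equal; by \Cref{lem:inf-corr} together with uniformity of $\bx$ (so $\Cov[\bx_i,f(\bx)]=\Ex[\bx_i f(\bx)]$), each of them equals $\Inf_i(f)$. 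So the proposition reduces to choosing $m,w$ with $mw\le d$ such that the bias $\Prx_{\bx}[f(\bx)=1]=1-(1-2^{-m})^{w}$ lies in $[\eps,2\eps]$ (which forces $\min_b\Pr[f(\bx)=b]\ge\eps$ since $\eps\le 1/3$), and such that the common influence, which a standard pivotality count gives as $\Inf_i(f)=2^{-(m-1)}(1-2^{-m})^{w-1}\le 2^{1-m}$, is at most $O(\eps\log(1/\eps)\cdot\log d/d)$.

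For the parameter choice, let $\tau\coloneqq C\eps\log_2(1/\eps)\log_2(d)/d$ be the target bound for a large universal constant $C$, let $m^\star$ be the least integer with $2^{1-m^\star}\le\tau$, and put $m\coloneqq\max\{m^\star,\lfloor\log_2(1/\eps)\rfloor\}$. If the floor wins (i.e.\ $m=\lfloor\log_2(1/\eps)\rfloor\ge m^\star$), take $w=1$, so $f$ is an $\mathrm{AND}$ of $m\le\log_2(1/\eps)\le d$ variables with bias $2^{-m}\in[\eps,2\eps)$ and influence $2^{1-m}\le 2^{1-m^\star}\le\tau$. Otherwise $m=m^\star>\lfloor\log_2(1/\eps)\rfloor$, hence $2^{-m}<\eps$; take $w$ to be the largest integer with $(1-2^{-m})^{w}\ge 1-2\eps$. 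Then $\Pr[f=1]=1-(1-2^{-m})^{w}\le 2\eps$, while maximality of $w$ gives $\Pr[f=1]>\tfrac{2\eps-2^{-m}}{1-2^{-m}}\ge 2\eps-2^{-m}>\eps$, so the bias condition holds, and $\Inf_i(f)\le 2^{1-m^\star}\le\tau$ by construction.

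The only nontrivial verification is $k=mw\le d$ in the second case. From $(1-2^{-m})^{w}\ge 1-2\eps$ and the bounds $\ln\tfrac{1}{1-x}\ge x$, $\ln\tfrac1{1-2\eps}\le\tfrac{2\eps}{1-2\eps}\le 6\eps$ (using $\eps\le 1/3$) one gets $w\le 6\eps\,2^{m}$; and minimality of $m^\star$ gives $2^{m^\star}\le 4/\tau\le 4d/(C\eps)$ (using $\log_2(1/\eps)\log_2 d\ge 1$), whence $m^\star\le 2+\log_2 d+\log_2(1/\eps)\le 4\log_2(d)\log_2(1/\eps)$ by the elementary inequality $a+b\le 2ab$ for $a,b\ge 1$ (valid since $d\ge 2$ and $\eps\le 1/3$). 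Combining, $mw=m^\star w\le 6\eps m^\star 2^{m^\star}\le 24\eps m^\star/\tau=\tfrac{24m^\star d}{C\log_2(1/\eps)\log_2 d}\le \tfrac{96d}{C}\le d$ once $C\ge 96$; with the same $C$ the ``floor wins'' case is exactly the regime where the $\mathrm{AND}$ already meets the target, so the two cases are complementary and exhaust all $\eps\in(0,1/3]$, $d\ge\log_2(1/\eps)$.

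The main obstacle is precisely this three-way balancing. Pushing the influence below $O(\eps\log(1/\eps)\log d/d)\ll\eps$ forces wide clauses ($2^{m}$ of order $d/(\eps\cdot\mathrm{polylog})$), which forces $w\approx\eps2^{m}$ clauses to keep the bias near $\eps$; the product $mw$ must still fit inside $d$, and it is exactly the bound $\log_2 d+\log_2(1/\eps)\le 2\log_2(d)\log_2(1/\eps)$, together with the degenerate $w=1$ (pure $\mathrm{AND}$) branch handling small $d$, that makes this feasible under the hypotheses. Everything else---the closed forms for the bias and influence of a read-once Tribes function, and the identity $\Ex[\bx_i f(\bx)]=\Inf_i(f)$ for monotone $f$ under the uniform distribution---is routine.
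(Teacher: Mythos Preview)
Your proof is correct and uses the same core construction as the paper---a Tribes (read-once DNF) function---but with the parametrization reversed. The paper targets $\Pr[f=-1]\approx\eps$ (many narrow clauses, $s_w\approx\ln(1/\eps)\,2^{w}$), chooses the width $w^\star$ as the largest $w$ with $w s_w\le d$, argues $k=\Theta(d)$, and then reads off the correlation bound from $\E[\bx_i f]=\tfrac{1}{2^{w^\star}-1}\Pr[f=-1]=O(\eps/2^{w^\star})$. You instead target $\Pr[f=+1]\in[\eps,2\eps]$ (few wide clauses, $w\approx\eps\,2^{m}$), fix the width $m$ directly from the target influence $2^{1-m}\le\tau$, and verify $mw\le d$ afterwards; your explicit pure-AND branch handles the small-$d$ regime that the paper absorbs into its $w^\star=1$ case. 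Both routes work: the paper's avoids a case split and gets $k=\Theta(d)$ for free, while yours makes the constants and the boundary cases fully explicit and sidesteps the asymptotic ``$k_{w+1}=k_w(2+o_w(1))$'' step.
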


The function $f$ will be based on the $\Tribes$ function.

\begin{definition}[$\Tribes$]
    \label{def:Tribes} For any $s,w \in \N$, the function $\Tribes_{w,s} : \bits^{ws} \to \bits$ is defined to be the function computed by the read-once disjunctive normal form with $s$ terms (over disjoint sets of variables) of width exactly $w$: 
    \[ \Tribes_{w,s}(x) = (x_{1,1}\wedge \cdots \wedge x_{1,w}) \vee \cdots \vee (x_{s,1} \wedge \cdots \wedge x_{s,w}) \]
    and where we adopt the convention that $-1$ represents logical \textsc{False} and $1$ represents logical \textsc{True}.
\end{definition}

We'll use the following easy to verify facts (see Chapter \S4.2 of~\cite{ODBook}) about $\Tribes$.

\begin{fact}[Properties of \Tribes]
    \label{fact:tribes-properties}
    For any $s, w \in \N$ and $\bx$ uniform in $\bits^{ws}$, 
    \[\Pr[\Tribes_{w,s}(\bx) = -1] = (1 - 2^{-w})^s,\]
    and, for each $i \in [sw]$,
    \[\Ex[x_i\cdot\Tribes_{w,s}(\bx)] = \frac{1}{2^w - 1} \cdot \Pr[\Tribes_{w,s}(\bx) = -1].\]
\end{fact}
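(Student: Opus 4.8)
\textbf{Proof proposal for Fact~\ref{fact:tribes-properties}.}

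The plan is to compute both quantities directly from the product structure of the uniform distribution and the read-once DNF structure of $\Tribes_{w,s}$. For the first identity, observe that $\Tribes_{w,s}(\bx) = -1$ (logical \textsc{False}) exactly when every one of the $s$ terms evaluates to \textsc{False}. A single term $x_{j,1} \wedge \cdots \wedge x_{j,w}$ is \textsc{True} iff all $w$ of its variables are $+1$, which happens with probability $2^{-w}$ under the uniform distribution; hence the term is \textsc{False} with probability $1 - 2^{-w}$. Since the $s$ terms are over disjoint sets of variables and $\bx$ is uniform (hence the coordinates are independent), the events that each term is \textsc{False} are mutually independent, so $\Pr[\Tribes_{w,s}(\bx) = -1] = (1 - 2^{-w})^s$.

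For the second identity, fix a coordinate $i \in [sw]$, say $i = (j,r)$ lies in term $j$. Because $\bx_i$ is a uniform $\bits$ variable, $\Ex[\bx_i \cdot \Tribes_{w,s}(\bx)] = \Ex[\bx_i \cdot f] = \Pr[\bx_i = 1, f = 1] - \Pr[\bx_i = -1, f = 1] + \Pr[\bx_i = 1, f = -1] - \Pr[\bx_i = -1, f = -1]$; more cleanly, using $\bx_i \in \bits$ and writing $\Ind[\cdot]$ for indicators, $\Ex[\bx_i f] = \Ex[\bx_i \Ind[f = 1]] - \Ex[\bx_i \Ind[f=-1]]$. I will instead use the standard Fourier/influence route: for a monotone Boolean function, $\Ex[\bx_i f(\bx)] = \hat f(\{i\}) = \Inf_i(f)$ under the uniform distribution, and $\Inf_i(f) = \Pr_{\bx}[f(\bx) \neq f(\bx^{\oplus i})]$ where $\bx^{\oplus i}$ flips coordinate $i$. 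Flipping $x_{j,r}$ changes the value of $\Tribes_{w,s}$ iff the output of term $j$ changes and, moreover, that change is not masked by another term; since $\Tribes$ is monotone and the terms are on disjoint variables, coordinate $(j,r)$ is pivotal iff all other $w-1$ variables in term $j$ are $+1$ (so term $j$ is pivotal) \emph{and} all other $s-1$ terms are \textsc{False}. The first event has probability $2^{-(w-1)}$ and the second has probability $(1-2^{-w})^{s-1}$, and they are independent (disjoint variable sets). Thus $\Inf_i(f) = 2^{-(w-1)} (1-2^{-w})^{s-1}$. To match the claimed form, rewrite this as $\frac{2^{-(w-1)}}{1 - 2^{-w}} \cdot (1-2^{-w})^s = \frac{2}{2^w - 1} \cdot \Pr[\Tribes_{w,s}(\bx) = -1]$, and note $\frac{2}{2^w-1}$; if the intended statement has $\frac{1}{2^w-1}$ this is exactly $\Inf_i / 2 = \hat f(\{i\})$ with the $\bits$-normalization the paper uses, so I would reconcile the factor of $2$ against \Cref{def:inf} (which itself carries a factor of $2$) and present whichever normalization is consistent with the paper's conventions.

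The main obstacle is purely bookkeeping: getting the factors of $2$ consistent between the influence normalization in \Cref{def:inf}, the Fourier-coefficient identity $\hat f(\{i\}) = \Ex[\bx_i f(\bx)]$, and the closed form $2^{-(w-1)}(1-2^{-w})^{s-1}$. There is no real difficulty in the combinatorics — independence of disjoint-variable events under the uniform measure and the monotone pivotality characterization do all the work — so the write-up should be short, essentially two or three lines per identity once the normalization is pinned down. I would cite \S4.2 of~\cite{ODBook} for the underlying facts, as the statement already does.
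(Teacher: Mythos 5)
The paper does not actually prove this fact; it states it with a citation to O'Donnell \S4.2 and leaves verification to the reader, so your job is just to check it. Your combinatorics for both identities are correct, but for the second identity you arrive at the right answer and then talk yourself out of it. For a monotone Boolean $f:\bits^n\to\bits$ under the uniform distribution, the relevant identity is $\widehat f(\{i\}) = \Ex[\bx_i f(\bx)] = \Inf_i(f) = \Pr_{\bx}[f(\bx)\neq f(\bx^{\oplus i})]$, with \emph{no} residual factor of $2$ — exactly what you wrote in the body of your argument; the closing line "$\Inf_i/2=\widehat f(\{i\})$" contradicts it and is wrong. (The factor of $2$ in \Cref{def:inf} merely compensates for the rerandomisation of $\bx_i$ being a no-op with probability $1/2$; under the uniform marginal that definition again yields $\Pr[f(\bx)\neq f(\bx^{\oplus i})]$.) Your pivotality count gives $\Ex[\bx_i\,\Tribes_{w,s}(\bx)] = 2^{-(w-1)}(1-2^{-w})^{s-1} = \frac{2}{2^w-1}\cdot\Pr[\Tribes_{w,s}(\bx)=-1]$, and this is the correct value: the statement as printed is off by a factor of $2$, and the constant should be $\frac{2}{2^w-1}$. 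A one-line sanity check confirms this — with $w=s=1$, $\Tribes_{1,1}(\bx)=\bx_1$, so $\Ex[\bx_1\,\Tribes_{1,1}(\bx)]=1$, whereas the printed formula gives $\frac{1}{2^1-1}\cdot\frac12=\frac12$. The discrepancy is harmless to the paper because the fact is only invoked inside a big-$O$ estimate in the proof of \Cref{prop:biased-f}. So: keep your combinatorics and the influence identity you stated correctly, delete the waffling about normalizations at the end, and report the corrected constant rather than trying to massage your (right) answer to fit the (typo'd) statement.
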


\begin{proof}[Proof of \Cref{prop:biased-f-appendix}]
    For any $w \in \N$, let $s_w$ be the largest integer $s$ such that $(1 - 2^{-w})^s \geq \eps$, and let $w^\star$ be the largest integer for which $w s_w \leq d$. As $d \geq \log(1/\eps)$, $w^\star \geq 1$. We will prove that $\Tribes_{w^\star, s_{w^\star}}$ meets all of the criteria of \Cref{prop:biased-f}. Before doing so, we will need to bound $s_w$. Using the Taylor approximation of $\log(1-x)$,
    \begin{equation*}
        (1-2^{-w})^s = \exp(-s(2^{-w} + o(2^{-w}))).
    \end{equation*}
    As a result, we have that
    \begin{equation*}
        s_w = \floor*{\frac{\ln(1/\eps)}{2^{-w} + o(2^{-w})}} = \ln(1/\eps)2^w \cdot (1 + o_{w}(1)).
    \end{equation*}
    Let $k_w = w s_w$. Then $k_{w+1} = k_w \cdot (2 + o_w(1))$. Therefore, the value of $k= w^\star s_{w^\star}$ selected satisfies $k = \Theta(d)$.
    
    By \Cref{fact:tribes-properties}, $\Pr[f(\bx) = -1] \geq \eps$. Furthermore, as $(1-2^{-w^\star})^{s_{w^\star}+1} < \eps$,
    \begin{align*}
        \Pr[f(\bx) = 1] &= (1 - \Pr[f(\bx) = -1])\\
        &= 1 - (1 - 2^{-w^\star})^s_{w^\star} \\
        &= 1 - \frac{(1 - 2^{-w^\star})^{s_{w^\star}+1}}{1 - 2^{-w^\star}} \\
        &> 1 - \frac{\eps}{1/2} \\
        & \geq \frac{1}{3} \geq \eps.
    \end{align*}
    Lastly, we verify \Cref{eq:correlations-high}. As $k = \Theta(d)$,
    \begin{align*}
        \frac{\log d}{d} &= \Theta\left(\frac{\log k}{k}\right) \\
        &= \Theta\left(\frac{\log (\log(1/\eps)\cdot w^\star \cdot 2^{w^\star} )}{\log(1/\eps)\cdot w^\star \cdot 2^{w^\star}}\right) \\
         &\geq \Omega\left(\frac{\log (2^{w^\star} )}{\log(1/\eps)\cdot w^\star \cdot 2^{w^\star}}\right) \\
        &= \Omega\left(\frac{1}{\log(1/\eps) 2^{w^\star}}\right)
    \end{align*}
    Applying the above, \Cref{fact:tribes-properties}, and that $\Pr[f(\bx) = -1] \leq 2\eps$
    \begin{align*}
        \Ex[\bx_1 f(\bx)] = \cdots &= \Ex[\bx_k f(\bx)] = \frac{1}{2^{w^\star} -1} \cdot\Pr[f(\bx) = -1] \\
        &= O\left(\eps \log(1/\eps) \cdot  \frac{\log d}{d}\right). \qedhere
    \end{align*}
\end{proof}

\section{Influence for monotone functions}\label{apx:inf-corr}

\begin{lemma}[Influence = covariance for monotone functions]

Let $\mcD_X = \mcD_X^{(1)} \times \ldots \times \mcD_X^{(d)}$ be an arbitrary product distribution over $\bits^d$. For a monotone function $f : \{\pm 1\}^d \to \bits$ and a feature $i\in [d]$, we have the identity $\Inf_i(f) =  \Cov_{\mcD_X}[f(\bx), \bx_i]$.
\end{lemma}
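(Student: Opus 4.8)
The plan is to fix a coordinate $i \in [d]$, write $\bx = (\bx_i, \bz)$ where $\bz$ collects the other $d-1$ coordinates, and condition on $\bz$. By the product structure $\bx_i$ is independent of $\bz$, and in \Cref{def:inf} the resampled bit $\bb \sim \mcD_X^{(i)}$ is simply an independent copy of $\bx_i$. The two facts I would extract from monotonicity are about the single-variable restriction $t \mapsto f(z, t)$ for a fixed value $z$ of $\bz$: it is non-decreasing on $\{-1,+1\}$, and since its codomain is the two-point set $\bits$ it is therefore \emph{either} constant \emph{or} the identity map $t \mapsto t$ (the case $f(z,-1) < f(z,+1)$ forces $f(z,-1) = -1$, $f(z,+1)=+1$). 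Let $A$ denote the event over $\bz$ that this restriction is non-constant.

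First I would compute $\Inf_i(f)$. Conditioning on $\bz = z$: if $z \notin A$ then $f(\bx) = f(\bx_{i = \bb})$ always; if $z \in A$ then $f(\bx) = \bx_i$ and $f(\bx_{i=\bb}) = \bb$, so $f(\bx) \neq f(\bx_{i=\bb})$ exactly when $\bx_i \neq \bb$. Hence $\Prx[f(\bx) \neq f(\bx_{i=\bb})] = \Pr[A]\cdot\Prx[\bx_i \neq \bb]$. Writing $p = \Pr[\bx_i = +1]$, a one-line computation gives $\Prx[\bx_i \neq \bb] = 2p(1-p) = \tfrac12\Var[\bx_i]$ (since $\Var[\bx_i] = 1 - (2p-1)^2 = 4p(1-p)$ for a $\bits$-valued coordinate). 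Multiplying by the factor $2$ in \Cref{def:inf} gives $\Inf_i(f) = \Pr[A]\cdot\Var[\bx_i]$.

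Next I would compute $\Cov_{\mcD_X}[f(\bx), \bx_i]$ via the law of total covariance, conditioning on $\bz$: the ``between-group'' term $\Cov_{\bz}(\Ex[f(\bx)\mid\bz], \Ex[\bx_i\mid\bz])$ vanishes because $\Ex[\bx_i \mid \bz] = \Ex[\bx_i]$ is constant by independence, so $\Cov_{\mcD_X}[f(\bx), \bx_i] = \Ex_{\bz}\big[\Cov[f(\bz, \bx_i), \bx_i \mid \bz]\big]$. For $z \notin A$ the inner covariance is $0$; for $z \in A$ we have $f(z, \bx_i) = \bx_i$, so the inner covariance equals $\Var[\bx_i]$. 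Therefore $\Cov_{\mcD_X}[f(\bx), \bx_i] = \Pr[A]\cdot\Var[\bx_i]$, which agrees with the expression for $\Inf_i(f)$, completing the proof.

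The only genuinely delicate points are bookkeeping: reconciling the factor $2$ in \Cref{def:inf} with the fact that $\Prx[\bx_i \neq \bb] = 2p(1-p)$ is exactly half of $\Var[\bx_i] = 4p(1-p)$ for a biased $\pm1$ coordinate, and observing that monotonicity of a $\bits$-valued (as opposed to real-valued) function is what forces each one-variable restriction to be constant or the identity. Everything else is the standard conditioning argument, so I do not anticipate a serious obstacle.
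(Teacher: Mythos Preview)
Your proof is correct, but it proceeds along a different axis than the paper's. You condition on the \emph{other} coordinates $\bz$, use that for each fixed $z$ the one-variable restriction is (by monotonicity and the Boolean codomain) either constant or the identity, and then invoke the law of total covariance to get both sides equal to $\Pr[A]\cdot\Var[\bx_i]$. The paper instead conditions on $\bx_i$: setting $p_i = \Pr[\bx_i = 1]$, $q_i = 1-p_i$, $\alpha = \E[f\mid \bx_i = 1]$, $\beta = \E[f\mid \bx_i = -1]$, it expands $\Cov[f(\bx),\bx_i] = \E[f(\bx)\bx_i] - \E[f(\bx)]\E[\bx_i]$ directly to $2p_iq_i(\alpha-\beta)$, and separately reduces $\Inf_i(f)$ to the same expression, using monotonicity only to drop an absolute value $|f(\bx_{i=1}) - f(\bx_{i=-1})| = f(\bx_{i=1}) - f(\bx_{i=-1})$. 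Your route is arguably cleaner conceptually (the ``sensitive set'' $A$ is explicit and the bookkeeping with the factor $2$ is transparent), while the paper's route is a bare-hands algebraic identity that avoids the law of total covariance and uses monotonicity in the weaker form $f(\bx_{i=1}) \ge f(\bx_{i=-1})$ rather than your constant-or-identity dichotomy (so it would adapt more directly if $f$ were real-valued). Both arguments are equally elementary; note that your $\Pr[A]\cdot\Var[\bx_i] = \Pr[A]\cdot 4p_iq_i$ agrees with the paper's $2p_iq_i(\alpha-\beta)$ because $\alpha-\beta = \E_{\bz}[f(\bz,1)-f(\bz,-1)] = 2\Pr[A]$.
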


\begin{proof}
    Let us denote $p_i  = \Pr_{\mcD_X^{(i)}}[\bx_i = 1]$ and $q_i  = 1 - p_i$. We can further define  $\alpha = \Ex_{\bx \sim \mcD_X} \left[ f(\bx) \mid \bx_i = 1\right]$ and $\beta = \Ex_{\bx \sim \mcD_X} \left[ f(\bx) \mid \bx_i = -1\right]$.  We note that $\Ex_{\mcD_X}[f(\bx) \bx_i] = p_i \alpha - q_i\beta$, and $\Ex_{\mcD_X}[ f(\bx)] = p_i\alpha + q_i\beta$, and $\Ex_{\mcD_X}[\bx_i] = p_i - q_i$.
    
    We first expand the definition of covariance:
    \begin{align*}
        \Cov_{\mcD_X}[f(\bx), \bx_i] 
            &= \Ex_{\bx \sim {\mcD_X}}[f(\bx) \bx_i] - \Ex_{\bx \sim {\mcD_X}}[f(\bx)]\Ex_{\bx \sim {\mcD_X}}[\bx_i] \tag{definition of covariance} \\
            &= p_i \alpha - q_i \beta - (p_i \alpha + q_i\beta)(p_i - q_i) \tag{shorthand}\\
            &= \alpha( p_i - p_i(p_i - q_i))  - \beta (q_i  + q_i(p_i - q_i)) \\
            &= \alpha p_i ( 1 - p_i + q_i)  - \beta q_i( 1 - q_i + p_i ) \\
            &= 2p_iq_i(\alpha  - \beta).  \tag{simplification}
            % &= 2p_i(1 - p_i)\paren{\Ex_{\bx \sim \mcD} \left[ f(\bx) \mid \bx_i = 1\right] - \Ex_{\bx \sim \mcD} \left[f(\bx) \mid x_{i}=-1 \right] } \\
    \end{align*}

    We finish by showing that $\Inf_i(f)$ is equal to this last line: 
    \begin{align*}
        \Inf_{i}(f) 
            &= 2 \cdot \Prx_{\substack{\bx \sim {\mcD_X} \\ \bb \sim \mcD_X^{(i)}}} \left[f(\bx) \neq f(\bx_{i=b})\right] \\
            &= \Ex_{\substack{\bx \sim {\mcD_X} \\ \bb \sim \mcD_X^{(i)}}} \left[\left| f(\bx) - f(\bx_{i=b}) \right|\right] \\
            &= \Pr_{\substack{\bx \sim {\mcD_X} \\ \bb \sim {\mcD_X^{(i)}}}}[\bb \ne \bx_i]\cdot \Ex_{\substack{\bx \sim {\mcD_X} \\ \bb \sim {\mcD_X^{(i)}}}} \left[\left| f(\bx) - f(\bx_{i=\bb}) \right|~|~\bb \ne \bx_i\right] \\
% 
            %
            % &= 2 p_i (1 - p_i)\Ex_{\bx \sim \mcD} \left[ f(\bx) - f(\bx_{i=-\bx_i}) \right]  \\
            &= 2 p_i (1 - p_i)\Ex_{\bx \sim {\mcD_X}} \left[ f(\bx_{i=1}) - f(\bx_{i=-1}) \right] \tag{$f$ is monotone} \\
            &= 2 p_i (1 - p_i) \paren{ \Ex_{\bx \sim {\mcD_X}} \left[ f(\bx) \mid \bx_i = 1\right] -  \Ex_{\bx \sim {\mcD_X}} \left[f(\bx) \mid \bx_{i}=-1 \right] } \\
            &= 2 p_i q_i (\alpha - \beta).  \qedhere
    \end{align*}
\end{proof}

\section{Local drop in $\G$ and covariance}\label{apx:delta_impurity}

\begin{lemma}[Local drop in $\G$ in terms of covariance]
Let $\mcE$ be any distribution over $\mathcal{X}\times \zo$ and  $h : \mcX \to \{0,1\}$ be a splitting function. Then:
\begin{equation}
    % \label{eq:delta-drop-16}
    \Delta_{\mcE}(h) \geq 16 \cdot \Cov_{\mcE}[h(\bx), \by]^2
\end{equation}
\end{lemma}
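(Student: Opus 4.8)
The plan is to prove the inequality by a direct computation, exploiting the fact that $\G(p) = 4p(1-p)$ is quadratic so that the local drop $\Delta_{\mcE}(h)$ admits a clean closed form. First I would fix notation for the two-way split induced by $h$: write $p_1 \coloneqq \Pr_{\mcE}[h(\bx)=1]$, $p_0 \coloneqq 1-p_1$, and $\mu_b \coloneqq \Ex_{\mcE}[\by \mid h(\bx)=b]$ for $b\in\zo$, together with $\mu \coloneqq \Ex_{\mcE}[\by]$. The law of total expectation gives the basic identity $\mu = p_0\mu_0 + p_1\mu_1$, which is the only structural fact we will need.

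Next I would substitute $\G(p)=4p-4p^2$ into \Cref{def:purity_gain}. The linear parts telescope using $\mu = p_0\mu_0+p_1\mu_1$, leaving
\begin{equation*}
\Delta_{\mcE}(h) \;=\; 4\big(p_0\mu_0^2 + p_1\mu_1^2 - \mu^2\big) \;=\; 4\,p_0 p_1 (\mu_0-\mu_1)^2,
\end{equation*}
where the last equality is the standard ``variance of a two-point distribution'' identity $p_0\mu_0^2+p_1\mu_1^2-(p_0\mu_0+p_1\mu_1)^2 = p_0p_1(\mu_0-\mu_1)^2$. In parallel I would compute the covariance: $\Cov_{\mcE}[h(\bx),\by] = \Ex[h\by] - \Ex[h]\Ex[\by] = p_1\mu_1 - p_1\mu = p_1(\mu_1-\mu) = p_0 p_1(\mu_1-\mu_0)$, again using $\mu = p_0\mu_0+p_1\mu_1$.

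Combining the two displays gives $\Delta_{\mcE}(h) = \tfrac{4}{p_0 p_1}\,\Cov_{\mcE}[h(\bx),\by]^2$ whenever $p_0p_1>0$ (and both sides vanish when $p_0p_1=0$, i.e.\ when $h$ is $\mcE$-a.s.\ constant). Since $p_0 p_1 \le \tfrac14$ for any $p_0+p_1 = 1$, this yields $\Delta_{\mcE}(h) \ge 16\,\Cov_{\mcE}[h(\bx),\by]^2$, as claimed. There is essentially no obstacle here beyond bookkeeping and the trivial degenerate case; the only place where some care is genuinely needed is the generalization to an arbitrary impurity function $\G$, where the linear terms no longer separate exactly. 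In that case one replaces the exact computation by concavity together with a second-order Taylor expansion of $\G$ around $\mu$: if $\G'' \le -\kappa$ on $(0,1)$ then $\Delta_{\mcE}(h) \ge \tfrac{\kappa}{2}\,p_0p_1(\mu_0-\mu_1)^2 = \tfrac{\kappa}{2p_0p_1}\Cov_{\mcE}[h(\bx),\by]^2 \ge 2\kappa\,\Cov_{\mcE}[h(\bx),\by]^2$, recovering the stated bound for $\G(p)=4p(1-p)$ (where $\kappa = 8$) — this is precisely the route of \Cref{remark:other-impurity}.
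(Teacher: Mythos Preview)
Your proof is correct and follows essentially the same route as the paper: both reduce to the identity $\Delta_{\mcE}(h) = 4p_0p_1(\mu_0-\mu_1)^2$ (the paper cites this as Equation~20 of \cite{KM96}, while you derive it directly for $\G(p)=4p(1-p)$), compute $\Cov_{\mcE}[h(\bx),\by] = p_0p_1(\mu_1-\mu_0)$, and finish with $p_0p_1 \le \tfrac14$. Your handling of the degenerate case and the extension to general $\G$ via a second-derivative bound also match the paper's \Cref{remark:other-impurity}.
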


\begin{proof}

Let $\tau = \E_\mcE[h(\bx)]$ and $\delta = \E_\mcE[\by ~|~h(\bx) = 1] - \E_\mcE[\by ~|~h(\bx) = 0]$. Equation 20 of~\cite{KM96} states that $\Delta_{\mcE}(h) \ge 4 \tau(1 - \tau)\delta^2$. Then to prove this lemma, it suffices to show that $16 \Cov_{\mcE}[h(\bx), \by]^2 \le 4 \tau(1 - \tau)\delta^2$. We expand the definition of covariance:
%Lemma 5 of~\cite{KM99} states that $\Delta_{\mcE}(h) \ge 16 (\tau(1 - \tau)\delta)^2$. It suffices to show that $\Cov_{\mcE}[h(\bx), \by] \le \tau(1 - \tau)\delta$ (and in fact they are exactly equal). We expand the definition of covariance: 
\begin{align*}
    &\Cov[h(\bx), \by] = \E[h(\bx)\by] - \E[h(\bx)]\E[\by] \\
    &\quad= \E[\by ~|~ h(\bx) = 1]\E[h(\bx)] - \E[h(\bx)]\E[\by] \\
    &\quad= \E[h(\bx)] (\E[\by ~|~ h(\bx) = 1] - \E[\by])  \\
    &\quad= \E[h(\bx)] (\E[\by ~|~ h(\bx) = 1] - \E[\by~|~h(\bx) = 1]\E[h(\bx)]  \\
     & \qquad   - \E[\by~|~h(\bx) = 0](1 - \E[h(\bx)])) \\
    &\quad= \E[h(\bx)](1 - \E[h(\bx)]) \\
    & \qquad \ (\E[\by ~|~ h(\bx) = 1] - \E[\by~|~h(\bx) = 0]) \\
    &\quad= \tau(1 - \tau)\delta. 
\end{align*}

The lemma follows from the fact that $\tau(1 - \tau) \le 1/4$. 
%\gray{Rough notes: it should be the case that $\Cov_\mcD[h(\bx), \by] = \Var_\mcD[\by](1 - 2\Pr_{\mcD_{\text{Bal}}}[h(\bx) \neq \by])$. Or more simply, $\Cov_\mcD[h(\bx), \by] = \tau(1 - \tau)\delta$ in KM's notation as in Eqn 10 so the result follows from their Lemma 5. I think so anyway, but we should definitely double check. } 
\end{proof}

%%%%%%%%%%%%%%%%%%%%%%%%%%%%%%%%%%%%%%%%%%%%%%%%%%%%%%%%%%%%%%%%%%%%%%%%%%%%%%%
%%%%%%%%%%%%%%%%%%%%%%%%%%%%%%%%%%%%%%%%%%%%%%%%%%%%%%%%%%%%%%%%%%%%%%%%%%%%%%%

\end{document}